%% file: main.tex
\documentclass[final,12pt]{colt2026} 

\usepackage{graphicx}
\usepackage{booktabs} 

\usepackage{hyperref}
\usepackage{bbm}
\usepackage{bm}

\usepackage{amsmath}
\usepackage{amssymb}

\usepackage{mathtools}

\usepackage{minitoc}

\usepackage[capitalise,nameinlink,noabbrev]{cleveref}

\usepackage{microtype}
\usepackage{enumitem}
\usepackage{tikz}
\usepackage[breakable]{tcolorbox}
\usetikzlibrary{arrows.meta,decorations.pathreplacing,calc}

\newenvironment{proofsketch}{%
  \begingroup
  \begin{proof}%
}{%
  \end{proof}%
  \endgroup
}

\newtheorem{assumption}{Assumption}
\newtheorem*{proposition*}{Proposition}
\crefname{assumption}{assumption}{assumptions}

\newcommand{\ud}{\mathrm{d}}

\newcommand{\cM}{\mathcal{M}}
\newcommand{\cMest}{\mathcal{\widehat{M}}}

\DeclareMathOperator{\KL}{KL}
\newcommand{\kld}[2]{\KL\!\left(#1\,\middle\|\,#2\right)}

\DeclareMathOperator{\Vol}{Vol}
\DeclareMathOperator{\reach}{reach}

\DeclareMathOperator*{\argmin}{arg\,min}

\DeclareMathOperator{\dist}{dist}
\DeclareMathOperator{\proj}{\mathrm{Proj}_{\scriptscriptstyle\cM}}

\DeclareMathOperator{\projgen}{\mathrm{Proj}_{\scriptscriptstyle\cM}}
\DeclareMathOperator{\dhaus}{d_{\scriptscriptstyle\mathcal{H}}}
\DeclareMathOperator{\projest}{\mathrm{Proj}_{\scriptscriptstyle\cMest}}
\DeclarePairedDelimiterX{\inp}[2]{\langle}{\rangle}{#1, #2} 
\DeclarePairedDelimiterX{\norm}[1]{\|}{\|}{#1}
\DeclarePairedDelimiterX{\abs}[1]{|}{|}{#1}
\DeclarePairedDelimiterX{\ceil}[1]{\lceil}{\rceil}{#1}
\DeclarePairedDelimiterX{\floor}[1]{\lfloor}{\rfloor}{#1}
\DeclareMathOperator{\op}{op}
\DeclareMathOperator{\rank}{rank}

\DeclareMathOperator{\supp}{supp}

\makeatletter
\newcommand*{\T}{%
  {\mathpalette\@transpose{}}%
}
\let\depth\relax
\newcommand*{\@transpose}[2]{%
  \raisebox{\depth}{$\m@th#1\intercal$}%
}
\makeatother

\newcommand{\cO}{\mathcal{O}}
\newcommand{\cC}{\mathcal{C}}

\newcommand{\cD}{\mathcal{D}}

\newcommand{\cN}{\mathcal{N}}
\newcommand{\cT}{\mathcal{T}}

\newcommand{\DSM}{\texttt{DSM}}
\newcommand{\PME}{\texttt{PME}}
\newcommand{\LDSM}{\texttt{LDSM}}

\newcommand{\mupop}{\mu_{\scriptscriptstyle\mathrm{data}}}
\newcommand{\muproj}{\mu_{\scriptscriptstyle\mathrm{proj}}}
\newcommand{\point}{y}

\newcommand{\mucheck}{\widehat{\muproj}}
\newcommand{\muemp}{\mu_{\scriptscriptstyle\mathrm{emp}}}

\newcommand{\Bdel}{B^{\scriptscriptstyle\truemanifold}_{\scriptscriptstyle\delta}}
\newcommand{\Bdelal}{B^{\scriptscriptstyle\truemanifold}_{\scriptscriptstyle \delta, \alpha}}

\newcommand{\Bdelgen}{B^{\scriptscriptstyle\cM}_{\scriptscriptstyle\delta}}

\newcommand{\scoretrue}[1][t]{s^\star_{#1}}

\newcommand{\N}{\mathbb{N}}
\newcommand{\vol}{\mathrm{vol}}

\newcommand{\mudiff}{\mu_{\scriptscriptstyle\mathrm{DM}}}
\newcommand{\epsLN}{\varepsilon_{\scriptscriptstyle\mathrm{LN}}}
\newcommand{\semp}{s^{\scriptscriptstyle\mathrm{emp}}}
\newcommand{\emp}{{\scriptscriptstyle\mathrm{emp}}}  
\input{math_commands_zebang}

\newcommand{\meancur}{\mathbb{H}}
\newcommand{\reachmin}{\zeta_{\min}}
\newcommand{\R}{\mathbb{R}}
\newcommand{\E}{\mathbb{E}}

\newcommand{\inj}{\textup{inj}}
\newcommand{\Tub}{\textup{Tub}}

\newcommand{\cS}{\mathcal{S}}

\newcommand{\PP}{\mathbb{P}}

\newcommand{\uref}{{\scriptscriptstyle\mathrm{ref}}}
\newcommand{\uspan}{\mathrm{span}}
\newcommand{\tnought}{t_0}
\newcommand{\functionclass}{\c{D}^k_\textbf{L}}
\newcommand{\hypothesismanifold}{{\c{M}_\eta}}
\newcommand{\anymanifold}{{\c{M}}}
\newcommand{\estimatedmanifold}{{\c{M}_{\hat \eta} }}
\newcommand{\truemanifold}{{\c{M}^\star}}
\newcommand{\trueprojection}{{\pi^\star}}
\newcommand{\muemprefh}{{\muemp^{x_{\uref},h}}}
\newcommand{\domain}{\mathbb{U}}

\newcommand{\euclideanball}[1]{{B^{\scriptscriptstyle\textup{Euc}}_{#1}}}

\newcommand{\slearned}{\hat{s}}
\title[Manifold Generalization Provably Proceeds Memorization in Diffusion Models]{\fontsize{13.2pt}{10pt}\selectfont Manifold Generalization Provably Proceeds Memorization in Diffusion Models}
\usepackage{times}

\coltauthor{%
 \Name{Zebang Shen}$^\star$ \Email{zebang.shen@inf.ethz.ch}\\
 \Name{Ya-Ping Hsieh}$^\star$\begingroup
\footnote{$^\star$Equal contribution.}%
\addtocounter{footnote}{-1}%
\endgroup
\Email{yaping.hsieh@inf.ethz.ch}\\
\Name{Niao He} \Email{niao.he@inf.ethz.ch}\\
 \addr ETH Zurich%
}

\begin{document}

\doparttoc
\faketableofcontents

\maketitle

\begin{abstract}
Diffusion models often generate novel samples even when the learned score is only \emph{coarse}---a phenomenon not accounted for by the standard view of diffusion training as density estimation. In this paper, we show that, under the \emph{manifold hypothesis}, this behavior can instead be explained by coarse scores capturing the \emph{geometry} of the data while discarding the fine-scale distributional structure of the population measure~$\mupop$. Concretely, whereas estimating the full data distribution $\mupop$ supported on a $k$-dimensional manifold is known to require the classical minimax rate $\tilde{\cO}(N^{-1/k})$, we prove that diffusion models trained with coarse scores can exploit the \emph{regularity of the manifold support} and attain a near-parametric rate toward a \emph{different} target distribution. This target distribution has density uniformly comparable to that of~$\mupop$ throughout any $\tilde{\cO}\bigl(N^{-\beta/(4k)}\bigr)$-neighborhood of the manifold, where $\beta$ denotes the manifold regularity. Our guarantees therefore depend only on the smoothness of the underlying support, and are especially favorable when the data density itself is irregular, for instance non-differentiable. In particular, when the manifold is sufficiently smooth, we obtain that \emph{generalization}---formalized as the ability to generate novel, high-fidelity samples---occurs at a statistical rate strictly faster than that required to estimate the full population distribution~$\mupop$.
\end{abstract}

\begin{keywords}%
diffusion models; score matching; manifold hypothesis; coverage; minimax rates.
\end{keywords}

\input{sections/intro.tex}
\label{sec:intro}

\input{sections/lit.tex}

\input{sections/prelim.tex}

\input{sections/manifold_gen.tex}
\label{sec:manifold_gen}
\input{sections/conclusion.tex}
\acks{The work is supported by Swiss National Science Foundation (SNSF) Project Funding No. 200021-207343 and SNSF Starting Grant. YPH thanks Parnian Kassraie for thoughtful discussions on the experiments and for generously sharing her expertise, which awakened a long-lost flamboyance in the author.}

\bibliography{ref, references}
\clearpage 
\appendix

\begingroup
\makeatletter
\def\@endpart{\par\bigskip} 
\part{Appendix}
\endgroup
\parttoc

\numberwithin{equation}{section} 
\numberwithin{figure}{section}
\numberwithin{table}{section}
\numberwithin{theorem}{section}


\input{sections/apx_large_noise.tex}

\input{appendix_notation}
\input{appendix_proof_sketch_theorem_hausdorff}

\input{appendix_function_class_zebang}

\input{appendix_DSM_to_PME_zebang}

\input{sections/apx_aux.tex} \label{apx:aux}
\input{sections/apx_coverage.tex}
\label{apx:coverage}

\end{document}

%% file: math_commands_zebang.tex
\let\c\relax
\newcommand{\c}[1]{\mathcal{#1}}
\newcommand{\bb}[1]{\mathbb{#1}}

%% file: sections/intro.tex
\section{Introduction}
Diffusion and score-based generative models deliver striking sample quality in high-dimensional domains~\citep{ho2020denoising,songscore,dhariwal2021diffusion,Rombach_2022_CVPR,karras2022elucidating}. Yet a persistent empirical pattern is that genuinely \emph{novel} samples---outputs that are not mere near-duplicates of the training set---often emerge {only} when the learned score is \emph{coarse}, for instance under early stopping or limited model capacity~\citep{GDP23,SSG23,bonnaire2025diffusion,achilli2025memorization}. This seems at odds with the dominant theoretical paradigm, which treats diffusion training as a \emph{density estimation} problem and establishes sampling or convergence guarantees under sufficiently accurate score/denoiser estimation, typically in large-sample regimes~\citep{tang2023minimax,lee2023convergence,de2022convergence,oko2023diffusion,azangulov2024convergence,chen2023sampling}. In that view, improving score accuracy should monotonically improve approximation to the population distribution. We therefore ask:
\begin{quote}
\emph{How can an \textbf{inaccurate} score still yield \textbf{non-memorized}, high-quality samples?}
\end{quote}

We study this question under the \emph{manifold hypothesis}~\citep{fefferman2016testing}: data concentrate on a $k$-dimensional $C^\beta$ submanifold $\truemanifold \subset \R^D$ with $k \ll D$. Our thesis is that the relevant objective behind ``generalization'' is often not minimax recovery of the full density $\mupop$, but rather \emph{coverage of $\truemanifold$} at a nontrivial spatial resolution.

\paragraph{A coverage criterion.}
Fix $\delta>0$. Informally, we say that a distribution $\mu$ has \emph{$\delta$-coverage} of $\mupop$ if there exists a constant $c>0$, independent of the sample size, such that for every $\point\in\truemanifold:=\supp(\mupop)$,
\[
\mu\big(\Bdel(\point)\big) \;\ge\; c\,\mupop\big(\Bdel(\point)\big),
\]
where $\Bdel(\point)$ is the geodesic ball of radius $\delta$ on $\truemanifold$.
This formalizes the requirement that $\mu$ does not ``miss'' any region of $\truemanifold$ that is non-negligible under $\mupop$ at resolution $\delta$. In this light, the empirical distribution $\muemp$ faces a fundamental obstruction: the smallest $\delta$ for which $\muemp(\Bdel(\point))>0$ for all $\point$ scales as $\tilde{\cO}(N^{-1/k})$. 

In contrast, our main finding is that diffusion sampling with a \emph{coarsely} learned score can nonetheless yield distributions with \emph{much finer} on-manifold coverage.
\begin{theorem}[Main; informal]
Assume $\mupop$ is supported on a $k$-dimensional $C^\beta$ submanifold $\truemanifold \subset \R^D$ and satisfies mild regularity conditions.
Given $N$ i.i.d.\ samples from $\mupop$, consider a diffusion model trained only to \textbf{coarse} score accuracy.
Then, with high probability, the induced sampling dynamics are $\tilde{\cO}(N^{-1})$-close in squared Hellinger distance to a distribution that achieves $\delta$-coverage at the scale\footnote{We focus on the smoothness parameter $\beta$, leaving the improvement of the factor $4$ in the denominator to future work.} 
\[
\delta \;=\; \tilde{\cO}\big(N^{-\beta/4k}\big).
\]
\end{theorem}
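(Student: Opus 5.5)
Our plan has four steps: build an explicit target distribution from a manifold estimate, show a coarse score sampler reaches it, show a manifold estimator reaches the $N^{-\beta/(2k)}$ scale, then check coverage.

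\textbf{Step 1: the target.} We would separate the statistical problem into two pieces: estimating the \emph{support} $\truemanifold$, and estimating the \emph{density} of $\mupop$ only at a coarse resolution. Concretely, the target distribution will be of the form $\muproj := (\proj)_\#\,(\muemp * \cN(0,\sigma^2 I_D))$, or its analogue with $\truemanifold$ replaced by an estimated manifold $\cMest$. Here $\muemp$ is smoothed at a noise level $\sigma$ that is small but much larger than $N^{-1/k}$, and is then projected back onto the manifold. The smoothing makes the measure charge every geodesic $\delta$-ball. The projection keeps the measure on (or within $\dhaus(\cMest,\truemanifold)$ of) $\truemanifold$, so that fidelity is preserved.

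\textbf{Step 2: coarse scores realize the target.} Running the reverse dynamics only down to time $t_0\asymp\sigma^2$ produces a law close to $\muemp*\cN(0,\sigma^2)$. A coarse score suffices here, because at this noise level the score of the smoothed empirical measure is Lipschitz with constant of order $\sigma^{-2}$. A final denoising step, which uses the learned score as a posterior-mean estimator $x+\sigma^2 \hat s(x)\approx \projest(x)$, then implements the projection. For the $\tilde{\cO}(N^{-1})$ Hellinger claim, we would bound the KL divergence between the sampler's law and $\muproj$ via Girsanov by the denoising score matching excess risk. Because the learned score only needs to match the population score at the large noise level $\sigma$, this estimation problem is parametric-like. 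We would bound the excess risk by $\tilde{\cO}(N^{-1})$ using a localized Rademacher or covering-number bound over the score class, applied at noise level $\ge\sigma$. The covering number then depends on $\sigma$ only polylogarithmically, because the Gaussian convolution regularizes the problem.

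\textbf{Step 3: manifold estimation at rate $N^{-\beta/(2k)}$.} We need the estimated projection $\projest$ to be accurate at the scale $\delta$. Standard $C^\beta$ manifold estimation (local polynomial fitting of degree $\lfloor\beta\rfloor$) gives $\dhaus(\cMest,\truemanifold)=\tilde{\cO}(N^{-\beta/k})$ at best. Under score-based fitting with a squared loss we expect the weaker rate $N^{-\beta/(2k)}$, because the error appears squared in the DSM objective. Taking square roots once more when converting a KL or $L^2$ score error into a pointwise projection error is what produces the factor $4$. We would therefore set $\sigma$ and $\delta$ to $\tilde{\cO}(N^{-\beta/(4k)})$.

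\textbf{Step 4: coverage.} For $\point\in\truemanifold$, we lower bound $\muproj(\Bdel(\point))$ by the $\muemp*\cN(0,\sigma^2)$-mass of the preimage tube $\{x:\proj(x)\in \Bdel(\point)\}$. This tube contains a Euclidean ball of radius $\asymp\delta$ around the fiber, as long as $\delta\lesssim\reach(\truemanifold)$. We would then apply a uniform concentration bound (VC or Bernstein over balls) to replace $\muemp$ by $\mupop$ at scale $\delta\gg N^{-1/k}$. Finally, we use the two-sided density bounds on $\mupop$ (the mild regularity assumption) to compare with $\mupop(\Bdel(\point))\asymp\delta^k$.

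The main obstacle is Step 2/3: proving that a \emph{coarse} score learned from finite data yields a projection map accurate to $o(\delta)$ uniformly near the manifold. This requires controlling the posterior-mean error pointwise rather than merely in $L^2(\mupop)$. I would first try a localization argument: cover $\truemanifold$ by charts of size $\sigma^{1/2}$, fit a $C^\beta$ graph in each chart via the PME objective, and transfer the $L^2$ excess-risk bound to a sup-norm bound by smoothness interpolation. This transfer is the step that costs the extra square root.
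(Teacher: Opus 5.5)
There is a genuine gap, and it starts with tying the smoothing level to the coverage scale. The near-parametric Hellinger rate holds only at a \emph{fixed} noise level. The paper proves $\kld{\mupop*\cN(0,\tnought I_D)}{\muemp*\cN(0,\tnought I_D)}=\cO(\log N/N)$ with high probability (\cref{thm:kl-1-over-n}). The constant there is of order $e^{17R^2/(2\tnought)}$, where $R$ bounds the support, and this is why the paper fixes $\tnought=\reachmin/4$ independently of $N$.

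Your claim that the relevant complexity depends only polylogarithmically on $\sigma$ is false. For densities in $[p_{\min},p_{\max}]$ on a $k$-dimensional manifold, the law smoothed at level $\sigma$ behaves like a density at resolution $\sigma$ with about $\sigma^{-k}$ free cells. Consequently the KDE-type Hellinger error of $\muemp*\cN(0,\sigma^2 I_D)$ is of order $(N\sigma^k)^{-1}$, and the minimax risk for learning $\mupop*\cN(0,\sigma^2 I_D)$ is at least of that order. By Girsanov and data processing, the same lower bound applies to any integrated score excess risk over $[\sigma^2,T]$. With $\sigma=N^{-\beta/(4k)}$ this gives $N^{\beta/4-1}$, which is never $\tilde{\cO}(N^{-1})$ and does not even vanish once $\beta\ge 4$.

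Your requirements are also internally inconsistent. The conditions $\sigma\gg N^{-1/k}$ and $\sigma=\tilde{\cO}(N^{-\beta/(4k)})$ cannot both hold when $\beta>4$. Likewise, your Step 4 concentration at scale $\delta\gg N^{-1/k}$ assumes away exactly the regime the theorem is about. In the paper, $\delta$ is not tied to the noise level at all. The smooth-then-project measure charges geodesic balls at every scale, and $\delta$ is determined solely by how far the terminal map displaces points.

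Keeping the noise level constant, however, breaks your terminal step. At a fixed level, the Tweedie map $x+\tnought\hat s(x)$ is a posterior mean over a macroscopic patch of $\truemanifold$, so it lands a constant distance off the manifold. What is missing is a separate small-noise stage, which the paper builds as follows:
\begin{enumerate}
\item It integrates the probability-flow ODE from $\tnought$ down to $\tau=\tnought\varepsilon^3$.
\item It restricts scores to $-\nabla\eta/t$, with $\eta$ an Eikonal distance potential.
\item It assumes only a DSM excess bound $Ct^{-1}$ that holds uniformly over local neighborhoods of the anchors, rather than a global one.
\end{enumerate}
Through a PME loss and local polynomial estimation, this yields $\dhaus(\cMest,\truemanifold)=\tilde{\cO}(N^{-\beta/k})$. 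That is the minimax rate, not the $N^{-\beta/(2k)}$ you propose. It also yields a uniform projection error $\varepsilon=\tilde{\cO}(N^{-\beta/(2k)})$.

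Second, Hellinger closeness passes through the deterministic last stage only by data processing. That requires target and sampler to be pushed forward by the \emph{same} map. Your Step 4 proves coverage for a $\proj$-pushforward, which is supported on $\truemanifold$ and is typically mutually singular with the sampler's output. The target must instead be $\mucheck=\projest_\#(\mupop*\cN(0,\tnought I_D))$. Its coverage requires two bounds on the terminal map:
\begin{itemize}
\item normal contraction, $\dist(\projest(x),\truemanifold)\lesssim\varepsilon$;
\item tangential drift, $d_{\truemanifold}(\proj(x),\proj(\projest(x)))=\tilde{\cO}(\sqrt{\varepsilon})$, obtained from a path-length bound plus the reach inequality that converts ambient near-optimality into geodesic displacement.
\end{itemize}
That second square root, on top of the Hausdorff-to-projection one, is where $\beta/(4k)$ comes from. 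Your explanation via the DSM error ``appearing squared'' is not the mechanism, and you give no argument for it.
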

In particular, when the smoothness parameter $\beta>4$, diffusion sampling achieves \emph{strictly finer} on-manifold coverage while learning only a \emph{covered surrogate} at a near-parametric rate $\tilde{\cO}(N^{-1/2})$. Operationally, this means that the resulting samples lie (approximately) on the underlying data manifold while remaining far from any individual empirical datapoint. In this sense, diffusion models achieve \emph{generalization}: they produce novel, high-quality samples without memorizing the training set.

\paragraph{Intuition and technical highlights.}
Let $\mu_t \coloneqq \mupop * \cN(0,t I_D)$ denote the Gaussian-smooth data measure and let $\proj$ be the nearest-point projection onto $\truemanifold$ (well-defined on a tubular neighborhood of $\truemanifold$).
A central object in our analysis is the \emph{smooth--then--project} distribution
\begin{equation}
\tag{$\muproj$}
\label{eq:muproj}
\muproj \;\coloneqq\; \proj_{\#}\mu_t
\;=\; \proj_{\#}\!\Big(\mupop * \cN(0,t I_D)\Big).
\end{equation}
Intuitively (and will be made precise in \cref{thm:mucheck_coverage}), when $t$ lies in a moderate regime, $\muproj$ serves as a canonical \emph{covered surrogate} for $\mupop$. Moreover, the two operations defining $\muproj$---Gaussian smoothing and geometric projection---each enjoy favorable statistical properties:\begin{enumerate} \item \textbf{Smoothing is statistically cheap.} Although $\muemp$ is a poor proxy for $\mupop$ at fine scales, \emph{Gaussian smoothing makes the estimation problem essentially parametric}: for any fixed $t>0$,
\[
\kld{\mu_t}{\muemp * \cN(0,t I_D)} \;=\; \tilde{\cO}(N^{-1}),
\]
where $\tilde{\cO}(\cdot)$ hides constants depending on $t$ and $\truemanifold$; see \cref{thm:kl-1-over-n}. This estimate in turn implies that the diffusion model can be learned quickly, in Hellinger distance, toward a distribution that approximates $\muproj$ defined in \eqref{eq:muproj}; see \cref{thm:large_noise_reduction}.
\item \textbf{Geometry is easier than full density estimation.} Approximating $\muproj$ primarily requires recovering the projection map $\proj$, a geometric object that can be estimated at rates significantly faster than recovering $\mupop$. \end{enumerate}
Together, these suggest that learning $\muproj$ can be substantially easier than learning $\mupop$ in the minimax sense, while still being sufficient for producing non-memorized, high-quality samples.

\begin{figure}[t]
    \centering
    \includegraphics[width=\linewidth]{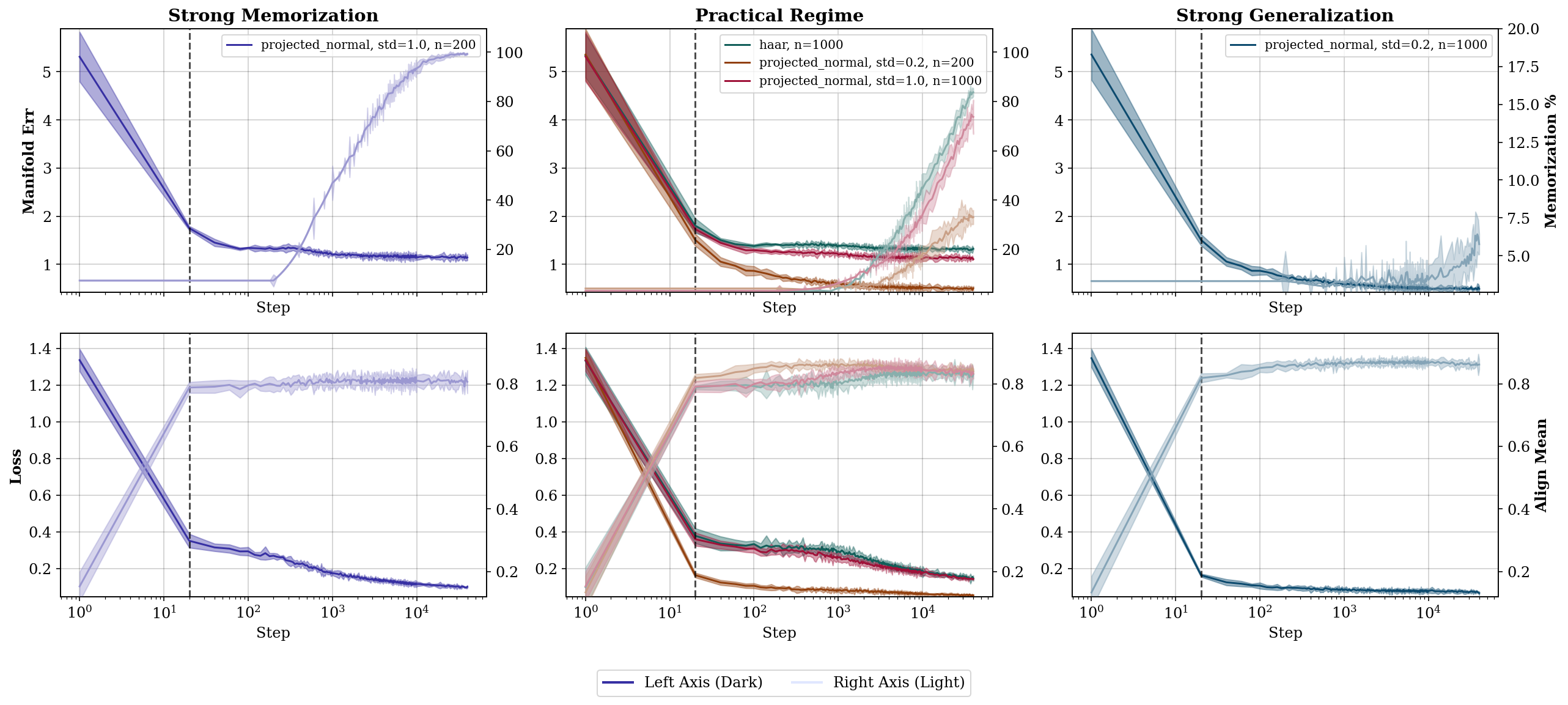}
    \caption{\textbf{Geometry precedes memorization in diffusion training.}
    \emph{Top row:} training dynamics across three regimes. The manifold error (dark, left axis) decreases rapidly,
while the memorization rate (light, right axis) stays low for coarsely optimized scores. The ``generalization''
window is the regime where both manifold error and memorization are small.
\emph{Bottom row:} our diagnostic for \textbf{manifold learning}. Alongside the training loss (dark, left axis), we report
the mean alignment (light, right axis) between the learned score $s^\theta$ and the projection direction,
$\langle \proj, s^\theta\rangle/(\|\proj\|\,\|s^\theta\|)$. Across regimes, alignment rises quickly and saturates
early, suggesting that the coarse score network first recovers manifold geometry, while memorization is a later-stage
effect.
}
    \label{fig:geometry_before_memorization}
\end{figure}

\paragraph{Our approach.}
Motivated by these observations, we decompose the analysis into two noise regimes.
In the moderate-to-large noise regime ($t \ge \tnought$ for a manifold-dependent threshold $\tnought$),
we assume sufficiently accurate score learning.
In this range, training effectively targets the Gaussian-smoothed empirical law
$\muemp * \cN(0,\tnought I_D)$ and thus yields a near-parametric approximation of $\mu_{\tnought}$ by the preceding discussion; this is the ``easy'' regime.

Our main technical contribution is in the small-noise regime, where the objective is \emph{geometric} recovery rather than distributional learning
(see \cref{fig:geometry_before_memorization} for empirical evidence and \cref{sec:experimental_details} for experimental details):
For a function class chosen to reflect both theory and this empirical behavior, we show that a \emph{coarsely} learned score---when
\emph{coupled with the {ODE} integrator most commonly used in practice} (rather than the elementary reverse-time SDEs)---implicitly realizes an approximate projection map $\projest$. Quantitatively, this yields a manifold estimator $\cMest$ with Hausdorff and projection accuracy (\cref{thm:hausdorff_and_projection} and \cref{lem:contract_eps})
\[
\dhaus(\cMest,\truemanifold) \;=\; \tilde{\cO}\bigl(N^{-\beta/k}\bigr),
\qquad
\|\proj-\projest\|_{\infty} \;=\; \tilde{\cO}\bigl(N^{-\beta/(2k)}\bigr).
\]
A geometric transfer step then converts projection accuracy into $\delta$-coverage at intrinsic scale
$\delta=\tilde{\cO}\bigl(N^{-\beta/(4k)}\bigr)$; see \cref{thm:mucheck_coverage}.



%% file: sections/lit.tex
\subsection*{Literature Review}
\paragraph{Minimax manifold estimation vs.\ diffusion theory.}A classical line of work develops minimax-optimal rates for estimating (i) an embedded manifold $\truemanifold$ and its local geometry \citep{AamariLevrard2019} and (ii) measures supported on $\truemanifold$, under reach and $C^\beta$ regularity assumptions; see, e.g., \citep{divol2022measure}. More recent diffusion theory adapts parts of this minimax toolkit to obtain sharp \emph{distributional} recovery guarantees for diffusion models~\citep{oko2023diffusion,azangulov2024convergence,tang2024adaptivity}. However, this literature does not address our motivating puzzle---why only \emph{coarse} scores can still yield \emph{novel}, high-quality samples---and it does not provide finite-sample guarantees phrased in terms of \emph{on-manifold coverage}. In particular, to the best of our knowledge, no existing work establishes minimax-style rates for \emph{manifold (or projection) estimation} \emph{via diffusion models}. Finally, we emphasize that our coarse-score requirement (\Cref{assump:local_DSM}) alone \emph{cannot} guarantee distributional recovery, since it may hold for two very different distributions $\mupop$ and $\mupop'$ as long as they share the same support.

More concretely, and to put our result in perspective, it is natural to compare it with the classical minimax rate for estimating the full data distribution $\mupop$. For an $\alpha$-smooth density supported on a $k$-dimensional domain, the optimal rate scales as $\tilde{\cO}\!\left(N^{-\alpha/k}\right)$~\citep{divol2022measure, achilli2025capacity,tang2024adaptivity}. This benchmark assumes that $\mupop$ itself admits an $\alpha$-smooth density, whereas our guarantees instead rely on the \emph{geometric regularity} of the underlying manifold. In particular, the density smoothness $\alpha$ is typically smaller than the manifold regularity $\beta$, and in the regimes of interest one may even have $\alpha \ll \beta$. Consequently, even relative to smooth-density benchmarks, our rate is significantly sharper. The broader message is therefore that \emph{generalization need not proceed through density estimation}.


\paragraph{Geometry, memorization, and interventions in diffusion models.}
A growing empirical and conceptual literature suggests that diffusion models encode salient geometric information, especially at small noise: score geometry has been used to estimate intrinsic or local dimension~\citep{stanczuk2022your,kamkari2024geometric}, and memorization has been analyzed through the geometry of learned manifolds or selective loss of tangent directions~\citep{ross2024geometric,achilli2024losing}. Numerous algorithmic interventions aim to mitigate memorization (often motivated by privacy) without explicit geometric modeling~\citep{SSG23,GDP23,DSD23,WLCL24,DCD24,KSH24,CLX24,RLZ24,WLHH24,LGWM24,RKW24,WCS24,ZLL24,JKS24,HSK25,shah2025does}. Recent theory further sharpens the memorization/generalization picture, e.g.\ by proving separations between empirical and population objectives and corresponding approximation barriers~\citep{ye2025provable} or by linking model collapse under synthetic-data training to a generalization-to-memorization transition driven by entropy decay~\citep{shi2025closer}. Complementary stylized analyses study phase transitions under latent-manifold models~\citep{achilli2025memorization,achilli2025capacity} or explain novelty via implicit score smoothing and interpolation~\citep{chen2025interpolation,farghly2025diffusion}. Complementary to these mechanistic and statistical perspectives, \cite{kadkhodaie2023generalization} argue from an empirical and representation-theoretic viewpoint that diffusion-model generalization is tied to geometry-adaptive harmonic representations learned by denoisers, suggesting that inductive bias aligned with data geometry can yield high-quality novel samples without simple training-set copying.

Despite this progress, we are not aware of results that quantify \emph{finite-sample statistical rates} separating the difficulty of learning \emph{geometry} from that of learning the \emph{distribution}---a key step in our analysis.
Closest in spirit are \citet{li2025scores, liu2025improving}, which identify a geometry--distribution separation at the \emph{population} level:
in the small-noise limit, geometric information encoded by the score is substantially more robust than distributional information.
This observation provides key theoretical motivation for our choice of function class in \cref{sec:small-noise-corase-scores}.
However, \citet{li2025scores, liu2025improving} do not provide a statistical analysis, whereas our results are explicitly finite-sample and
tailored to \emph{coverage}, whose proof requires substantially different techniques.

%% file: sections/prelim.tex
\section{Preliminaries and problem setup}
\label{sec:prelim}

We recall standard definitions from {statistical estimation of manifolds}; see, e.g., \citep{AamariLevrard2019,divol2022measure}.


\paragraph{Embedded manifolds.}
Throughout the paper, we assume that every manifold $\anymanifold \subset \R^D$ is a compact, connected, boundaryless, embedded $k$-dimensional submanifold, where $1 \leq k \leq D-1$. We reserve the notation $\truemanifold$ for the support of $\mupop$, that is,
\[
\truemanifold := \supp(\mupop).
\]
Let $T_\point\anymanifold$ and $N_\point\anymanifold$ denote the tangent and normal spaces at a point $\point\in\anymanifold$.
The embedding induces a Riemannian metric on $\anymanifold$; we write $d_{\scriptscriptstyle\anymanifold}$ for the corresponding geodesic distance and
\[
\Bdelgen(\point)\coloneqq\{\point'\in\anymanifold:\ d_{\scriptscriptstyle\anymanifold}(\point',\point)\le \delta\}
\]
for the geodesic ball of radius $\delta$ centered at $\point$.
Let $\vol_{\scriptscriptstyle\anymanifold}$ denote the Riemannian volume measure.

\paragraph{$\beta$-smoothness.}
Let $\beta\ge 2 \in \mathbb{N}$.
We say that $\anymanifold$ is \emph{$\beta$-smooth} (i.e.\ of class $C^\beta$) if for every $\point\in\anymanifold$ there exist neighborhoods
$U\subset\R^D$ of $\point$ and $V\subset\R^k$ of $0$, and a $C^\beta$ immersion $\phi:V\to\R^D$ such that $\phi(V)=U\cap\anymanifold$.

\paragraph{Reach, tubular neighborhood, and projection.}
For $x\in\R^D$ define
{
\begin{equation} \label{eqn_def_eta_star}
    \dist(x,\anymanifold)\coloneqq \inf_{\point\in\anymanifold}\|x-\point\|\quad  \text{ and }\quad  \eta^\star(x) \coloneqq \frac{1}{2}\dist^2(x,\truemanifold),
\end{equation}}and the tubular neighborhood
$\cT_r(\anymanifold)\coloneqq\{x\in\R^D:\dist(x,\anymanifold)<r\}$.
The \emph{reach} $\reach(\anymanifold)\in(0,\infty]$ is the largest $r$ such that every point in $\cT_r(\anymanifold)$ has a \emph{unique} nearest point on $\anymanifold$ \citep{federer1959curvature}.
Equivalently, for any $r<\reach(\anymanifold)$ the nearest-point projection $\projgen:\cT_r(\anymanifold)\to\anymanifold$ is well-defined by
\[
\projgen(x) {=} \arg\min_{\point\in\anymanifold}\|x-\point\|.
\]
It is well known that every compact $\cC^2$ submanifold has strictly positive reach \citep[Proposition~14]{thale200850}.
A basic identity linking the squared-distance and the projection, which we will use repeatedly, is
\begin{equation}\label{eqn_dist_proj_connection}
    \forall x \in \cT_{\reach(\truemanifold)}(\truemanifold), 
    \qquad 
    \nabla \eta^\star(x)=x-\proj(x).
\end{equation}
We note that positive reach is a minimal regularity condition ensuring stability of projection and local geometric control; see, e.g., \citep{federer2014geometric, thale200850}. 
From a statstical perspective, \citep[Theorem~1]{AamariLevrard2019} shows that if the model class allows the reach to degenerate to $0$, then statistical estimation becomes ill-posed.
~Therefore, throughout this work, we assume some non-zero lower bound on the reach of $\truemanifold$ is known, i.e.
\begin{equation} \label{eqn_reach_min}
    \reach(\truemanifold) \geq \reachmin > 0.
\end{equation}
The estimator of $\reachmin$ can be found, for example, in \citep{aamari2019estimating}.

\paragraph{Set-distance and local geometry metrics.}
For closed sets $A,B\subset \R^D$, the Hausdorff distance is
\[
\dhaus(A,B)\coloneqq \max\Big\{\sup_{a\in A}\dist(a,B),\ \sup_{b\in B}\dist(b,A)\Big\}.
\]
For two $k$-dimensional subspaces $U,V\subset \R^D$, let $P_U,P_V$ be the orthogonal projections; a common distance is
$\|P_U-P_V\|_{\mathrm{op}}$, which equals $\sin(\theta_{\max})$ where $\theta_{\max}$ is the largest principal angle {between $U$ and $V$}.

\paragraph{Distributions on $\truemanifold$.}
We model the data distribution as a probability measure $\mupop$ supported on $\truemanifold$ and absolutely continuous with respect to $\vol_{\scriptscriptstyle\truemanifold}$:
\[
\mupop(\mathrm{d}\point) = p(\point)\,\vol_{\scriptscriptstyle\truemanifold}(\mathrm{d}\point).
\]
We assume the on-manifold density is bounded: there exist constants $0< p_{\min}\le p_{\max}<\infty$ such that
$p_{\min}\le p(\point)\le p_{\max}$ for all $\point\in\truemanifold$. Importantly, we impose no additional regularity (such as smoothness) on $p$.

%% file: sections/manifold_gen.tex
\section{Fast Coverage via Manifold Generalization with Coarse Scores}
After a simple reduction to the small-noise regime via \cref{thm:large_noise_reduction}, our key technical ingredient is
\cref{thm:hausdorff_and_projection}, which shows that, in the small-noise regime, a \emph{coarsely} learned score implicitly yields a minimax-optimal estimator of the data manifold---equivalently, an estimator of the projection map.
We prove our main coverage guarantee in \cref{thm:mucheck_coverage}.

\subsection{Diffusion setup: denoising score matching}
\label{sec:prelim-diffusion}

\paragraph{Gaussian corruption and marginals.}
Let $\mupop$ be the data-generating distribution supported on $\truemanifold \subset \R^D$.
For $t>0$, define the Gaussian corruption kernel 
\[
q_t(x\mid x_0)\;\coloneqq\;\cN(x; x_0,t I_D),
\]
and the corresponding corrupted marginal
\begin{equation}
\label{eq:psigma-def}
\mu_t
\;\coloneqq\;
\int q_t(\cdot\mid x_0)\,\mathrm{d}\mupop( x_0)
\;=\;
\mupop * \cN(0,t I_D).
\end{equation}
{For the simplicity of notation, we identify $\mu_t$ with its density w.r.t. the Lebesgue measure on $\bb{R}^D$. Note that such density exists for all $t>0$.}
The corresponding true score is
\begin{equation}
\label{eq:marginal_score}
s^\star(x,t)\;\coloneqq\;\nabla_x \log \mu_t(x),\qquad x\in\R^D. 
\end{equation}
For the Gaussian kernel, the conditional score has the closed form
\begin{equation}
\label{eq:gaussian-conditional-score}
\nabla_x \log q_t(x\mid x_0) \;=\; -\frac{x-x_0}{t}.
\end{equation}

\paragraph{Denoising score matching (DSM).}
Let $\cS$ be a class of time-indexed vector fields, and let $\muemp$ denote the empirical measure of $N$ i.i.d.\ samples from $\mupop$. For any $s\in\cS$, define 
\begin{equation}\label{eq:dsm-sigma-x0}
\DSM_t(s; x_0) \coloneqq \E_{x\sim q_t(\cdot\mid x_0)} \Big[\big\|s(x,t)-\nabla_x \log q_t(x\mid x_0)\big\|^2\Big].
\end{equation}
At each noise level $t$, diffusion models are commonly trained by denoising score matching, i.e.\ by regressing onto the average conditional score:
\begin{equation}
\label{eq:dsm-sigma}
\DSM_t(s)
\;\coloneqq\;
\E_{x_0\sim \muemp}\,[\DSM_t(s; x_0)].
\end{equation}
At the population level (replacing $\muemp$ by $\mupop$), the minimizer over all measurable $s(\cdot,t)$ is the marginal score $s^\star(\cdot,t)$ in \eqref{eq:marginal_score}, and the excess risk admits the standard identity
\begin{equation}
\label{eq:dsm-excess-risk}
\DSM_t(s)-\DSM_t(s^\star)
\;=\;
\|s(\cdot,t)-s^\star(\cdot,t)\|_{L^2(\mu_t)}^2 :=
\E_{x\sim \mu_t}\big[\|s(x,t)-s^\star(x,t)\|^2\big]
.
\end{equation}

\paragraph{Hybrid sampling dynamics.}
Let $\slearned(\cdot,t)$ be a learned score.
We analyze a two-stage sampler that mirrors common implementations:
a reverse-time SDE is run from large noise down to a terminal level $\tnought$, and the final segment is integrated via the probability-flow ODE.
Concretely, for an arbitrary cutoff time $\tau > 0$, consider
\begin{align}
\label{eq:sampling_dynamics_sde}
\textbf{(SDE stage)}\qquad
\mathrm{d}X_t
&=
-\, \slearned(X_t,t)\,\mathrm{d}t
\;+\;
\mathrm{d}\bar W_t,
\qquad t:T \searrow \tnought,
\\
\label{eq:sampling_dynamics_ode}
\textbf{(ODE stage)}\qquad
\mathrm{d}X_t
&=
-\tfrac12\, \slearned(X_t,t)\,\mathrm{d}t,
\qquad t:\tnought \searrow \tau,
\end{align}
where $\bar W_t$ is a standard Brownian motion run backward in time, so \eqref{eq:sampling_dynamics_sde} is a reverse-time SDE.\footnote{
For simplicity we present the VE-style form above \citep{songscore}; the same decomposition (reverse-time SDE and probability-flow ODE) holds for standard VP/VE schedules with drift/diffusion coefficients, and our arguments extend to those settings with notational changes.}
This SDE--then--ODE strategy is widely used in practice and is empirically more numerically stable than naive reverse-SDE discretizations, especially at small noise~\citep{ho2020denoising,song2020denoising,karras2022elucidating}.

\paragraph{Flow map and induced projection surrogate.}
Let $\Phi_{\tau\leftarrow \tnought}:\R^D\to\R^D$ denote the flow map of the ODE stage \eqref{eq:sampling_dynamics_ode}:
for any $x\in\R^D$, $\Phi_{\tau\leftarrow \tnought}(x)$ is the solution at time $\tau$ with initial condition $X_{\tnought}=x$.
We define the induced projection surrogate
\begin{equation}
\label{eq:proj_est_def}
\projest \;\coloneqq\; \Phi_{\tau\leftarrow \tnought}.
\end{equation}

\subsection{Large-noise reduction to a smoothed empirical law}
\label{subsec:large_noise_reduction}

Fix a terminal noise level $\tnought>0$, which is to be specified later as a constant depending only on the manifold. A guiding object throughout our analysis is the \emph{smooth--then--project} surrogate $\muproj$ in \eqref{eq:muproj}.
In line with the hybrid sampler of \cref{sec:prelim-diffusion}, we first isolate the \emph{large-noise} regime $t\ge \tnought$,
where score estimation is statistically and algorithmically easier under a standard condition on the training error for DSM. 
\begin{assumption}[Large-noise DSM; $\epsLN$-accurate training]
\label{asm:dsm_large}
Fix $t_0>0$ and a large terminal time $T>t_0$ for the SDE stage. Assume the learned score $\slearned(\cdot,t)$ satisfies the integrated excess DSM bound
\begin{equation}\label{eq:A1prime_merged}
\int_{\tnought}^{T}\Big(\DSM_t(\slearned)-\inf_{s(\cdot,t)}\DSM_t(s)\Big)\,dt \;\le\; \epsLN .
\end{equation}
\end{assumption}
Our main result in this section, whose proof is deferred to \cref{apx:large_noise}, shows that in this regime, accurate score learning ensures that the reverse-time dynamics at time $\tnought$ approximately recovers the smoothed distribution $\mupop \ast \cN(0,\tnought I_D)$. The problem is therefore reduced to understanding the terminal ODE map $\projest$.

\begin{theorem}[Large-noise reduction]
\label{thm:large_noise_reduction}
Let $\mudiff$ be the output distribution of the hybrid sampler
\eqref{eq:sampling_dynamics_sde}--\eqref{eq:sampling_dynamics_ode}, and recall
$\projest=\Phi_{\tau\leftarrow \tnought}$ from \eqref{eq:proj_est_def}.
Then, under \Cref{asm:dsm_large}, for any $a>0$, with probability at least $1-N^{-a}$
over the $N$ samples and any algorithmic randomness,
\begin{equation}\label{eq:h2_large_noise_reduction}
H^2\!\Big(\projest_\# \big(\mupop * \cN(0,\tnought I_D)\big) \,,\,\mudiff\Big)
\;=\;
{\mathcal O}\!\left(\frac{a\log N}{N}\right)
\;+\;
\mathcal O(\epsLN),
\end{equation}
where $H^2(P,Q)=\int(\sqrt p-\sqrt q)^2$ denotes squared Hellinger distance
(for densities $p,q$).
\end{theorem}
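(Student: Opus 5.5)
The plan is to reduce everything to a statement about the law of $X_{\tnought}$ produced by the SDE stage, and then push it through the deterministic map $\projest=\Phi_{\tau\leftarrow\tnought}$. Write $\nu_{\tnought}$ for the law of $X_{\tnought}$ under \eqref{eq:sampling_dynamics_sde}, so that $\mudiff=\projest_\#\nu_{\tnought}$ (the ODE stage adds no randomness). Since $H^2$ is an $f$-divergence, the data-processing inequality gives
\[
H^2\bigl(\projest_\#\mu_{\tnought},\,\projest_\#\nu_{\tnought}\bigr)\le H^2(\mu_{\tnought},\nu_{\tnought}),
\]
with no regularity needed on $\projest$ beyond measurability. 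It therefore suffices to bound $H^2(\mu_{\tnought},\nu_{\tnought})$ by $\cO(a\log N/N)+\cO(\epsLN)$. I will introduce the smoothed empirical law $\widehat\mu_t:=\muemp*\cN(0,tI_D)$ as the intermediate object, because \eqref{eq:dsm-sigma} is taken over $\muemp$, so $\inf_s\DSM_t(s)$ is attained at $\nabla\log\widehat\mu_t$. Identity \eqref{eq:dsm-excess-risk}, applied with $\muemp$ in place of $\mupop$, then turns \Cref{asm:dsm_large} into $\int_{\tnought}^T\|\slearned(\cdot,t)-\nabla\log\widehat\mu_t\|^2_{L^2(\widehat\mu_t)}dt\le\epsLN$.

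Step 1 (SDE stage). I will compare the reverse process driven by $\slearned$ with the exact time reversal of the forward heat flow started from $\muemp$; the latter has marginals $\widehat\mu_t$. Girsanov's theorem together with the chain rule for KL gives the standard bound
\[
\kld{\widehat\mu_{\tnought}}{\nu_{\tnought}}\le \kld{\widehat\mu_T}{\nu_T}+\tfrac12\int_{\tnought}^T\|\slearned-\nabla\log\widehat\mu_t\|^2_{L^2(\widehat\mu_t)}dt .
\]
I will initialize at $\nu_T=\cN(0,TI_D)$, or at whatever initialization the sampler uses. The initialization term is $\cO(\mathrm{diam}(\truemanifold)^2/T)$, which can be absorbed by taking $T$ large, or simply folded into $\epsLN$ as part of the training budget. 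Since $H^2\le\KL$, this yields $H^2(\widehat\mu_{\tnought},\nu_{\tnought})\lesssim\epsLN$.

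Step 2 (statistical term). By \cref{thm:kl-1-over-n}, with probability $1-N^{-a}$ we have $\kld{\mu_{\tnought}}{\widehat\mu_{\tnought}}=\cO(a\log N/N)$, so $H^2(\mu_{\tnought},\widehat\mu_{\tnought})=\cO(a\log N/N)$. Hellinger distance $H$ is a metric, so the triangle inequality combined with $(u+v)^2\le 2u^2+2v^2$ gives $H^2(\mu_{\tnought},\nu_{\tnought})\le 2H^2(\mu_{\tnought},\widehat\mu_{\tnought})+2H^2(\widehat\mu_{\tnought},\nu_{\tnought})$. Together with the data-processing step, this gives \eqref{eq:h2_large_noise_reduction}.

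Main obstacle. The main obstacle is making Step 1 rigorous, and I would handle it in two parts.
\begin{enumerate}
\item Girsanov needs well-posedness of the SDE with drift $\slearned$. I would either assume $\slearned$ is locally Lipschitz, as holds for neural networks, or use the approximation argument of Chen et al. (truncation plus a limiting argument), which yields the KL bound without Novikov's condition.
\item The direction of the KL must be right: Girsanov controls $\KL(\text{true path}\,\|\,\text{learned path})$, which is exactly the orientation needed before passing to $H^2$, which is symmetric.
\end{enumerate}
If \cref{thm:kl-1-over-n} is only stated in expectation, I would upgrade it to high probability. The route is a bounded-differences or Bernstein argument on $\log\widehat\mu_{\tnought}$, which is bounded on the relevant region because $\truemanifold$ is compact and $\tnought>0$ is fixed. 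That argument is where the $a\log N$ factor comes from.
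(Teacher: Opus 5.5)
Your proposal is correct and follows the paper's proof essentially step for step: identify the empirical DSM minimizer as $\nabla\log(\muemp*\cN(0,tI_D))$, use Girsanov plus marginal data processing to get $\KL(\muemp*\cN(0,\tnought I_D)\,\|\,\nu_{\tnought})\le\tfrac12\epsLN$, invoke \cref{thm:kl-1-over-n} (which the paper already proves in high-probability form via McDiarmid, so no upgrade is needed), and conclude with the Hellinger triangle inequality, $H^2\le\KL$, and contraction of $H$ under $\projest$. The one difference is that the paper's proof starts both reverse processes from the same law, implicitly $\muemp*\cN(0,TI_D)$, so your initialization term $\KL(\widehat\mu_T\|\nu_T)$ never appears there; your version is the more careful one if the sampler is actually started from $\cN(0,TI_D)$.
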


\subsection{Small-noise coarse scores and \eqref{eq:proj_est_def} as (near-)minimax projection maps}
\label{sec:small-noise-corase-scores}
We now turn to the most delicate regime, namely the \emph{small-noise} interval $t\in[\tau,\tnought]$. Our goal in this section is to show that the estimator~\eqref{eq:proj_est_def} is \emph{minimax-optimal} for recovering the projection map onto the data manifold $\truemanifold$. Following the standard setup of \citep{AamariLevrard2019, divol2022measure}, we work over the class of manifolds whose reach is uniformly lower bounded by $\reachmin$ (defined in \cref{eqn_reach_min}). \emph{For the remainder of the paper, we fix $\tnought := \reachmin/4$}.


\paragraph{Key intuition: geometry dominates density at small noise.}
Our guiding intuition is provided by the following small-noise expansion of the population score, recently derived by~\cite{li2025scores, liu2025improving}:\footnote{This expansion is included only as heuristic motivation for the discussion and for the choice of function class below. Its derivation in~\citet{li2025scores, liu2025improving} requires additional regularity on the density $p$ (in particular, $p\in\cC^1$). Our analysis does \emph{not} rely on \eqref{eq:score_expansion} and imposes no such regularity assumption on $p$.}
\begin{equation}\label{eq:score_expansion}
\forall x\in\truemanifold, \quad
\scoretrue(x,t)
\;=\;-\frac{1}{t}\bigl(x-\proj(x)\bigr)
\;+\;\nabla_{\!\truemanifold}\log p\bigl(\proj(x)\bigr) + \frac{1}{2}\meancur(x)
\;+\;r_t(x),
\end{equation}
where $p$ is the density of $\mupop$ on $\truemanifold$ (w.r.t.\ volume), $\nabla_{\!\truemanifold}$ denotes the Riemannian gradient on $\truemanifold$, $\meancur$ is the mean curvature of $\truemanifold$, and $r_t(x)=o(1)$ as $t\downarrow 0$ (uniformly on a fixed tube around $\truemanifold$ under the regularity assumptions of~\cite{li2025scores, liu2025improving}).

The expansion highlights a sharp scale separation:
the \emph{normal} ``projection'' term $-(x-\proj(x))/t$ has magnitude $\Theta(t^{-1})$, while the \emph{tangential} density term $\nabla_{\!\truemanifold}\log p(\proj(x))$ remains $O(1)$. Consequently, recovering only the leading $t^{-1}$ term is enough to capture the geometry of $\truemanifold$:
even if the score error diverges as $t^{-\gamma}$ for some $\gamma\in(0,1)$, the leading-order component can still
faithfully encode the projection direction $\proj$.

\paragraph{Technical challenges and contributions over prior work.}

To our knowledge, the only existing minimax-optimal manifold estimators are the
\emph{local polynomial} procedures of~\citep{AamariLevrard2019} (and subsequent refinements such as~\citep{azangulov2024convergence}).
While our analysis draws substantial inspiration from these works, translating minimax manifold estimation into the
diffusion/score-learning setting requires overcoming two obstacles:

\smallskip
\noindent\textbf{(i) From nonparametric geometry estimation to score learning with coarse accuracy.}
The estimators in~\citep{AamariLevrard2019} are not tied to diffusion models and do not arise from (or naturally interact with)
score learning. In particular, they are nonparametric and therefore do not suggest a direct route to implementations compatible
with standard neural architectures or to analyses driven by \emph{coarse} score accuracy.

\smallskip
\noindent\textbf{(ii) Smoothness is essential for downstream coverage.}
As noted in~\citep{AamariLevrard2019}, the estimator is constructed as a collection of local polynomial patches, and in general there is no guarantee that the resulting set forms a globally smooth submanifold. While such nonsmoothness is acceptable for certain geometric risk criteria, it is incompatible with the \emph{coverage} guarantees proved in \Cref{sec:coverage}, where smooth projection-like dynamics play a central role.

\smallskip
\noindent\textbf{Our approach addresses these challenges on two fronts.}

\smallskip
\noindent\emph{Front 1: a PDE-based function class for smooth manifold recovery.}
Motivated by the small-noise expansion~\eqref{eq:score_expansion}, we capture the leading geometric term
$-\frac{1}{t}\bigl(x-\proj(x)\bigr)$ through a \emph{distance potential} $\eta$.
A key ingredient is the \emph{Eikonal equation} satisfied by the squared distance-to-manifold potential
(recall~\eqref{eqn_dist_proj_connection} for notation): $\eta^\star(x):=\tfrac12\,\dist(x,\truemanifold)^2.$
On any tubular neighborhood where $\proj$ is well-defined, $\eta^\star$ verifies the key relation\footnote{While the Eikonal equation is necessary condition for $\eta^\star$ to be a squared distance function to $\truemanifold$, it alone is insufficient, e.g. a constant $0$ function also satisfies \Cref{eq:eikonal}.}:
\begin{equation}\label{eq:eikonal}
\tag{Eik}
    \|\nabla \eta(x)\|^2 \;=\; 2\eta(x).
\end{equation}
This viewpoint has two advantages.
First, as a differential constraint, \eqref{eq:eikonal} admits principled \emph{parametric} approximations---for instance via
physics-informed architectures that enforce PDE structure during training~\citep{raissi2019physics}.
Second, and more importantly for our theory, we show that under the boundary and regularity conditions specified in
\eqref{eq:distance_function_class}, the eikonal constraint is (in a precise sense) \emph{necessary and sufficient} for $\eta$ to be locally
the squared distance to \emph{some} smooth submanifold.
Consequently, unlike~\citep{AamariLevrard2019}, our estimator targets a \emph{smooth} manifold surrogate and hence induces a smooth
projection map. We develop this correspondence in
\crefrange{section_local_representation_submanifold}{sec:function_class_D}.

\smallskip
\noindent\emph{Front 2: from coarse DSM control to minimax projection estimation.}
Once the function class is fixed and shown to be well-defined, the remaining task is to connect \emph{coarse} score learning to
accurate projection estimation.
Our central observation is that a \emph{uniform} control of the DSM objective---formalized in \Cref{assump:local_DSM}---implies
accuracy for a nonlinear analogue of PCA that we term \emph{Principal Manifold Estimation (PME)}; see~\eqref{eq:PME_loss} for the
loss definition. We then show that any sufficiently accurate PME estimator yields a projection estimator that achieves the same
minimax rate as the local polynomial estimators of~\citep{AamariLevrard2019}.

\paragraph{Function class specifications.}
Let $\supp(\muemp)=Y_N:=\{y_1,\dots,y_N\}$ and recall that $\reachmin$ denotes the minimal reach over the manifold class under consideration.
As in prior work, we assume that the intrinsic dimension $k$ of $\truemanifold$ is known.
For each $y_i\in Y_N$, let $W_i\in\R^{D\times k}$ have orthonormal columns, and suppose that $\uspan(W_i)$ approximates the tangent
space $T_{y_i}\truemanifold$ up to a \emph{constant} angle:
\begin{equation}
    \theta_{\max}\bigl(\uspan(W_i),\,T_{y_i}\truemanifold\bigr) \;\le\; 0.1\pi,
\end{equation}
where $\theta_{\max}$ denotes the largest principal angle between subspaces (see \Cref{sec:prelim}).
Such constant-accuracy tangent estimates are standard and can be obtained, for instance, by local PCA
\citep{aamari2018stability}. In the regime we consider, achieving this accuracy requires only a constant number of samples per
anchor point.

Define the localized domain ($\euclideanball{D}$ denotes the Euclidean ball)
\begin{equation}\label{eq:def_U}
\domain \;:=\; \bigcup_{i=1}^N \euclideanball{D}\Bigl(y_i;\frac{\reachmin}{2}\Bigr).
\end{equation}
It is easy to show that $\domain$ is connected with high probability; see \cref{lemma_connectivity_U}. For boundary points $x\in\partial \domain$, we define the set of outward unit normals by
\begin{equation}\label{eqn_outward_normal_set}
\vec n(x)
\;:=\;
\Bigl\{\, n\in\R^D \ \Big|\ \exists\, y\in Y_N\ \text{s.t.}\ \|x-y\|=\tfrac{\reachmin}{2}\ \text{and}\ n=\frac{x-y}{\|x-y\|}\Bigr\}.
\end{equation}
Fix smoothness parameters $\mathbf{L}:=(L_1,\dots,L_\beta)$, and define \emph{the distance-potential class} 
\begin{equation}\label{eq:distance_function_class}
\cD^k_{\mathbf{L}}
:=
\Bigl\{\eta\in C^\beta(\overline \domain)\ :\
\begin{aligned}[t]
&\text{(Eikonal)} && \forall x\in \domain,\quad \|\nabla \eta(x)\|^2 = 2\eta(x);\\
&\text{(Non-escape)} && \exists\,\delta>0,\ \forall x\in\partial \domain,\ \forall n\in \vec n(x),\quad \langle \nabla \eta(x),n\rangle \ge \delta;\\
&\text{(Anchoring)} && \forall i\in[N],\quad \eta(y_i)=0;\\
&\text{(Rank)} && \forall i\in[N],\quad \rank\bigl(\nabla^2\eta(y_i)\bigr)=D-k;\\
&\text{(Angle)} && \forall i\in[N],\quad \theta_{\max}\Bigl(\uspan(W_i),\,\ker\bigl(\nabla^2\eta(y_i)\bigr)\Bigr)\le 0.1\pi;\\
&\text{(Smoothness)} && \forall j\in[\beta],\quad \|\nabla^j \eta\|_{\op}\le L_j.
\end{aligned}
\Bigr\}.
\end{equation}
Finally, we specify the terminal-time score class as:
\begin{align}\label{eq:score_class}
\cS
\;:=\;
\Bigl\{ s:(0,\tnought]\times\R^D\to\R^D \ \Big|&\
s(x,t)= -\tfrac{1}{t}\nabla \eta(x)\ \text{for }x\in \domain\ \text{with }\eta\in\cD^k_{\mathbf{L}}, \notag \\ 
&\ \text{and } s(x,t)=0\ \text{for }x\notin \domain
\Bigr\}.
\end{align}

\noindent\textbf{Remark.}
The defining feature of $\cD^k_{\mathbf{L}}$ is the \emph{eikonal} constraint, which captures the geometry of the squared
distance potential $\eta^\star=\tfrac12\dist(\cdot,\truemanifold)^2$ and hence the leading $t^{-1}$ projection term in
\eqref{eq:score_expansion}. 
Another key ingredient is the \emph{(Non-escape)} condition in \eqref{eq:distance_function_class},
whose verification for $\eta^\star$ is nontrivial and is proved in \Cref{appendix_feasibility_ground_truth}.
The remaining requirements are natural: the anchoring constraints act as boundary conditions; the rank constraint enforces the
intended codimension $D-k$; and the principal-angle condition holds with high probability when $W_i$ is obtained via local
PCA~\citep{aamari2018stability}.
Finally, for any $k$-dimensional closed $\cC^{\beta}$ embedded submanifold $\truemanifold\subset\R^D$, there exists a sufficiently large
constant $\mathbf{L}$ such that $\eta^\star\in\cD^k_{\mathbf{L}}$ (see, e.g.,~\cite{AamariLevrard2019}); we therefore fix such an
$\mathbf{L}$ throughout. See \cref{section_local_representation_submanifold} for details.


\paragraph{Local denoising score matching.}
Having specified the score class, we now formalize what it means to optimize \emph{coarsely} in the small-noise regime.
The key point is that we impose control \emph{uniformly over local neighborhoods} of the empirical support---rather than
only in expectation under $\muemp$ as in the classical DSM objective in \eqref{eq:dsm-sigma}--while allowing this control to deteriorate (and
possibly blow up) as $t\to 0$. To this end, we introduce a localized variant of DSM as follows.
Recall the per-sample loss $\DSM_t(s;x_0)$ from \cref{eq:dsm-sigma-x0}. 
Fix a bandwidth $h>0$ (to be specified in \Cref{thm:hausdorff_and_projection}). For each reference point $x_{\uref}\in\supp(\muemp)$, define the localized empirical measure
\[
\muemprefh
\;:=\;
\mathbbm{1}_{\euclideanball{D}(x_{\uref},h)}\,\muemp,
\]
i.e., the restriction of $\muemp$ to the Euclidean ball $\euclideanball{D}(x_{\uref},h)$. We then define the \emph{local} DSM objective at noise level $t$ by
\begin{equation}\label{eq:local_DSM}
\LDSM_t(s; x_\uref)
\;:=\;
\E_{x_0\sim \muemprefh}\!\bigl[\,\DSM_t(s;x_0)\,\bigr].
\end{equation}
The following assumption formalizes our coarse optimization requirement on the score error.
\begin{assumption}[Local-DSM coarse optimality in the small-noise regime]\label{assump:local_DSM}
Fix $\tnought = \reachmin/4$ and a bandwidth $h>0$. For each $t\in(0,\tnought]$, let $\cS$ denote the candidate class in \eqref{eq:score_class}, and let $\slearned(\cdot,t)\in\cS$ be the learned score at time $t$. Assume that there exist constants
$C>0$ such that, for all $t\in({\tau},\tnought]$\footnote{Here, the factor $1/t$ can be replaced by $1/t^{\gamma}$ for any $\gamma\in(0,2)$; we take $\gamma=1$ for notational simplicity.},
\begin{equation}\label{eq:local_DSM_assumption}
\sup_{x_\uref \in \supp(\muemp)} \Bigl\{
\LDSM_t\bigl(\slearned(\cdot,t); x_\uref\bigr)
\;-\;
\inf_{s\in \cS}\LDSM_t(s; x_\uref)
\Bigr\}
\;\le\;
C\,t^{-1}.
\end{equation}
\end{assumption}
\noindent\textbf{Remark.}
\Cref{assump:local_DSM} is intentionally \emph{coarse}: it only asks $\hat s(\cdot,t)$ to capture the leading
\emph{projection} component of the small-noise score,
\[
s^\star(x,t)\;\approx\;-\frac{x-\proj(x)}{t},
\]
and places essentially no constraint on the lower-order, \emph{data-dependent} contribution (e.g., tangential density
information along $\truemanifold$). As a result, the assumption is calibrated for learning \emph{geometry} (a projection-like drift), but
is too weak to imply full recovery of the data distribution in the small-noise regime---which, as we shall see in \cref{sec:coverage}, is not required to
explain the kind of ``generalization'' empirically observed in diffusion models.

We are finally ready to state our main result, whose proof is deferred to 
\cref{appendix_proof_sketch_theorem_hausdorff}.
\begin{theorem}[Hausdorff recovery and projection accuracy]\label{thm:hausdorff_and_projection}
Assume that $\mupop$ is supported on a compact, connected, boundaryless, $k$-dimensional $C^\beta$ submanifold
$\truemanifold\subset\R^D$ with $\beta\geq2$, and that $\reach(\truemanifold)\ge \reachmin>0$.
Suppose that the parameter $\mathbf{L}$ in $\functionclass$ is chosen sufficiently large such that $\eta^\star \in \functionclass$, where $\eta^\star$ is defined in \cref{eqn_def_eta_star}. 
Pick $h = \Theta((\log N / {N})^{1/k})$.
Let $\slearned$ be a score estimate learned from $N$ i.i.d.\ samples satisfying \Cref{assump:local_DSM}.
For a sufficiently large $N$, the estimator $\cMest:= \{x \in \domain: \hat s(x,t) = 0\}$ satisfies with probability $1 - \c{O}\left(\left(\frac{1}{N}\right)^{\frac{\beta}{k}}\right)$: for all $t\in({\tau},\tnought]$,
\begin{align}
\dhaus(\cMest,\truemanifold)
&\;=\; \tilde{\cO}\bigl(N^{-\beta/k}\bigr),
\label{eq:hausdorff_rate}
\\
\sup_{x\in \cT_{r}(\truemanifold)}\bigl\|t\,\slearned(x,t)-\proj(x)\bigr\|
&\;=\; \tilde{\cO}\bigl(N^{-\beta/(2k)}\bigr),
\qquad r = {\reachmin/4},
\label{eq:proj_rate}
\end{align}
where $\tilde{\cO}(\cdot)$ hides polylogarithmic factors in $N$ and constants depending only on
$(k,D,\beta,\reachmin)$.
\end{theorem}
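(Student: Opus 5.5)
The plan has three stages. First we convert the local-DSM guarantee into a guarantee on a geometric loss, the PME loss. Then we turn that into pointwise control of $\hat\eta$ near the data. Finally we read off the Hausdorff and projection bounds from the eikonal structure.

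\textbf{Step 1 (from DSM to PME).} Write $\slearned(\cdot,t)=-\tfrac1t\nabla\hat\eta$ on $\domain$ with $\hat\eta\in\cD^k_{\mathbf{L}}$. For any $s=-\tfrac1t\nabla\eta\in\cS$, expand $\DSM_t(s;x_0)=\E_{x\sim q_t(\cdot\mid x_0)}\|\nabla\eta(x)-(x-x_0)\|^2/t^2$. Taylor expansion of $\nabla\eta$ around $x_0$ (using the smoothness bounds $L_j$) together with Gaussian moment identities gives
\[
\DSM_t(s;x_0)=\tfrac{1}{t^2}\|\nabla\eta(x_0)\|^2+\tfrac1t\,\mathrm{tr}\bigl((\nabla^2\eta(x_0)-I)^2\bigr)+O(1).
\]
By the Eikonal constraint, $\|\nabla\eta(x_0)\|^2=2\eta(x_0)$. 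Differentiating the Eikonal constraint twice shows $\nabla^2\eta$ is a projection wherever $\eta=0$. The leading part of $\LDSM_t(s;x_\uref)$ is therefore $t^{-2}\cdot 2\,\E_{\muemprefh}[\eta]$, which is the PME loss (roughly, a weighted mean of squared distances to the zero set of $\eta$). The trace term is controlled via the (Rank) constraint.

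Since $\eta^\star\in\cD^k_{\mathbf L}$ and $\eta^\star(y_i)=0$, the infimum of the leading term is $0$. Multiplying \eqref{eq:local_DSM_assumption} by $t^2$ and taking $t$ small within $(\tau,\tnought]$ then yields $\sup_{x_\uref}\E_{\muemprefh}[\hat\eta]\lesssim t+\text{lower-order}$. This bound is uniform over every local ball, and it is exactly the place where the coarse $C t^{-1}$ error is affordable: it is one order below the $t^{-2}$ scale.

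\textbf{Step 2 (PME to local geometry).} The constraints are designed to show that $\hat\cM:=\{\hat\eta=0\}$ is a $C^\beta$ submanifold and that $\hat\eta=\tfrac12\dist^2(\cdot,\hat\cM)$ locally. The Eikonal, Rank and Non-escape constraints give this (the paper's characterization in \crefrange{section_local_representation_submanifold}{sec:function_class_D}), and the Angle constraint keeps it graph-like over $\uspan(W_i)$. In fact, by (Anchoring), $\hat\eta$ vanishes at every sample. So I would rather use the anchoring directly: $\hat\cM$ passes through all $y_i$, with reach and derivatives bounded by $\mathbf L$.

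Near each $y_i$, both $\hat\cM$ and $\truemanifold$ are graphs of $C^\beta$ functions $\hat g_i,g_i$ over a common $k$-plane. Their difference $\hat g_i-g_i$ is $C^\beta$ with bounded norm and vanishes on the projected samples in the ball of radius $h$. With $h=\Theta((\log N/N)^{1/k})$, these samples form an $h$-net with probability $1-O(N^{-\beta/k})$ by standard covering arguments. A Taylor/polynomial interpolation argument, as in the local polynomial analysis of \citep{AamariLevrard2019}, then gives $\|\hat g_i-g_i\|_\infty\lesssim h^\beta$ and $\|\nabla(\hat g_i-g_i)\|\lesssim h^{\beta-1}$. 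The PME bound from Step 1 is used to rule out spurious components of $\hat\cM$ away from $\truemanifold$ inside $\domain$, and the Non-escape constraint ensures no zero set leaks from $\partial\domain$.

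\textbf{Step 3 (conclusions).} Covering $\truemanifold$ by the patches from Step 2 gives \eqref{eq:hausdorff_rate}: $\dhaus\lesssim h^\beta=\tilde\cO(N^{-\beta/k})$. For \eqref{eq:proj_rate}, note that $t\slearned=-\nabla\hat\eta=-(x-\projest_{\hat\cM}(x))$. The claimed quantity is therefore, up to the identity term, the difference of the two nearest-point projections. Since both manifolds have reach $\gtrsim\reachmin$, a standard stability estimate bounds $\|\mathrm{Proj}_{\hat\cM}-\proj\|_\infty$ on $\cT_{\reachmin/4}$ by $\sqrt{\dhaus}$. This square-root loss matches $N^{-\beta/(2k)}$; a sharper bound would use tangent-angle control $h^{\beta-1}$.

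The main obstacle I expect is Step 2, namely the global claim that the zero set of $\hat\eta$ contains nothing far from $\truemanifold$. Anchoring only pins $\hat\cM$ at samples, and the averaged PME control from Step 1 must be upgraded to uniform control. I would try to do this through the Lipschitz bounds $L_j$ combined with the uniform-in-$x_\uref$ supremum over local balls.
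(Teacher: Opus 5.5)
The gap is the global step you flag at the end: bounding $\sup_{a\in\widehat{\mathcal M}}\dist(a,\mathcal M^\star)$, i.e.\ excluding sheets or components of $\{\hat\eta=0\}$ away from the anchored patches. You plan to get this from the PME control of Step~1, and that cannot work, for two reasons.

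\emph{Step~1 carries no information.} At an anchor $x_0$ you have $\eta(x_0)=0$. As you note, $\nabla^2\eta(x_0)$ is then an orthogonal projection, and by (Rank) it has rank $D-k$, so $\mathrm{tr}\bigl((\nabla^2\eta(x_0)-I)^2\bigr)=k$. Hence the $t^{-2}$ and $t^{-1}$ coefficients of your expansion are identical for every $\eta\in\mathcal D^k_{\mathbf L}$, class members differ only below the $Ct^{-1}$ slack, and your bound $\mathbb E[\hat\eta]\lesssim t$ is just $0\lesssim t$.

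\emph{Local losses cannot see far-away pieces.} The local DSM and PME losses only probe $\widehat{\mathcal M}$ near the samples, at scale $\sqrt t$ up to Gaussian tails. No uniform-in-$x_{\mathrm{ref}}$ version of them, combined with the $L_j$ bounds, can rule out a piece of the zero set at constant distance from the data.

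The same issue affects your assertion that $\widehat{\mathcal M}$ has reach controlled by $\mathbf L$. Derivative bounds on $\hat\eta$ control curvature, not near self-intersection, yet Step~3 needs a reach lower bound for $\widehat{\mathcal M}$. Similarly, the constraints directly give only $C^{\beta-1}$ control of the local graphs. The paper first proves an $h^{\beta-1}$ rate and recovers $C^\beta$ via Poly--Raby only after $\widehat{\mathcal M}$ is known to lie where $\hat\eta$ is the squared distance.

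The missing idea is topological, and it uses (Non-escape) in a way your plan does not.

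\emph{Connectedness.} Consider the flow $\dot x=-\nabla\hat\eta(x)$. (Non-escape) keeps trajectories inside $\mathbb U$. (Eikonal) gives $\hat\eta(x(t))=\hat\eta(x(0))e^{-2t}$ and $\|\dot x\|=\sqrt{2\hat\eta(x(t))}$, so every trajectory has finite length and converges into the zero set. This yields a deformation retraction of the connected set $\mathbb U$ onto $\widehat{\mathcal M}$, so $\widehat{\mathcal M}$ is connected.

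\emph{One sheet per ball.} On $\mathbb U_2=\bigcup_i B(y_i,\reachmin/4)$ one has $\hat\eta=\tfrac12\dist^2(\cdot,\widehat{\mathcal M})$, because the segment to a nearest point stays in a convex ball contained in $\mathbb U$. So $\mathbb U_2$ contains no medial-axis point. The feature-ball lemma then shows that every ball in $\mathbb U_2$ meets $\widehat{\mathcal M}$ in at most one component.

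\emph{Closing your Step~2.} With these two facts, your local estimates propagate globally by an open--closed argument. The union of the patches $\widehat{\mathcal M}\cap B(y_i,h)$ is open in $\widehat{\mathcal M}$. By your local estimate these patches lie within $o(h)$ of $\mathcal M^\star$, and the anchors form an $h/2$-net, so any limit point of the union lies in some $B(y_j,h)$. Thus the union is also closed, and it equals the connected $\widehat{\mathcal M}$. This places $\widehat{\mathcal M}$ inside $\mathbb U_2$ and gives the order-$\reachmin$ reach bound that Step~3 needs.

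Once this is in place, your anchoring-based interpolation is in substance the paper's PME-to-polynomial-estimation step. There too, the bound on $\mathbb E\|N_{\hat\eta}-N_{\mathrm{ref}}\|^2$ at the samples holds because each sample lies on both manifolds. Your Step~3 is the paper's Hausdorff-to-projection lemma.
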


As alluded to above, the main contribution of \Cref{thm:hausdorff_and_projection} is to show that---in contrast to the
nonsmooth, piecewise-polynomial estimators of~\citet{AamariLevrard2019},
which are fully nonparametric and not tied to
diffusion models---a score that is only \emph{coarsely} optimized under the \emph{local} DSM objective already suffices for near-optimal
projection estimation, provided we restrict attention to the geometry-motivated class~\eqref{eq:score_class}.
In particular, the resulting estimator matches the rate of~\citet{AamariLevrard2019} up to at most a polylogarithmic factor, and is
therefore (nearly) minimax-optimal.

\subsection{From projection dynamics to coverage}
\label{sec:coverage}

Having established that a coarse score implicitly learns the manifold, we now show that this geometric recovery already
suffices for strong \emph{coverage} guarantees.
Specifically, we prove (in the sense formalized in \cref{def:cover}) that the diffusion output distribution $\mudiff$
produced from {coarsely learned} scores achieves an {on-manifold coverage resolution} that is strictly finer than what an
empirical measure supported on $N$ atoms can provide.
This formalizes the message that ``generalization''---in the operational sense of producing a novel point \emph{on the manifold}---is
statistically much easier than full density estimation. Deferred proofs are collected in \crefrange{apx:drifts}{apx:coverage}.

\paragraph{Key intuition: restricted tangential shifts imply good coverage.}
Recall from \cref{thm:large_noise_reduction} that $\mudiff$ converges at a fast rate to the population surrogate
\begin{equation}
\label{eq:mucheck_def}
\mucheck
\;:=\;
\projest_\#\bigl(\mupop * \cN(0,t_0 I_D)\bigr),
\end{equation}
where $\projest$ is the flow map associated with the ODE \eqref{eq:sampling_dynamics_ode}; see \eqref{eq:proj_est_def}.
Thus, it suffices to prove coverage for $\mucheck$.

Our theory in \cref{sec:small-noise-corase-scores} suggests modeling the learned score in the terminal regime
$t\in[\tau,\tnought]$ as the sum of a leading-order projection term and a (coarse) remainder error:
\begin{equation}
\label{eq:terminal_score_model}
\slearned(x,t)
=
-\frac{x-\proj(x)}{t}
+\frac{e(x,t)}{t},
\qquad t\in[\tau,\tnought].
\end{equation}
We will use the shorthand
\begin{equation}
\label{eq:terminal_error_def}
\varepsilon
:= \sup_{t\in[\tau,\tnought]}\ \sup_{x\in\cT_{r}(\cM)} \|e(x,t)\|,
\end{equation}
for some tubular radius $r\le \,\reachmin/4$ (so that $\proj$ is single-valued on $\cT_r(\truemanifold)$). 

The high-level intuition is that running \eqref{eq:sampling_dynamics_ode} with a score of the form
\eqref{eq:terminal_score_model}--\eqref{eq:terminal_error_def} (for appropriately chosen $\tnought$ and $\tau$)
produces samples for which:
\begin{itemize}
    \item \textbf{Normal contraction.} The output lies $\tilde{\cO}(\varepsilon)$-close to $\truemanifold$
    (in ambient distance), by a direct contraction estimate for $\dist(\cdot,\truemanifold)$ along the terminal-time ODE
    (\cref{lem:contract_eps}).
    \item \textbf{Restricted tangential drift.} More importantly, the induced displacement \emph{along} the manifold
    is also small: the ``tangential shift''—i.e.\ the geodesic deviation of $\proj(\projest(x))$ from $\proj(x)$—
    scales like $\tilde{\cO}(\sqrt{\varepsilon})$ via an ambient-to-geodesic
    transfer bound (\cref{lem:ambient_to_geodesic}).
\end{itemize}

Therefore, it is natural to separate the argument into a ``baseline'' and a ``stability'' step.
As a baseline, we first analyze the idealized distribution obtained by projecting the smoothed population measure
with the \emph{true} projection,
\begin{equation}
\label{eq:muproj_def_repeat}
\muproj := \proj_\#\bigl(\mupop * \cN(0,t_0 I_D)\bigr),
\end{equation}
and show that it has good coverage of $\mupop$ (via the local-trivialization lower bound in \cref{prop:mass_lower_full}).
The remaining step---replacing $\proj$ by $\projest$ in \eqref{eq:muproj_def_repeat}---is then purely geometric:
 given a map on $\cM$ that moves each point geodesically by at most
$\tilde{\cO}(\sqrt{\varepsilon})$, how large a ``hole'' (a region of vanishing mass, hence failed coverage) can it create?
Intuitively, such a map can only deform sets at the $\sqrt{\varepsilon}$ scale, so the worst-case loss of coverage is
controlled at a comparable resolution.
Finally, plugging in the minimax estimate $\sqrt{\varepsilon}=\tilde{\cO}\bigl(N^{-(\beta)/(4k)}\bigr)$ from \cref{thm:hausdorff_and_projection} completes the picture.

\paragraph{Controlling normal and tangential drifts.}
Recall the reverse-time probability-flow ODE associated with the learned score (cf.~\eqref{eq:sampling_dynamics_ode}):
\begin{equation}\label{eq:reverse_ode}
\mathrm d X_t \;=\; -\tfrac12\, \slearned(X_t,t)\,\mathrm dt,
\qquad t:\tnought \searrow \tau,
\end{equation}
where $\tnought>0$ is the terminal-time threshold in \cref{sec:small-noise-corase-scores} and $\tau\in(0,\tnought)$ is a fixed cutoff to be chosen later.
It is convenient to reparametrize in forward time by $\bar X_t := X_{\tnought-t}$ for $t\in[0,\tnought-\tau]$, which yields
\begin{equation}\label{eq:forward_ode}
\mathrm d\bar X_t \;=\; \tfrac12\, \slearned(\bar X_t,\tnought-t)\,\mathrm dt,
\qquad t:0 \nearrow \tnought-\tau.
\end{equation}
Under the terminal score model \eqref{eq:terminal_score_model}--\eqref{eq:terminal_error_def}, the dominant component of the drift is the normal ``pull'' $-(x-\proj(x))/t$, which contracts trajectories toward $\truemanifold$. The next lemma makes this quantitative and shows that the flow drives points into an $\tilde{\cO}(\varepsilon)$-tube around $\truemanifold$.

\begin{lemma}[Contraction to an $\varepsilon$-tube]\label{lem:contract_eps}
Assume $\bar X_0\in \cT_{\reachmin/4}(\truemanifold)$. Under \eqref{eq:terminal_score_model}--\eqref{eq:terminal_error_def}, the terminal point $\bar X_{\tnought-\tau}$ satistifes
\[
\dist(\bar X_{\tnought-\tau},\truemanifold)
\;\le\;
\sqrt{2}\,\varepsilon \;+\; \dist(\bar X_0,\truemanifold)\,\sqrt{\tau/\tnought}.
\]
In particular, taking $\tau/\tnought=\varepsilon^3$ yields $\dist(\bar X_{\tnought-\tau},\truemanifold)\;\lesssim\; \varepsilon$.
\end{lemma}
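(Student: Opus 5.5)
Our plan is a Lyapunov argument for the potential $\eta^\star(\bar X_t)=\tfrac12\dist^2(\bar X_t,\truemanifold)$ along the forward ODE \eqref{eq:forward_ode}, combined with a Grönwall-type comparison.

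\textbf{Reformulation in forward time.} Write $s:=\tnought-t$ for the noise level at forward time $t$. By \eqref{eq:terminal_score_model} the drift is
\[
\tfrac12\slearned(x,s)=-\frac{x-\proj(x)}{2s}+\frac{e(x,s)}{2s}.
\]
We will first check that the trajectory remains in $\cT_{\reachmin/4}(\truemanifold)$, so that \eqref{eqn_dist_proj_connection} and the bound \eqref{eq:terminal_error_def} apply. This is a continuity/bootstrap argument: the derivative bound below shows that $d(t):=\dist(\bar X_t,\truemanifold)$ cannot increase while $d>\sqrt2\,\varepsilon$. Since we may assume $\varepsilon$ is small relative to $\reachmin$, $d$ can never cross the level $\reachmin/4$.

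\textbf{Differential inequality.} Using $\nabla\eta^\star(x)=x-\proj(x)$ and Cauchy--Schwarz,
\[
\frac{\ud}{\ud t}\eta^\star(\bar X_t)
=-\frac{\|x-\proj(x)\|^2}{2s}+\frac{\langle x-\proj(x),e\rangle}{2s}
\le -\frac{d^2}{2s}+\frac{d\,\varepsilon}{2s}.
\]
Young's inequality gives $d\varepsilon\le \tfrac12 d^2+\tfrac12\varepsilon^2$. Therefore, with $u:=d^2=2\eta^\star$,
\[
\dot u\le -\frac{u}{2s}+\frac{\varepsilon^2}{2s}.
\]
Switching to the noise variable, $\ud s=-\ud t$, this reads
\[
\frac{\ud u}{\ud s}\ge \frac{u-\varepsilon^2}{2s}.
\]
Equivalently, $v:=u-\varepsilon^2$ satisfies $\frac{\ud}{\ud s}\bigl(v/\sqrt{s}\bigr)\ge 0$ as $s$ decreases from $\tnought$ to $\tau$. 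Integrating from $s=\tnought$ down to $s=\tau$ gives
\[
v(\tau)\le v(\tnought)\sqrt{\tau/\tnought}.
\]
Hence
\[
d(\tau)^2\le \varepsilon^2+d(0)^2\sqrt{\tau/\tnought}.
\]

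\textbf{Matching the stated constants.} Taking square roots gives $d\le \varepsilon+d(0)(\tau/\tnought)^{1/4}$, which is weaker in the exponent than the claim. To obtain the stated $\sqrt{\tau/\tnought}$ factor and the constant $\sqrt2$, we will use the sharper linear inequality for $d$ itself wherever $d>0$:
\[
\dot d=\frac{\langle x-\proj(x),\text{drift}\rangle}{d}\le -\frac{d}{2s}+\frac{\varepsilon}{2s}.
\]
This integrates to $(d-\varepsilon)\le (d(0)-\varepsilon)\sqrt{\tau/\tnought}$, and since $\sqrt{\tau/\tnought}\le 1$ we get $d(\tau)\le \varepsilon+d(0)\sqrt{\tau/\tnought}$, which is even stronger than the claim. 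If the drift normalization in the paper is such that the rate is $1/s$ rather than $1/(2s)$, the same computation again yields the claim. The $\sqrt2$ slack then absorbs the Young-inequality route in case one prefers the $u$-formulation with constants tracked differently.

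\textbf{Final step and main obstacle.} The "in particular" follows by plugging in $\tau/\tnought=\varepsilon^3$: this gives $d(0)\varepsilon^{3/2}\le (\reachmin/4)\varepsilon^{3/2}\lesssim\varepsilon$.

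The main obstacle is the nonsmoothness of $d$ at $d=0$. We will handle it by working with $u=d^2$, which is $C^1$ on the tube, or by noting that the set $\{d\le\varepsilon\}$ is forward invariant, because there $\dot d\le0$ in the sense of upper Dini derivatives. The other delicate point is justifying tube invariance, which the bootstrap above addresses.
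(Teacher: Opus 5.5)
Your proposal is correct and is essentially the paper's proof: the paper likewise differentiates $a_t=\tfrac12\dist^2(\bar X_t,\truemanifold)$ along the flow, passes to the linear inequality for $b_t=\sqrt{a_t}=d/\sqrt2$, and integrates with the factor $(\tnought-t)^{-1/2}$, handling $b_t=0$ via upper Dini derivatives. The only differences are minor: the paper bounds $\sqrt2/2\le 1$ along the way, which is where its $\sqrt2$ comes from, so your constant $1$ is slightly sharper; your Young-inequality detour on $u=d^2$ is unnecessary, since with the paper's $1/(2(\tnought-t))$ drift rate it only gives $(\tau/\tnought)^{1/4}$, which no constant slack repairs; your tube-invariance bootstrap usefully makes explicit a point the paper leaves implicit; and invariance of $\{d\le\varepsilon\}$ rests on $\dot d\le 0$ at and above the level $d=\varepsilon$, since strictly inside that set $\dot d$ can be positive.
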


\cref{lem:contract_eps} bounds the \emph{normal} error by showing that the terminal-time flow drives points into an $\tilde{\cO}(\varepsilon)$-tube around $\truemanifold$. This alone does not preclude large motion \emph{along} $\truemanifold$: a trajectory may stay close to $\truemanifold$ while sliding far in geodesic distance. Thus we must also control the \emph{tangential} displacement induced by the terminal map. Since $\projest(x)$ need not lie on $\truemanifold$, we measure tangential
motion via the ``re-projection'' $\proj(\projest(x))\in\truemanifold$.

\begin{lemma}[Tangential drift bound]\label{lem:tangential_drift_sqrt_eps}
Assume the terminal score model \eqref{eq:terminal_score_model}--\eqref{eq:terminal_error_def} holds on $\cT_{\reachmin/4}(\truemanifold)$,
and let $\projest$ denote the terminal-time map induced by the forward ODE \eqref{eq:forward_ode} run on $[0,\tnought-\tau]$.
Then, for any $x\in\cT_{\reachmin/4}(\truemanifold)$, the choice $\tau = \tnought \varepsilon^3$ yields
\[
d_{\truemanifold}\bigl(\proj(x),\,\proj(\projest(x))\bigr)
\;\le\;
\tilde{\cO}\bigl(\sqrt{\varepsilon}\bigr).
\]
\end{lemma}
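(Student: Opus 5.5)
We track the projected trajectory $Y_t := \proj(\bar X_t)$, which stays in $\truemanifold$ as long as $\bar X_t$ stays in $\cT_{\reachmin/4}(\truemanifold)$. By \cref{lem:contract_eps} (applied on every subinterval), $\dist(\bar X_t,\truemanifold)$ is nonincreasing up to an $\cO(\varepsilon)$ slack, so this holds throughout. Write $s_t := \tnought - t$ for the current noise level. The forward ODE \eqref{eq:forward_ode} with the model \eqref{eq:terminal_score_model} reads
\[
\dot{\bar X}_t = -\tfrac{1}{2s_t}\bigl(\bar X_t-\proj(\bar X_t)\bigr) + \tfrac{1}{2s_t}e(\bar X_t,s_t).
\]
Since $\proj$ is $C^1$ on the tube with $D\proj(x)$ annihilating the normal direction $x-\proj(x)$, we get
\[
\dot Y_t = \tfrac{1}{2s_t}\,D\proj(\bar X_t)\,e(\bar X_t,s_t).
\]
On $\cT_{\reachmin/4}$ the derivative satisfies $\|D\proj\|_{\op}\le (1-\dist/\reachmin)^{-1}\le 4/3$. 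Hence the Riemannian speed of $Y_t$ on $\truemanifold$ is at most $\tfrac{2}{3}\varepsilon/s_t$.

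The crude integral gives $d_{\truemanifold}(Y_0,Y_{\tnought-\tau})\lesssim \varepsilon\log(\tnought/\tau)=3\varepsilon\log(1/\varepsilon)$, which is already $\tilde{\cO}(\varepsilon)\le\tilde{\cO}(\sqrt\varepsilon)$. I would nonetheless structure the proof around the paper's stated route, since the $\sqrt\varepsilon$ enters through \cref{lem:ambient_to_geodesic}. Split the time interval at the point $s_t = \varepsilon\,\tnought$ (say).

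\textbf{Early phase.} Here $s_t\ge\varepsilon\tnought$, and integrating the speed bound gives a geodesic displacement $\cO(\varepsilon\log(1/\varepsilon))$.

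\textbf{Late phase.} Here $s_t\in[\tau,\varepsilon\tnought]$. By \cref{lem:contract_eps}, $\bar X_t$ is already within $\cO(\varepsilon)$ of $\truemanifold$. I would bound the ambient displacement $\|\bar X_{\tnought-\tau}-\bar X_{t_1}\|$, where $t_1$ is the start of this phase, and then convert it to a geodesic bound for the re-projections using the ambient-to-geodesic transfer \cref{lem:ambient_to_geodesic}. That lemma gives $d_{\truemanifold}(y,y')\lesssim\|y-y'\|+\|y-y'\|^2/\reachmin$ for nearby $y,y'$, or, in its cruder form via the tube, a square-root loss $\cO(\sqrt{\text{ambient error}})$.

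The main obstacle is the ambient motion in the late phase. The normal part of the drift has size $\dist/s_t$, which can be as large as $\varepsilon/s_t$ and is not integrable without using the contraction. I would handle this by the identity $\tfrac{d}{dt}\eta^\star(\bar X_t)\le -\tfrac{1}{s_t}\eta^\star + \tfrac{\varepsilon}{s_t}\sqrt{2\eta^\star}$, which is the computation underlying \cref{lem:contract_eps}. It shows that the normal component only oscillates within the $\sqrt2\varepsilon$-tube, so it contributes nothing to the re-projection.

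The tangential component is controlled exactly as in the speed bound. Summing the two phases then gives $d_{\truemanifold}(\proj(x),\proj(\projest(x)))\lesssim\varepsilon\log(1/\varepsilon)+\sqrt{\varepsilon}=\tilde{\cO}(\sqrt\varepsilon)$.

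A remaining technical check is that $D\proj$ is bounded uniformly on the tube. This needs only $\beta\ge2$ and the reach bound, via Federer's formula for $D\proj$ in terms of the second fundamental form.
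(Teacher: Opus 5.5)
Your first computation is already a complete proof of the lemma, and it takes a genuinely different, sharper route than the paper.

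\textbf{The paper's route.} The paper never follows the foot point along the flow. It bounds the total ambient path length, $\|x-\projest(x)\|\le\dist(x,\truemanifold)+\cO(\varepsilon\ln(\tnought/\tau))$ (\cref{lem:terminal_path_length}). Combined with \cref{lem:contract_eps}, this shows that $y=\proj(\projest(x))$ is an additively $\tilde{\cO}(\varepsilon)$-near-optimal point of $\truemanifold$ for $x$. The paper then invokes the reach (hypomonotonicity) inequality of \cref{lem:ambient_to_geodesic}. Turning additive near-optimality into a chord bound inherently costs a square root: already for a sphere, $\varepsilon'$-near-optimal points can sit at distance of order $\sqrt{\dist(x,\truemanifold)\,\varepsilon'}$ from $\proj(x)$. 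That is exactly where the $\sqrt{\varepsilon}$ comes from.

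\textbf{Your route.} You instead differentiate $Y_t=\proj(\bar X_t)$. Since $D\proj(x)$ annihilates $x-\proj(x)$, the $\Theta(1/t)$ normal pull does not move the foot point at all. With $\|D\proj\|_{\op}\le\reachmin/(\reachmin-\dist(x,\truemanifold))\le 4/3$, integrating the speed gives $d_{\truemanifold}(\proj(x),\proj(\projest(x)))\le\tfrac23\,\varepsilon\ln(\tnought/\tau)=2\varepsilon\ln(1/\varepsilon)$.

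\textbf{Trade-off.} The price is that you need $C^1$ regularity of $\proj$ on the tube, which is standard for $\beta\ge 2$ and reach at least $\reachmin$, whereas the paper uses only endpoint geometry plus a path-length bound. The gain is that you can take $\widetilde\delta=\tilde{\cO}(\varepsilon)$ instead of $\tilde{\cO}(\sqrt{\varepsilon})$ in \cref{thm:mucheck_coverage_apx}. That would put the coverage scale $\delta$ at the same order as $\alpha$, removing one of the two square-root losses behind the factor $4$ in $N^{-\beta/(4k)}$.

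\textbf{Two housekeeping points.} First, state the invariance of the tube precisely. \eqref{eq:dist_solution_repeat} gives $\dist(\bar X_t,\truemanifold)\le\max\{\dist(x,\truemanifold),\sqrt2\,\varepsilon\}$, which keeps the trajectory in $\cT_{\reachmin/4}(\truemanifold)$ once $\sqrt2\,\varepsilon<\reachmin/4$. The phrase ``nonincreasing up to an $\cO(\varepsilon)$ slack'' would not suffice for $x$ near the edge of the tube. The paper's proof needs the same invariance but leaves it implicit.

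Second, drop the two-phase discussion. It proves less than your first argument, and its late-phase ambient bound is never actually derived. It also conflates \cref{lem:ambient_to_geodesic}, which is a statement about near-optimal points, with the chord--arc comparison used inside that lemma's proof.
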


\paragraph{Restricted normal and tangential shifts lead to good coverage.}
We are now ready to show that diffusion models equipped with a \emph{coarse} score, when sampled via
\eqref{eq:sampling_dynamics_sde}--\eqref{eq:sampling_dynamics_ode}, achieve substantially better coverage of the data
manifold than the empirical measure. This addresses the empirical \emph{generalization} effect in a geometric way:
the sampler produces a law that spreads mass essentially everywhere along $\truemanifold$ (up to a thin tubular
neighborhood), at a resolution that can be far finer than what is attainable by $N$ atomic samples.

Since we have already shown that $\mudiff$ is close in Hellinger distance to the population surrogate $\mucheck$
(cf.\ \cref{thm:large_noise_reduction}), we will treat this approximation as a black box and focus on the main
geometric claim: \emph{$\mucheck$ assigns non-negligible mass to every local neighborhood centered on $\truemanifold$, at a fine intrinsic scale.}

For parameters $(\delta,\alpha)>0$ and $y\in\truemanifold$, define the \emph{$\alpha$-thickened geodesic ball}
\begin{equation}\label{eq:B-delta-alpha}
\Bdelal(y)
:=
\Bigl\{x\in {\cT_{\reachmin}(\truemanifold)}:\ \dist(x,\truemanifold)\le \alpha,\ \proj(x)\in \Bdel(y)\Bigr\},
\end{equation}
where $\Bdel(y)\subset\truemanifold$ denotes the intrinsic geodesic ball of radius $\delta$ centered at $y$.
Our notion of coverage is as follows. 

\begin{definition}[Covering]\label{def:cover}
Let $c>0$. We say that a probability measure $\mu$ \textbf{$(\alpha,\delta,c)$-covers} $\mupop$ if, for every $y\in\truemanifold$,
\[
\mu\bigl(\Bdelal(y)\bigr)\ge c\,\mupop\bigl(\Bdelal(y)\bigr)
\]
\end{definition}

\paragraph{Remark.}
Since $\mupop$ is supported on $\truemanifold$, thickening does not change its mass:
\[
\mupop\bigl(\Bdelal(y)\bigr)=\mupop\bigl(\Bdel(y)\bigr),
\qquad\text{for all }\alpha>0.
\]

Intuitively, $\mu$ $(\alpha,\delta,c)$-covers $\mupop$ if it places mass comparable to $\mupop$ on every geodesic ball of radius $\delta$, after robustifying that neighborhood by an $\alpha$-thickening in the normal direction, uniformly over all centers $y\in\truemanifold$.

This notion highlights a fundamental limitation of empirical measures: If $\muemp$ is supported on $N$ samples on a
$k$-dimensional manifold, then its support can form at best an $\tilde{\cO}(N^{-1/k})$-net. Consequently, for any
$\delta=o(N^{-1/k})$ there exists $y\in\truemanifold$ such that $\muemp(\Bdelal(y))=0$ while $\mupop(\Bdel(y))>0$, so $\muemp$
cannot $(\alpha,\delta,c)$-cover $\mupop$ for any $c>0$ at that resolution. In contrast, the following theorem shows that $\mucheck$ \emph{does}
$(\alpha,\delta,c)$-cover $\mupop$ at an intrinsic resolution far finer than what $\muemp$ can achieve, provided the manifold is sufficiently smooth (e.g.\ $\beta\gg 1$ under our regularity
assumptions).

\begin{theorem}[Coverage of the population surrogate]\label{thm:mucheck_coverage}
Let $\tnought=\reachmin/4$, and let $\mucheck$ be the
surrogate measure defined in \eqref{eq:mucheck_def}. Assume the coarse-score conditions of \Cref{assump:local_DSM} and the function
class specification in \Cref{sec:small-noise-corase-scores}. Then there exist constants $c_{\min}$ (explicitly given in \crefrange{eq:cmin_def_final}{eq:gauss_ball_explicit_lower}) and
$N_0\in\N$, depending only on $p_{\min},p_{\max}$ and geometric parameters of $\truemanifold$, such that for all $N\ge N_0$, the
measure $\mucheck$ $(\alpha,\delta,c_{\min})$-covers $\mupop$ with 
\begin{equation}\label{eq:mucheck_cover_scales}
\alpha \;=\; \tilde{\cO}\bigl(N^{-\beta/(2k)}\bigr),
\qquad
\delta \;=\; \tilde{\cO}\bigl(N^{-\beta/(4k)}\bigr).
\end{equation}
\end{theorem}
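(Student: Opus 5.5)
The plan has three stages: control $\varepsilon$, reduce to a statement about the true projection, and show the idealized surrogate has uniform mass.

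\textbf{Stage 1: the error level $\varepsilon$.} By \cref{thm:hausdorff_and_projection}, on $\cT_{\reachmin/4}(\truemanifold)$ we have $\|t\slearned(x,t)-\proj(x)\|=\tilde{\cO}(N^{-\beta/(2k)})$ uniformly in $t\in(\tau,\tnought]$. Writing $\slearned$ in the form \eqref{eq:terminal_score_model} therefore gives $\varepsilon=\tilde{\cO}(N^{-\beta/(2k)})$. We then fix $\tau=\tnought\varepsilon^3$.

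\textbf{Stage 2: reduction to the true projection.} We would like to compare $\mucheck$ with $\muproj$ as follows. For $x\in\cT_{\reachmin/4}(\truemanifold)$ with $\proj(x)\in B^{\truemanifold}_{\delta/2}(y)$:
\begin{itemize}[nosep]
\item \cref{lem:contract_eps} gives $\dist(\projest(x),\truemanifold)\lesssim\varepsilon$, so the thickening $\alpha\asymp\varepsilon$ is satisfied.
\item \cref{lem:tangential_drift_sqrt_eps} gives $d_{\truemanifold}(\proj(x),\proj(\projest(x)))\le C\sqrt{\varepsilon}\log^{c}N$, so $\proj(\projest(x))\in\Bdel(y)$ as soon as $\delta\ge 2C\sqrt\varepsilon\log^c N$.
\end{itemize}
Hence
\[
\mucheck(\Bdelal(y))\ \ge\ \mu_{\tnought}\Bigl(\{x\in\cT_{\reachmin/4}(\truemanifold):\proj(x)\in B^{\truemanifold}_{\delta/2}(y)\}\Bigr).
\]
Here $\mu_{\tnought}=\mupop*\cN(0,\tnought I_D)$ is the same measure that is pushed forward by $\proj$ to define $\muproj$ in \eqref{eq:muproj_def_repeat}, restricted to the tube. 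This choice gives $\alpha=\tilde{\cO}(N^{-\beta/(2k)})$ and $\delta=\tilde{\cO}(N^{-\beta/(4k)})$.

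\textbf{Stage 3: the baseline mass lower bound.} It remains to show that the right-hand side is at least $c_{\min}\,\mupop(\Bdel(y))$, with $c_{\min}$ independent of $N$. This is the content of \cref{prop:mass_lower_full}.
\begin{itemize}[nosep]
\item Use the local trivialization $(z,v)\mapsto z+v$, with $z\in\truemanifold$ and $v\in N_z\truemanifold$, $\|v\|<\reachmin/4$. Its Jacobian is bounded above and below by constants depending on $\reachmin$, since the second fundamental form is bounded by $1/\reach$.
\item The density of $\mu_{\tnought}$ at a point $z+v$ is at least $p_{\min}\int_{\truemanifold}\varphi_{\tnought}(z+v-w)\,\vol(\ud w)$. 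Because $\tnought$ and the tube radius are constants, this is bounded below by a positive constant $g_{\min}$, for instance by restricting $w$ to a unit-scale geodesic ball around $z$ (the explicit Gaussian-ball estimate).
\item Integrating over $v$ in the normal disc and over $z\in B^{\truemanifold}_{\delta/2}(y)$ gives mass at least $c\,g_{\min}\vol(B^{\truemanifold}_{\delta/2}(y))$.
\item Volume comparison for small geodesic balls (valid for $\delta$ below a constant, hence $N\ge N_0$) gives $\vol(B_{\delta/2})\ge c'\delta^k\ge c''\vol(B_\delta)$. Since $\mupop(\Bdel(y))\le p_{\max}\vol(B_\delta)$, the ratio is bounded below by $c_{\min}\propto g_{\min}p_{\min}/p_{\max}$ times geometric constants.
\end{itemize}

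\textbf{Expected obstacle.} The main difficulty is Stage 2. The tangential-drift lemma must hold uniformly in $x$ along the entire flow, with trajectories never leaving the tube where $\proj$ is single-valued and \eqref{eq:terminal_score_model} applies. One also has to check that the probability event of \cref{thm:hausdorff_and_projection} delivers the error model with the stated $\varepsilon$. Mass from outside $\cT_{\reachmin/4}$ is simply discarded, which is harmless because we only need a lower bound.
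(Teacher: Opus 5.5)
Your Stages 1--2 coincide with the paper's proof of \cref{thm:mucheck_coverage_apx}. There, too, normal contraction and the tangential-drift bound show that $\projest$ maps
\[
E_y=\{x\in\cT_\rho(\truemanifold):\proj(x)\in B^{\truemanifold}_{\widetilde\delta}(y)\}
\]
into $\Bdelal(y)$ with $\delta=3\widetilde\delta$, so that $\mucheck(\Bdelal(y))\ge\nu(E_y)$. Your $B^{\truemanifold}_{\delta/2}(y)$ plays the role of their $B^{\truemanifold}_{\widetilde\delta}(y)$. The genuine difference is Stage 3, i.e.\ how $\nu(E_y)$ is bounded below.

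The paper (\cref{prop:mass_lower_full}) uses a coupling argument. It writes $X_0=Y+Z$ and keeps only the event on which three things hold: the base point $Y$ already lies in the inner ball $B^{\truemanifold}_{\widetilde\delta/2}(y)$, the normal part of $Z$ is at most $\rho/2$, and the tangential part is at most $a$. The normal-fiber property and the Lipschitz constant $L_\rho$ of $\proj$ then keep $\proj(X_0)$ in the ball. This needs no Jacobian and no density of $\nu$. However, the inclusion in its Step~2 forces $L_\rho a\le\kappa/2$ with $\kappa=\widetilde\delta/2$, so $a\asymp\widetilde\delta$. Consequently the factor $\PP(\|G_k\|\le a)\asymp(\widetilde\delta^2/\tnought)^{k/2}$ in \eqref{eq:cmin_def_final} tends to $0$ as $N\to\infty$. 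The paper's explicit $c_{\min}$ is therefore not actually $N$-independent, and the closing remark that one may take $a=\rho$ is incompatible with that inclusion.

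Your route lower-bounds the Lebesgue density of $\nu$ uniformly on the tube and integrates in normal-bundle coordinates, with Jacobian $\det(I-\langle v,\mathrm{II}_z\rangle)\ge(1-\rho/\reachmin)^k$. Because this counts contributions from all base points $w\in\truemanifold$, not just those already inside the ball, it yields $\nu(E_y)\gtrsim\vol(B^{\truemanifold}_{\delta/2}(y))$ with an $N$-free prefactor. That is exactly what the theorem asserts.

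The cost is the tubular change-of-variables formula and the bound $\|\mathrm{II}\|\le1/\reachmin$, both standard. You can even drop $p_{\min}$ from $g_{\min}$: since $\mupop$ is a probability measure, the density is at least $\min_{w\in\truemanifold}\varphi_{\tnought}(x-w)>0$. Your constant will not match the explicit formula \eqref{eq:cmin_def_final}, but that formula is the problematic one.

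Finally, your Stage~1 reading is the consistent one. You interpret \eqref{eq:proj_rate} as a bound on $\|t\,\slearned(x,t)+x-\proj(x)\|=\|e(x,t)\|$, giving $\varepsilon=\tilde{\cO}(N^{-\beta/(2k)})$, and this yields the stated $\alpha$ and $\delta$. The appendix's "$\varepsilon=\tilde{\cO}(N^{-(\beta-1)/(2k)})$" would not.
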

As discussed at the beginning of this subsection, the result is essentially an immediate consequence of
\crefrange{lem:contract_eps}{lem:tangential_drift_sqrt_eps}; the remaining steps are largely tedius calculations; see \cref{apx:coverage}.

%% file: sections/conclusion.tex
\section{Conclusion}
This paper proposes a \emph{geometric} explanation for why diffusion models can generate \emph{novel} (non-memorized, on-manifold) samples even
when the learned score is inaccurate. Under the manifold hypothesis, we formalize
generalization as \emph{coverage} of the data manifold $\truemanifold$ at an intrinsic resolution~$\delta$. \textbf{Our main message is a
statistical separation between learning \emph{geometry} and learning \emph{probability}}: Gaussian smoothing makes the
large-noise regime effectively parametric, while in the small-noise regime the dominant $t^{-1}$ normal component of the
score identifies the projection geometry of~$\truemanifold$. Analyzing the practically used hybrid sampler (reverse-time SDE
followed by a terminal probability-flow ODE), we show that coarse small-noise score learning induces an approximate
projection map, yielding near-minimax manifold recovery and uniform projection accuracy, and consequently a coverage
guarantee at scale $\delta=\tilde \cO\bigl(N^{-\beta/(4k)}\bigr)$, which can be substantially finer than the empirical
net scale $\tilde \cO(N^{-1/k})$ for sufficiently smooth manifolds.

\paragraph{Open directions.}
Several directions remain open:
\begin{enumerate}[label=(\arabic*),leftmargin=*]

\item \textbf{From nonparametric function classes to explicit parametrizations.}
Arguably the most important restriction in our framework is the \emph{function-class specification} in the small-noise regime.
While it is motivated by empirically observed implicit bias (see, e.g., \Cref{fig:geometry_before_memorization}), our analysis
treats this class in a largely nonparametric manner.
An important next step is to make this inductive bias \emph{explicit} by working with a fully parametric score model and proving
realizability/approximation guarantees under coarse optimization---for instance, via physics-informed architectures (PINNs) or
other structured networks that directly encode projection-like behavior.
Such results could in turn suggest principled architectural choices that better isolate the geometric (projection-dominant)
component of the score.

\item \textbf{Noise schedules, discretizations, and training idealizations.}
Our guarantees are derived under an idealized large-noise training condition and a particular continuous-time perspective.
It would be valuable to extend the theory to broader noise schedules and practically used discretizations, including the effects
of finite-step samplers, step-size selection, and common training variations (e.g., truncated time horizons or non-uniform time
weighting), while preserving a comparable separation between geometry learning and distribution learning.

\item \textbf{Coverage versus task-level novelty and perceptual quality.}
Our notion of coverage is intrinsic and geometric; connecting it more directly to task-level metrics of novelty and perceptual
quality remains open. Establishing such links could clarify when fine on-manifold coverage translates into improved downstream
utility or human-perceived diversity.

\item \textbf{Constants and sharp rates.}
We have made no attempt to optimize constants: bounds are stated up to polylogarithmic factors and manifold- and
density-dependent constants (e.g., reach and density bounds). Tightening these constants and identifying sharp minimax
dependencies is left for future work.

\end{enumerate}

%% file: sections/apx_large_noise.tex
\section{Proof of \cref{thm:large_noise_reduction}}
\label{apx:large_noise}

In this section, we prove the main result in \cref{subsec:large_noise_reduction}.

\begin{proof}[proof of \Cref{thm:large_noise_reduction}]
\paragraph{Step 1: identify the empirical DSM minimizer and excess-risk identity.}
Fix $t\in(\tnought,T]$ and set $\sigma=\sqrt t$. Define the empirical corrupted marginal
$\muemp^{\sigma}:= \muemp*\cN(0,t I_D)$. Consider the DSM objective \eqref{eq:dsm-sigma}:
\begin{equation}
\label{eq:hold11}
\DSM_t(s)
:= \E_{x_0\sim\muemp}\E_{x\sim \cN(x_0,t I_D)}
\big\|s(x,t)-\nabla_x\log \cN(x;x_0,tI_D)\big\|^2,
\end{equation}
and let $\semp(\cdot,t)\in\arg\min_{s(\cdot,t)}\DSM_t(s)$ be its minimizer
(over all measurable vector fields). Conditioning on $x$ shows the pointwise minimizer is the
regression function
\[
\semp(x,t)=\E\!\left[\nabla_x\log \cN(x;x_0,tI_D)\mid x\right].
\]
Using Bayes' rule and differentiating under the integral,
\[
\begin{aligned}
\nabla_x\log \muemp^{\sigma}(x)
&= \frac{\nabla_x \int \cN(x;x_0,tI_D)\,\mathrm{d}\muemp(x_0)}{\int \cN(x;x_0,tI_D)\,\mathrm{d}\muemp(x_0)} \\
&= \frac{\int \cN(x;x_0,tI_D)\,\nabla_x\log \cN(x;x_0,tI_D)\,\mathrm{d}\muemp(x_0)}
        {\int \cN(x;x_0,tI_D)\,\mathrm{d}\muemp(x_0)} \\
&= \E\!\left[\nabla_x\log \cN(x;x_0,tI_D)\mid x\right],
\end{aligned}
\]
hence $\semp(x,t)=\nabla_x\log \muemp^{\sigma}(x)$ for $\muemp^{\sigma}$-a.e.\ $x$.
Moreover, the usual regression Pythagorean identity yields the excess-risk decomposition
\begin{equation}\label{eq:excess_identity_merged}
\DSM_t(\hat s)-\DSM_t(\semp)
=\|\hat s(\cdot,t)-\semp(\cdot,t)\|^2_{L^2(\muemp^{\sigma})}.
\end{equation}

\paragraph{Step 2: from excess DSM to a KL bound on the SDE-stage marginal.}
Let $\mathbb P^{\emp}$ be the path law of the reverse-time SDE stage on $[\tnought,T]$
driven by drift $-s^{\emp}(\cdot,t)$, and let $\mathbb P^{\hat s}$ be the corresponding path law
driven by $-\hat s(\cdot,t)$, using the same diffusion coefficient and the same initialization at time $T$.
By Girsanov's theorem,
\[
\KL(\mathbb P^{\emp}\,\|\,\mathbb P^{\hat s})
= \frac12\,\E_{\mathbb P^{\emp}}\!\int_{\tnought}^{T}
\big\|\hat s(X_t,t)-s^{\emp}(X_t,t)\big\|^2\,dt .
\]
Under $\mathbb P^{\emp}$, the time-$t$ marginal equals $\muemp^{\sqrt t}$ by construction, hence
\[
\KL(\mathbb P^{\emp}\,\|\,\mathbb P^{\hat s})
= \frac12\int_{\tnought}^{T}\|\hat s(\cdot,t)-s^{\emp}(\cdot,t)\|^2_{L^2(\muemp^{\sqrt t})}\,\mathrm{d}t
= \frac12\int_{\tnought}^{T}\big(\DSM_t(\hat s)-\DSM_t(s^{\emp})\big)\,\mathrm{d}t,
\]
where we used \eqref{eq:excess_identity_merged}. Since $\DSM_t(s^{\emp})=\inf_s \DSM_t(s)$,
\Cref{asm:dsm_large} gives
\[
\KL(\mathbb P^{\emp}\,\|\,\mathbb P^{\hat s}) \le \tfrac12\,\epsLN.
\]
Let $\nu_{\tnought}$ denote the time-$\tnought$ marginal under $\mathbb P^{\hat s}$, while the
time-$\tnought$ marginal under $\mathbb P^{\emp}$ is $\muemp^{\sigma_0}$ (since $\sigma_0=\sqrt{\tnought}$).
Marginalization is a Markov kernel, so KL data processing yields
\begin{equation}\label{eq:KL_marginal_merged}
\KL\big(\muemp^{\sigma_0}\,\|\,\nu_{\tnought}\big)
\le \KL(\mathbb P^{\emp}\,\|\,\mathbb P^{\hat s})
\le \tfrac12\,\epsLN.
\end{equation}

\paragraph{Step 3: conclude via Hellinger composition and the high-probability smoothing bound.}
Because the ODE stage \eqref{eq:sampling_dynamics_ode} is deterministic, $\mudiff=\projest_\#\nu_{\tnought}$.
Since Hellinger distance contracts under measurable maps, we have
\[
H\big(\projest_\#\mupop^{\sigma_0},\,\mudiff\big)
= H\big(\projest_\#\mupop^{\sigma_0},\,\projest_\#\nu_{\tnought}\big)
\le H\big(\mupop^{\sigma_0},\,\nu_{\tnought}\big).
\]
By the triangle inequality for $H$,
\[
H(\mupop^{\sigma_0},\nu_{\tnought})
\le H(\mupop^{\sigma_0},\muemp^{\sigma_0}) + H(\muemp^{\sigma_0},\nu_{\tnought}).
\]
Squaring and using $(a+b)^2\le 2a^2+2b^2$ gives
\[
H^2(\mupop^{\sigma_0},\nu_{\tnought})
\le 2H^2(\mupop^{\sigma_0},\muemp^{\sigma_0})
 +2H^2(\muemp^{\sigma_0},\nu_{\tnought}).
\]
Finally use $H^2(P,Q)\le \KL(P\|Q)$:
\[
H^2(\mupop^{\sigma_0},\muemp^{\sigma_0})
\le \KL(\mupop^{\sigma_0}\,\|\,\muemp^{\sigma_0}),
\qquad
H^2(\muemp^{\sigma_0},\nu_{\tnought})
\le \KL(\muemp^{\sigma_0}\,\|\,\nu_{\tnought}).
\]
Therefore,
\begin{equation}\label{eq:h2_reduce_two_kl}
H^2\big(\projest_\#\mupop^{\sigma_0},\,\mudiff\big)
\le
2\,\KL(\mupop^{\sigma_0}\,\|\,\muemp^{\sigma_0})
+
2\,\KL(\muemp^{\sigma_0}\,\|\,\nu_{\tnought}).
\end{equation}
By \Cref{thm:kl-1-over-n} applied at $\sigma_0^2=\tnought$, for any $a>0$, with probability at least
$1-N^{-a}$ over the $N$ samples,
\[
\KL(\mupop^{\sigma_0}\,\|\,\muemp^{\sigma_0})
\;=\;
{\mathcal O}\!\left(\frac{a\log N}{N}\right).
\]
On the other hand, \eqref{eq:KL_marginal_merged} holds deterministically under \Cref{asm:dsm_large}:
\[
\KL(\muemp^{\sigma_0}\,\|\,\nu_{\tnought})\le \tfrac12\,\epsLN.
\]
Substituting these two bounds into \eqref{eq:h2_reduce_two_kl}, we obtain that, with probability
at least $1-N^{-a}$ over the $N$ samples and any algorithmic randomness,
\[
H^2\!\Big(\projest_\# \big(\mupop * \cN(0,\tnought I_D)\big)\,,\,\mudiff\Big)
\;=\;
{\mathcal O}\!\left(\frac{a\log N}{N}\right)
\;+\;
\mathcal O(\epsLN),
\]
which is exactly the claimed bound.
\end{proof}

%% file: appendix_notation.tex
\paragraph{Notations in \crefrange{appendix_proof_sketch_theorem_hausdorff}{sec:proof_hausdorff_and_projection}}

We introduce some notation that will be used extensively in the following sections.

For a function $f: \bb{R}^m \to \bb{R}$, we use $D^j[f](x)$ to denote the $j^{th}$ derivative of $f$ at $x$, provided its existence. 
For a function $g: \bb{R}^m \to \bb{R}^n$, $D^j[g](x)$ denotes the concatenation of the entry-wise derivatives, 
\begin{equation*}
    D^j[g](x) = \left[D^j[g_1](x), \ldots, D^j[g_n](x) \right].
\end{equation*}
Note that $D^j[g](x)$ is a $j$-linear operator, and we denote its operator norm by $\|\cdot\|_{op}$.
We abbreviate $D^1[\cdot]$ by $D[\cdot]$.





%


%% file: appendix_proof_sketch_theorem_hausdorff.tex
\section{Proof Sketch of \Cref{thm:hausdorff_and_projection}}\label{appendix_proof_sketch_theorem_hausdorff}
\Cref{thm:hausdorff_and_projection} is proved in a ``bootstrap'' fashion: We first show that the estimator $\cMest$ is $\c{C}^{\beta-1}$, which allows us derive a similar result to \Cref{thm:hausdorff_and_projection} with a slightly weaker approximation guarantee (see below); This first step allows us to further show $\cMest$ is $\c{C}^{\beta}$ and the approximation error is further reduced to $\c{O}(\frac{1}{N^{\beta/k}})$ as in \Cref{thm:hausdorff_and_projection}.
Conditioned on that \Cref{thm:hausdorff_and_projection2} is correct, the proof of \Cref{thm:hausdorff_and_projection} is stated in \Cref{appendix_C_beta_based_on_C_beta_1}.

The key ingredient behind this improvement is the following nontrivial fact: if $\cMest$ is close to the ground-truth manifold $\truemanifold$ in Hausdorff distance (as guaranteed by \Cref{thm:hausdorff_and_projection2}), then the associated function $\hat\eta$ (such that $\slearned$ and $\hat\eta$ satisfy \Cref{eq:score_class}) coincides with the squared distance function to $\cMest$. In contrast, for a general $\eta \in \c{D}_\mathbf{L}^k$, it is \emph{not} true that $\eta$ is the squared distance function to its zero set, $\hypothesismanifold = \{x \in \domain : \eta(x)=0\}$.

With this ingredient, we can then use the Poly-Raby Theorem (see for example \citep[Theorem 2.14]{denkowski2019medial} or \citep[Theorem 5.1]{salas2019characterizations}) to show that $\cMest$ is $\c{C}^{\beta}$. Once we have this enhancement, we can reuse the proof of the $\c{C}^{\beta-1}$ again to obtain the improved result.

\begin{theorem}[Weaker version of \Cref{thm:hausdorff_and_projection}] \label{thm:hausdorff_and_projection2}
Assume that $\mupop$ is supported on a compact, connected, boundaryless, $k$-dimensional $C^\beta$ submanifold
$\truemanifold\subset\R^D$ with $\beta\geq2$, and that $\reach(\truemanifold)\ge \reachmin>0$.
Suppose that the parameter $\mathbf{L}$ in $\functionclass$ is chosen sufficiently large such that $\eta^\star \in \functionclass$, where $\eta^\star$ is defined in \cref{eqn_def_eta_star}. 
Pick $h = \Theta((\log N / {N})^{1/k})$.
Let $s_{\hat\eta}$ be a score estimate learned from $N$ i.i.d.\ samples satisfying \Cref{assump:local_DSM}.
For a sufficiently large $N$, the estimator $\cMest := \{x \in \domain: s_{\hat \eta}(x,t) = 0\}$ satisfies with probability $1 - \c{O}\left(\left(\frac{1}{N}\right)^{\frac{\beta}{k}}\right)$: for all $t\in({\tau},\tnought]$,
\begin{align}
\dhaus(\cMest,\truemanifold)
&\;=\; \tilde{\cO}\bigl(N^{-\textcolor{red}{(\bm{\beta-1})}/k}\bigr),
\label{eq:hausdorff_rate2}
\end{align}
where $\tilde{\cO}(\cdot)$ hides polylogarithmic factors in $N$ and constants depending only on
$(k,D,\beta,\reachmin)$.
\end{theorem}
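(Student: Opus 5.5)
The route is the local-polynomial analysis of \citet{AamariLevrard2019}, transported to the class $\cD^k_{\mathbf{L}}$ through a reduction from local DSM to a local \emph{Principal Manifold Estimation} loss.

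\textbf{Step 1: from local DSM to local PME.} For $s=-\tfrac1t\nabla\eta$ on $\domain$, expanding $\DSM_t(s;x_0)$ with $x=x_0+\sqrt t\,z$ gives
\[
t^{-2}\,\E_z\|\nabla\eta(x_0+\sqrt t z)-\sqrt t z\|^2 .
\]
Taylor-expanding $\nabla\eta$ around $x_0$, using the Eikonal relation $\|\nabla\eta\|^2=2\eta$, and invoking the smoothness bounds $L_j$, this reduces to
\[
t^{-2}\Bigl(\|\nabla\eta(x_0)\|^2+t\,c(\eta,x_0)\Bigr)+O(t^{-1/2}).
\]
Here $c(\eta,x_0)$ is a trace term involving $\nabla^2\eta(x_0)$. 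Its dependence on $\eta$ is only through $\tr(\nabla^2\eta-I)^2$, which is pinned down by the (Rank)/(Angle) constraints together with Eikonal: $\nabla^2\eta$ is a projector at zeros of $\eta$. Hence the excess local DSM equals
\[
t^{-2}\cdot 2\,\E_{\muemprefh}\bigl[\hat\eta(x_0)\bigr]
\]
up to lower-order terms. Since $\eta^\star\in\cD^k_{\mathbf{L}}$ gives $\inf_{\cS}\LDSM_t\le \LDSM_t(s_{\eta^\star})$ with $\eta^\star(y_i)=0$, \Cref{assump:local_DSM} yields
\[
\frac1N\sum_{y_j\in B(x_{\uref},h)}\hat\eta(y_j)\lesssim t.
\]
Choosing $t$ near $\tau$ small, or better using the uniformity over $t$ and letting $t\downarrow\tau$, we get that $\hat\eta$ is essentially zero on every sample. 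This is already enforced by (Anchoring). I would therefore actually use the anchoring constraint directly: $\hat\eta(y_i)=0$ for all $i$.

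\textbf{Step 2: local structure of $\cMest$.} Using Eikonal, (Rank) and (Non-escape), the zero set of $\hat\eta$ near each $y_i$ is a $C^{\beta-1}$ $k$-dimensional submanifold, obtained as the critical set of $\hat\eta$ via the implicit function theorem applied to $\nabla\hat\eta$, whose Jacobian $\nabla^2\hat\eta$ has rank $D-k$. The $C^\beta$ bound on $\hat\eta$ thus gives a $C^{\beta-1}$ graph over $\uspan(W_i)$ with controlled derivatives. The (Angle) condition guarantees this graph is also a graph over $T_{y_i}\truemanifold$ on a ball of radius $\gtrsim\reachmin$.

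\textbf{Step 3: compare two graphs through $N\gtrsim h^{-k}\log N$ common zeros.} Locally around $y_i$, both $\truemanifold$ and $\cMest$ are graphs $g^\star,\hat g$ of $C^{\beta-1}$ functions over a common $k$-plane, with uniformly bounded derivatives. The difference $g^\star-\hat g$ vanishes at all samples within $B(y_i,h)$. With probability $1-O(N^{-\beta/k})$, by the standard covering argument with $h=\Theta((\log N/N)^{1/k})$, these samples form an $h$-net whose points are in general position for degree-$(\beta-2)$ polynomial interpolation; this is the same unisolvency event as in \citet{AamariLevrard2019}. A $C^{\beta-1}$ function with bounded $(\beta-1)$-th derivative that vanishes on such a net satisfies $\sup_{B(y_i,h)}|g^\star-\hat g|\lesssim h^{\beta-1}$, by Taylor expansion to order $\beta-2$ plus stability of the interpolation. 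Summing over patches gives both directions of the Hausdorff bound $\tilde\cO(N^{-(\beta-1)/k})$. The direction $\sup_{x\in\truemanifold}\dist(x,\cMest)$ uses that the samples form an $h$-net of $\truemanifold$; the reverse direction uses (Non-escape) to exclude components of $\cMest$ away from the samples.

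\textbf{Main obstacle.} I expect Step 3 to be the hardest part: it needs a uniform lower bound on the conditioning of local polynomial interpolation at random points, which requires a quantitative general-position lemma. A secondary difficulty is excluding spurious components of the zero set inside $\domain$, which is where (Non-escape) and the connectivity of $\domain$ (\cref{lemma_connectivity_U}) must be combined.
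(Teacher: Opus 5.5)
Your route is correct in outline but takes a different path from the paper's. The paper goes through the training objective. From \Cref{assump:local_DSM} and the feasibility of $\eta^\star$ it derives $\PME_t(\hat\eta)=\cO(t)$. It converts this into $\E_{x_0\sim\muemprefh}\|N_{\hat\eta}(v(x_0))-N_\uref(v(x_0))\|^2=\cO(t)$ (\Cref{lemma_PME_to_PE}), and then sets $t=\cO(h^{2(\beta-1)})$ so that Proposition~2 and Theorem~6 of \citet{AamariLevrard2019} apply. You observe that (Anchoring) places every sample on $\cMest$, so this empirical discrepancy is in fact identically zero, and you interpolate directly. The stability of local polynomial fitting that you single out as the main obstacle is exactly what the paper delegates to that Proposition~2. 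Your version is shorter and uses only membership in $\cD^k_{\mathbf L}$, so Step~1 and \Cref{assump:local_DSM} can simply be dropped. It also gives the bound for every $t\in(\tau,\tnought]$ without tying $t$ to $h$.

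The routes also differ in how the local structure of $\cMest$ is obtained. The paper proves $\hat\eta=\tfrac12 d_\domain^2(\cdot,\cMest)$ (\Cref{thm:eikonal-distance}), hence $\hat\eta=\tfrac12\dist^2(\cdot,\cMest)$ on $\domain_2$ (\Cref{lemma_distance_on_U_h}). It then uses the feature-ball lemma to exclude near self-intersections (\Cref{lemma_one_connected_component}). This distance characterization is also what later gives $\nabla\hat\eta(x)=x-\pi_{\hat\eta}(x)$ and the Poly--Raby upgrade to $C^\beta$, so you would need it downstream anyway.

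Your implicit-function approach can replace it here, but you should make explicit what Step~3 silently uses: every sample in $B(y_i,h)$ lies on the same sheet of $\cMest$ as $y_i$. (Angle) does not give this; it follows from (Eikonal) instead.
\begin{itemize}
\item Differentiating (Eikonal) twice shows that $\nabla^2\hat\eta(y_i)$ is an orthogonal projector of rank $D-k$.
\item Let $W_N$ span its range. The $L_3$ bound then gives $W_N^\top\nabla^2\hat\eta\,W_N\succeq\tfrac12 I$ on a ball of radius $\asymp\min\{1/L_3,\reachmin\}$ around $y_i$.
\item On that ball, $\hat\eta$ is strongly convex on each normal slice. Its zeros are minimizers (as $\hat\eta\ge0$), so they all lie on the single graph $\{W_N^\top\nabla\hat\eta=0\}$.
\item That graph comes from the implicit function theorem applied to this normal block, not to $\nabla\hat\eta$ itself, whose Jacobian is singular.
\end{itemize}

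Two further points are needed for the two Hausdorff directions.
\begin{itemize}
\item For $\sup_{x\in\truemanifold}\dist(x,\cMest)$, you need $\hat\eta$ to vanish on that whole graph. This follows from the boundaryless $k$-manifold structure of the zero set (\Cref{lemma_manifold_structure} together with (Rank)) plus invariance of domain.
\item For your reverse direction, the connectedness of $\cMest$ proved in \Cref{lemma_manifold_structure} via the gradient-flow deformation retract rules out far-away components. This is where (Non-escape) and the connectivity of $\domain$ enter.
\end{itemize}
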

\begin{remark}
    The only difference (highlighted in red and bold face) of \Cref{thm:hausdorff_and_projection2} and \Cref{thm:hausdorff_and_projection} is that the exponent in \Cref{eq:hausdorff_rate2} is $(\beta-1)$ instead of $\beta$ as in \Cref{eq:hausdorff_rate}.
\end{remark}
We now provide a more detailed proof sketch for \Cref{thm:hausdorff_and_projection2}. 
\begin{proofsketch}
The proof consists of three main steps:
\begin{itemize}[leftmargin=*]
    \item Characterize the topological, geometrical, and analytical regularity of the set $\hypothesismanifold = \{x\in \domain: \eta(x) = 0 \}$ for the functions $\eta$ in the set $\c{D}_\mathbf{L}^k$ (\ref{eq:distance_function_class}).
    \begin{description}
        \item[Topology of $\hypothesismanifold$.] We first show $\hypothesismanifold$ is connected with a deformation retract argument. Moreover, since every $\eta \in \c{D}_\mathbf{L}^k$ is locally a Morse-Bott function, we can further conclude that $\hypothesismanifold$ is a $\c{C}^{\beta-1}$ smooth embedded submanifold of $\bb{R}^D$ without boundary.
        This result is summarized in \Cref{lemma_manifold_structure}. 
        \begin{tcolorbox}
            We highlight that in general, for a $\c{C}^\beta$ Morse-Bott function $\eta$, we can only show that its critical set, $\hypothesismanifold$, is $\c{C}^{\beta-1}$ submanifold. In contrast, if $\eta$ happens to be the squared distance function to $\hypothesismanifold$, this can be further improved to $\c{C}^\beta$, e.g. by \citep[Theorem 2.14]{denkowski2019medial}. 
            However, at this stage, we \emph{cannot} show that $\eta$ is a squared distance function to $\hypothesismanifold$ and this is the fundamental reason why we can only have a weaker result (in the sense of regularity) in \Cref{thm:hausdorff_and_projection2}.    
        \end{tcolorbox}
        \item[Geometrical Property of $\hypothesismanifold$.] Our next goal is to derive a local geometric description of $\hypothesismanifold$. Specifically, we show that for every point $x \in \hypothesismanifold$, there exists a $D$-dimensional Euclidean open ball centered at $x$ in which $\hypothesismanifold$ can be represented as the graph of a function over an open ball in $\bb{R}^k$. This is highly nontrivial because it requires a uniform positive lower bound on the reach of $\hypothesismanifold$. The reach depends on both the curvature of the manifold and the possibility of near self-intersections. The smoothness assumption in \Cref{eq:distance_function_class} (last line) controls the curvature, but it does not directly control near self-intersections.
        
        To overcome this difficulty, we prove two facts. First, in a neighborhood of fixed radius around every point $x \in Y_n \subseteq \truemanifold$, the function $\eta$ is exactly the squared distance function to $\hypothesismanifold$; see \cref{lemma_distance_on_U_h}. Second, this implies that the same neighborhood contains no points from the medial axis of $\hypothesismanifold$; see \cref{lemma_one_connected_component}. Together, these two facts yield the desired local graph representation of $\hypothesismanifold$.
        
        \item[Regularity of the local graph representation of $\hypothesismanifold$.] Our next step is to convert the regularity and the smoothness of $\hypothesismanifold$ to its local graph representation. This step is mainly built on the implicit function theorem. 
    \end{description}
    \item Show that, for a candidate solution $s_{\hat \eta}$ that fulfills \Cref{assump:local_DSM}, the corresponding function $\hat \eta \in \c{D}_\mathbf{L}^k$ also minimizes a Principal Manifold Estimation (PME) loss (\ref{eq:PME_loss}).
    \item Show that when the PME loss is small for $\hat \eta$, a polynomial estimation loss (\ref{eqn_PE_loss}) is also small.
\end{itemize}
The third statement can be converted into a bound on the Hausdorff distance between the estimated manifold $\estimatedmanifold$ and the ground-truth manifold $\truemanifold$. Finally, since both $\truemanifold$ and $\estimatedmanifold$ have reach bounded away from zero, this Hausdorff control can in turn be translated into closeness of the corresponding projection maps on the intersection of their tubular neighborhoods.
\end{proofsketch}

%% file: appendix_function_class_zebang.tex

\section{A Graph-of-function Representation of a Smooth Submanifold} \label{section_local_representation_submanifold}
We collect here the preparatory material needed to specify the function class in \cref{sec:function_class_D}.
For any closed (compact without boundary) \(k\)-dimensional $\c{C}^\beta$ ($\beta\geq 2$) submanifold $\anymanifold$ embedded in $\bb{R}^D$, it admits the following representation; see \Cref{fig:local representation of submanifolds}:  
Let $x_\uref \in \anymanifold$ be any given reference point. There exist open sets $V \subseteq \bb{R}^D$ centered at $x_\uref$ and $U \subseteq \bb{R}^k$ centered at $0$, such that every point $x \in V\cap \anymanifold$ can be represented as
\begin{equation} \label{eqn_local_representation}
    x = \Psi(v) := x_\uref +  W_\uref v + W_\uref^\perp N_\uref(v) \text{ with some } v \in U.
\end{equation}
Here $W_\uref \in \bb{R}^{d\times k}$ is a column orthogonal matrix that spans the tangent space $T_{x_\uref} \anymanifold$, i.e. $T_{x_\uref} \hypothesismanifold= \mathrm{span}(W_\uref)$, and $W_\uref^\perp \in \bb{R}^{(d-k)\times k}$ is its orthogonal complement; $N_\uref:\bb{R}^k\to \bb{R}^{d-k}$ is locally a $\c{C}^\beta$ function and $(v, N_\uref(v))\in\bb{R}^k\times \bb{R}^{d-k}$ is the coordinate of $x$ under the basis $(W_\uref, W_\uref^\perp)$.
Moreover, $N_\uref$ admits the following conditions
\begin{equation} \label{eqn_normalizing_condition_N}
    N_\uref(0)=0
    \qquad\text{and}\qquad
    D[N_\uref](0)=0,    
\end{equation}
where \(v=0\) corresponds to the point \(x_\uref\) in the chosen chart. The first condition ensures that $\anymanifold$ passes through \(x_\uref\), and the second ensures that the tangent space of $\anymanifold$ at \(x_\uref\) is exactly \(\mathrm{span}(W_\uref)\).
\begin{figure}[t]
    \centering
    \includegraphics[width=0.5\linewidth]{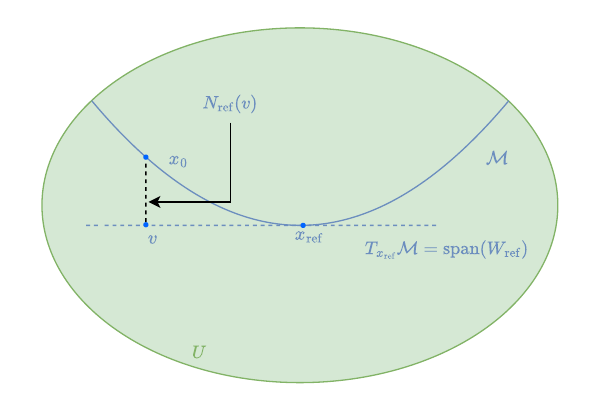}
    \caption{A local representation of a submanifold $\anymanifold\in \c{C}^\beta$.}
    \label{fig:local representation of submanifolds}
\end{figure}

Further, by compactness, for a fixed $\c{C}^2$ submanifold, we have (1) its reach is bounded from below and (2) for any $x_\uref \in \truemanifold$, the operators of $D^j[N_\uref]$ are bounded from above within the open domain $V$. To derive concrete statistical complexity bounds for submanifold recovery, we follow the previous work \citep{AamariLevrard2019} and specify these bounds as follow. 
\begin{definition} \label{def_graph_of_function}
    For $\beta \ge 3$, $\reachmin>0$, and $\mathbf L:=(L_2,L_3,\ldots,L_\beta)$, let $\c{C}^\beta_{\reachmin,\mathbf L}$ be the class of $k$-dimensional closed submanifolds $\anymanifold\subset\bb{R}^D$ such that:
    \begin{itemize}
        \item Reach condition: $\reach(\anymanifold)\ge \reachmin$.
        \item Local graph representation: For every $x_{\mathrm{ref}}\in \anymanifold$, there exists a radius $r \ge \frac{1}{4L_2}$, an open set $V\subseteq \bb{R}^D$, and a $\c{C}^\beta$ map $N_{\mathrm{ref}}:B_k(0,r)\to \bb{R}^{D-k}$ such that $\anymanifold\cap V$ admits a one-to-one parametrization
\[
\Psi: B_k(0,r)\to \anymanifold\cap V,
\qquad
\Psi \ \text{as in \cref{eqn_local_representation} with } N_{\mathrm{ref}}.
\]
        \item Derivative bounds: For every $v\in \bb{R}^k$ with $|v|\le \frac{1}{4L_2}$ and every $2\le j\le \beta$,
$\|D^j N_{\mathrm{ref}}(v)\|_{\mathrm{op}} \le L_j$.
    \end{itemize}
Here $D^j\phi(v)$ denotes the $j$th derivative of a map $\phi:\bb{R}^k\to\bb{R}^{D-k}$ at $v$, viewed as a $j$-linear form, and $\|\cdot\|_{\mathrm{op}}$ is the associated operator norm.
\end{definition}
We note that the submanifold class is exactly the same as the one considered in \citep[Definition 1]{AamariLevrard2019} and hence the lower bounds in \citep[Theorems 3, 5, 7]{AamariLevrard2019} also apply here.


\section{Auxiliary details for the function class construction} \label{sec:function_class_D}



\noindent This section records additional details underlying the construction of the function class used in
\cref{sec:small-noise-corase-scores}.

\paragraph{Connectedness of $\domain$.}
Recall that in \cref{sec:small-noise-corase-scores} we let 
\[
\supp(\muemp)=Y_N=\{y_1,\dots,y_N\} \subseteq \truemanifold.
\]
Set
\[
h \;=\; \tilde{\cO}\!\left(\left(\frac{\log N}{N}\right)^{1/k}\right).
\]
A standard covering argument implies that, for $N$ sufficiently large, $Y_N$ is an $\epsilon$-net of the target
manifold $\truemanifold$ with $\epsilon=h/2$, with probability at least $1-N^{-\beta/k}$; see, e.g.,
\citep[Lemma~4]{AamariLevrard2019}.

Recall the definition of $\domain$ from \eqref{eq:def_U}:
\begin{equation}\label{eq:def_U_repeat}
\domain \;:=\; \bigcup_{i=1}^N \euclideanball{D}\Bigl(y_i;\frac{\reachmin}{2}\Bigr),
\end{equation}
where $\reachmin$ denotes the minimal reach over the manifold class under consideration. By construction,
$\domain \subseteq \R^D$ is a neighborhood of $\truemanifold$. The next lemma records the basic topological and geometric properties of~$\domain$.

\begin{lemma}[Connectivity and minimum width of $\domain$]\label{lemma_connectivity_U}
Suppose that $Y_N$ is an $\epsilon$-net of $\truemanifold$ (in the ambient Euclidean metric) for some
$\epsilon<\reachmin/2$. Then $\domain$ is connected. Moreover, $\domain$ contains the tubular neighborhood of $\truemanifold$
of radius $\reachmin/2-\epsilon$, i.e.,
\[
\c{T}_{\reachmin/2-\epsilon}(\truemanifold)=\{x\in\R^D:\dist(x,\truemanifold)\le \reachmin/2-\epsilon\}\;\subseteq\; \domain.
\]
\end{lemma}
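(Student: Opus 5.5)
Two claims need proof: connectivity of $\domain$, and the inclusion of the tube $\cT_{\reachmin/2-\epsilon}(\truemanifold)$ in $\domain$. Both follow from the net property together with the connectedness of $\truemanifold$.

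\emph{Tube inclusion.} Let $x$ satisfy $\dist(x,\truemanifold)\le \reachmin/2-\epsilon$. Since $\truemanifold$ is compact, the infimum defining the distance is attained at some $\point\in\truemanifold$ with $\|x-\point\|=\dist(x,\truemanifold)$; we do not need the nearest point to be unique. Because $Y_N$ is an $\epsilon$-net, there is some $y_i$ with $\|\point-y_i\|\le\epsilon$. The triangle inequality then gives
\[
\|x-y_i\|\le \reachmin/2-\epsilon+\epsilon=\reachmin/2.
\]
This lands on the boundary of the ball in the equality case. Two remedies are available. If the balls in \eqref{eq:def_U} are read as closed, we are done. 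Otherwise, when $\|x-y_i\|=\reachmin/2$ exactly, we use that the net property is strict for a finite net with $\epsilon$ slightly perturbed, or simply note $\epsilon<\reachmin/2$ and replace $\epsilon$ by any $\epsilon'\in(\epsilon,\reachmin/2)$ at the end. I would state this convention explicitly.

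\emph{Connectivity.} Each ball $\euclideanball{D}(y_i;\reachmin/2)$ is connected, and it contains $y_i\in\truemanifold$. The plan is to show that $\truemanifold\subseteq\domain$ and then use that $\truemanifold$ is connected.
\begin{enumerate}
\item First, $\truemanifold\subseteq\domain$: every $\point\in\truemanifold$ lies within $\epsilon<\reachmin/2$ of some $y_i$.
\item Then $\domain=\truemanifold\cup\bigcup_i \euclideanball{D}(y_i;\reachmin/2)$. This is a union of the connected set $\truemanifold$ (assumed connected in \cref{sec:prelim}) with connected balls, each of which meets $\truemanifold$ (it contains $y_i$).
\item By the standard lemma that a union of connected sets each intersecting a fixed connected set is connected, $\domain$ is connected.
\end{enumerate}

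None of these steps is a real obstacle; the only delicate point is the open/closed ball boundary case in the tube inclusion, handled as above. Note that the reach plays no role beyond ensuring $\epsilon<\reachmin/2$.
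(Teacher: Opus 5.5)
Your proposal is correct and follows essentially the same route as the paper. The tube inclusion comes from the $\epsilon$-net property plus the triangle inequality, and connectivity comes from $\truemanifold\subseteq\domain$ together with connectedness of $\truemanifold$; you cite the union-of-connected-sets lemma where the paper concatenates explicit paths, and you rightly observe that uniqueness of the nearest point, which the paper invokes, is not needed.

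On the boundary case, which the paper glosses over by passing directly from $\|x-y\|\le\reachmin/2$ to membership in the ball, your second remedy does not reach the stated claim. Passing to $\epsilon'>\epsilon$ only yields the open tube $\{x:\dist(x,\truemanifold)<\reachmin/2-\epsilon\}$. That open tube matches the paper's convention for $\cT_r$ in \cref{sec:prelim}, but it is not the closed tube in the statement.

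To place the closed tube inside open balls you need the triangle inequality to be strict. Strictness does follow from the reach, so the reach is not idle here. If equality held with both terms nonzero, $y_i-\point$ would be a nonzero normal vector at $\point$ of length at most $\epsilon<\reachmin$. Then $\point$ would be the unique nearest point on $\truemanifold$ of $y_i$, which is impossible because $y_i$ itself lies on $\truemanifold$.
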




\noindent Please find the proof in \Cref{proof_lemma_connectivity_U}. In the rest of this section, we will justify the construction of the function class $\functionclass$ (\ref{eq:distance_function_class}) by showing every member function $\eta \in \functionclass$ is ``distance-like''; Moreover, on a subset of $\domain$, $\eta$ is exactly a distance function to some embedded submanifold.
\begin{itemize}
    \item We first consider a superset of $\c{D}^k_\textbf{L}$: With only the Eikonal equation and the non-escape boundary condition (first and second lines in \cref{eq:distance_function_class}), define
    \begin{tcolorbox}[colframe=white!,top=0pt,left=-4pt,right=0pt,bottom=10pt]
\begin{align}
    \c{D} := \{\eta \in \c{C}^\beta(\bar \domain) \mid \quad &\ \forall x \in \domain, \|\nabla \eta(x)\|^2 = 2\eta(x); \tag{Eikonal equation} \label{eqn_eikonal_D} \\
    &\ \exists \delta > 0, \forall x \in \partial \domain, \forall n\in \vec{n}(x), \nabla \eta(x) \cdot n > \delta\}, \tag{Boundary barrier} \label{eqn_boundary_U_D}
\end{align}
\end{tcolorbox}
where we recall the definition of the outward normal set $\vec n(x)$ in \cref{eqn_outward_normal_set}.
We show in \Cref{section_proof_distance_like_function} that all members of the above function class are distance-like functions:
For every $\eta \in \c{D}$, define $\hypothesismanifold = \{x\in \domain \mid \eta(x) = 0\}$. 
\begin{itemize}
    \item $\hypothesismanifold$ is a connected closed smooth embedded submanifold of $\bb{R}^D$;
    \item $\eta(x) = \frac{1}{2} d_\domain^2(x,\hypothesismanifold)$, where
\begin{align}
    d_\domain(x, \hypothesismanifold):= \inf\Bigl\{\mathrm{Length}(\alpha)\mid& \alpha:[0,1]\to \domain\ \text{absolutely continuous}, \notag \\ 
    & \alpha(0)=x,\ \alpha(1)\in \hypothesismanifold\Bigr\}. \label{eq:intrinsic-distance}
\end{align}
\end{itemize}
Further, consider the following open set (half-size to $\domain$)
\begin{equation} \label{eq:def_U_2}
    \domain_2 \;:=\; \bigcup_{i=1}^N \euclideanball{D}\Bigl(y_i;\frac{\reachmin}{4}\Bigr).
\end{equation}
\begin{itemize}
    \item We show that for $x \in \domain_2$, $d_\domain(x, \hypothesismanifold) \equiv \dist(x, \hypothesismanifold)$.
    \item Built on this result, and together with the feature ball lemma \citep[Lemma 1.1]{dey2006curve}, we show that for any $D$-dimensional ball $U \subseteq \domain_2$, $U \cap \hypothesismanifold$ has at most only one connected component.
\end{itemize}

\end{itemize}
We then show that $\hypothesismanifold$ can be locally represented as the graph of a function, as discussed in \Cref{section_local_representation_submanifold}. This is useful for our later derivations.
\begin{itemize}
    \item With the further anchoring constraint, rank constraint, and subspace angle constraint (third, fourth, fifth lines in \cref{eq:distance_function_class}), we show in \Cref{section_proof_local_graph_representation} that for every $\eta \in \functionclass$, the dimension of its zero set $\hypothesismanifold:= \{x\in \domain \mid \eta(x) = 0\}$ is $k$ and locally, it admits a representation as discussed in \Cref{section_local_representation_submanifold}, i.e. the graph of a function over a ball in $\bb{R}^k$. 
    \item With the smoothness constraint (last line in \cref{eq:distance_function_class}), we show in \Cref{section_regularity_graph_of_function} that the graph-of-function representation of $\hypothesismanifold$ has nice regularity properties; i.e., the derivatives of the corresponding local function are bounded in terms of operator norm up to order $\beta-1$.
\end{itemize}

\subsection{Distance-like function class restricted on $\domain$} \label{section_proof_distance_like_function}
\begin{lemma}[Global-in-time existence of gradient flow under (\ref{eqn_boundary_U_D})] \label{thm_existence_gradient_flow}
    Recall the definition of $\domain$ in \cref{eq:def_U}.
    For $\eta \in \c{D}$, consider the negative gradient flow
\begin{equation} \label{eqn_gradient_flow}
    \dot x(t)=-\nabla\eta(x(t)),\qquad x(0)=x_0\in \domain.
\end{equation}
Then a unique global solution exists and $x(t)\in \domain$ for all $t\ge 0$.
\end{lemma}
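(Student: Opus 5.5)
The plan is the standard ODE argument: local existence and uniqueness, an a priori confinement to $\overline\domain$, then continuation to all times. Since $\eta\in\cC^\beta(\overline\domain)$ with $\beta\ge2$, the field $-\nabla\eta$ is $\cC^1$, hence locally Lipschitz, on a neighborhood of $\overline\domain$. Here we extend $\eta$ to a $\cC^{1,1}$ function on an open set $\widetilde\domain\supset\overline\domain$ by Whitney extension; uniqueness only needs the local Lipschitz property. Picard--Lindelöf then gives a unique maximal solution on some $[0,T^\ast)$. Because $\overline\domain$ is compact (a finite union of closed balls), $\nabla\eta$ is bounded there by $L_1$. So once we know the trajectory stays in $\overline\domain$, it cannot blow up, and it extends to $T^\ast=\infty$ by the standard continuation criterion. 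The whole proof therefore reduces to showing that the trajectory never leaves $\domain$.

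For confinement, we use the barrier condition (Boundary barrier). Suppose $x(\cdot)$ first reaches $\partial\domain$ at a time $t_1>0$, so $x(t)\in\domain$ for $t<t_1$ and $x_1:=x(t_1)\in\partial\domain$. Since $x_1\notin\domain$, every ball $\euclideanball{D}(y_i;\reachmin/2)$ misses $x_1$. Let $I$ be the set of indices $i$ with $\|x_1-y_i\|=\reachmin/2$, and for $i\in I$ set $n_i=(x_1-y_i)/\|x_1-y_i\|\in\vec n(x_1)$.

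The quantity to track is $g(t):=\min_i \big(\|x(t)-y_i\|^2-(\reachmin/2)^2\big)$. We have $g(t)<0$ on $[0,t_1)$ and $g(t_1)=0$. Near $t_1$, the minimum is attained only by indices in $I$, since the other indices satisfy $\|x_1-y_i\|>\reachmin/2$ strictly. For each $i\in I$ the derivative is
\[
\tfrac{d}{dt}\|x(t)-y_i\|^2\big|_{t_1}=-2\langle\nabla\eta(x_1),x_1-y_i\rangle=-\reachmin\langle\nabla\eta(x_1),n_i\rangle\le-\reachmin\delta<0 .
\]
So each function $\|x(t)-y_i\|^2-(\reachmin/2)^2$ with $i\in I$ vanishes at $t_1$ with strictly negative derivative, and is therefore strictly positive on some interval $(t_1-\epsilon,t_1)$. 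Hence $g>0$ just before $t_1$, i.e.\ $x(t)\notin\domain$ for those times, which contradicts $x(t)\in\domain$ for $t<t_1$.

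The same argument handles $t_1=0$: $x_0\in\domain$ is open, so the trajectory starts in the interior. It follows that the exit time $\sup\{t: x([0,t])\subset\domain\}$ equals $T^\ast$. The bounded-velocity continuation argument then gives $T^\ast=\infty$.

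The main obstacle is the precise first-exit argument at corner points of $\partial\domain$, where several balls meet and there may be many outward normals. The proof handles this by working with the finite minimum $g$ over only the active indices. The point is that the barrier condition gives a strictly positive margin $\delta$ uniformly over all $n\in\vec n(x_1)$, so every active ball yields a strict sign of the derivative; the inactive balls are irrelevant near $t_1$ by continuity.

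A minor technical point is that the flow is only defined via the field on $\overline\domain$. The extension to $\widetilde\domain$ is needed only so that the solution exists up to and slightly past $t_1$, which makes the contradiction meaningful.
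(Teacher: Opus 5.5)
Your proposal is correct and follows essentially the same route as the paper: a first-hitting-time contradiction in which the barrier condition forces each active distance $\|x(t)-y_i\|$ to exceed $\reachmin/2$ just before the hitting time. Your version is slightly more complete than the paper's. It treats all active indices at once (and the inactive ones by continuity), whereas the paper works with a single active index. It also supplies the local existence, uniqueness and continuation steps that the paper leaves implicit; note only that for $\eta\in\mathcal{D}$ the bound on $\nabla\eta$ comes from compactness of $\overline{\domain}$, not from $L_1$.
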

Please find the proof in \Cref{proof_thm_existence_gradient_flow}.
Built on the above result, we can identify the manifold structure of the zero set of any $\eta \in \c{D}$. 

\begin{lemma} \label{lemma_manifold_structure}
    For any function $\eta \in \c{D}$, define $\hypothesismanifold := \{x\in \domain\mid\eta(x) = 0\}$.
    We have that $\hypothesismanifold \neq \emptyset$ and it is a closed connected ${\c{C}^{\beta-1}}$ smooth embedded submanifold of $\bb{R}^D$.
\end{lemma}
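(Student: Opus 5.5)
The proof will run the negative gradient flow \eqref{eqn_gradient_flow}, which exists globally and stays in $\domain$ by \Cref{thm_existence_gradient_flow}, and read off the structure of $\hypothesismanifold$ from the Eikonal equation.

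\emph{Step 1: the zero set is nonempty.} Along the flow we have $\frac{\ud}{\ud t}\eta(x(t))=-\|\nabla\eta\|^2=-2\eta(x(t))$, so $\eta(x(t))=e^{-2t}\eta(x_0)$. The Eikonal equation forces $\eta\ge0$. The speed is $\|\dot x\|=\sqrt{2\eta(x(t))}=\sqrt{2\eta(x_0)}\,e^{-t}$, which is integrable, so the trajectory has finite length $\sqrt{2\eta(x_0)}$. Hence $x(t)$ converges to a limit $x_\infty\in\overline\domain$ with $\eta(x_\infty)=0$. We need $x_\infty\in\domain$ and not on $\partial\domain$. The (Boundary barrier) gives $\langle\nabla\eta,n\rangle>\delta$ at the boundary, so $\nabla\eta\neq0$ and therefore $\eta>0$ on $\partial\domain$. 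Thus $x_\infty\in\domain$ and $\hypothesismanifold\neq\emptyset$. This also shows $\hypothesismanifold$ is closed in $\R^D$: it is closed in the compact set $\overline\domain$ and disjoint from $\partial\domain$.

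\emph{Step 2: connectedness.} The limit map $r(x_0):=x_\infty$ is a retraction of $\domain$ onto $\hypothesismanifold$, since points of $\hypothesismanifold$ are fixed. I will show $r$ is continuous. The flow is continuous in $x_0$ on finite time intervals, and the tail $\|x(t)-x_\infty\|\le\sqrt{2\eta(x(t))}=\sqrt{2\eta(x_0)}e^{-t}$ is uniformly small locally in $x_0$. So $r$ is a uniform limit of continuous maps, and $H(x,s)=x(s/(1-s))$ with $H(x,1)=r(x)$ is a deformation retraction. Since $\domain$ is connected (by \Cref{lemma_connectivity_U}, or directly, as every ball $\euclideanball{D}(y_i;\reachmin/2)$ meets the connected set containing $\truemanifold$), its continuous image $\hypothesismanifold=r(\domain)$ is connected.

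\emph{Step 3: manifold structure (main obstacle).} The Eikonal identity $\|\nabla\eta\|^2=2\eta$ is $C^{\beta-1}$. Differentiating it twice at a zero $p$, where $\nabla\eta(p)=0$, gives $(\nabla^2\eta(p))^2=\nabla^2\eta(p)$. So the Hessian $P:=\nabla^2\eta(p)$ is an orthogonal projection. Set $m:=\rank P$. I would then apply the implicit function theorem to the $C^{\beta-1}$ map $F(x)=P\nabla\eta(x)$, restricted to the range of $P$; its differential at $p$ is $P$ itself, which is invertible on $\mathrm{range}\,P$. This produces a $C^{\beta-1}$ submanifold $S$ of dimension $D-m$ through $p$ that contains $\hypothesismanifold$ near $p$.

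The delicate part is the reverse inclusion $S\subseteq\hypothesismanifold$ locally. For it I would use a Taylor expansion: $\eta(x)=\tfrac12\|P(x-p)\|^2+O(\|x-p\|^3)$, combined with the Eikonal equation along gradient-flow lines. On $\ker P$-directions the flow moves only to higher order, and the retraction $r$ maps a neighborhood onto $\hypothesismanifold$ with tangent directions spanning $\ker P$. This shows $\hypothesismanifold$ has dimension $D-m$ near $p$ and coincides with $S$, which is the Morse--Bott property. Since $\hypothesismanifold$ is connected and $m$ is locally constant on it (by lower semicontinuity of rank combined with the projection property), $m$ is constant. Therefore $\hypothesismanifold$ is a closed, connected, embedded $C^{\beta-1}$ submanifold.
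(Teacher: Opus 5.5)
\textbf{Verdict: genuine gap in Step 3, at the reverse inclusion $S\subseteq\hypothesismanifold$.}

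Your Steps 1 and 2 are sound, and somewhat cleaner than the paper's. The paper gets convergence of the flow from the PL inequality and connectedness from a cited deformation-retraction result. Your explicit bound $\|x(t)-x_\infty\|\le\sqrt{2\eta(x_0)}\,e^{-t}$ gives both convergence and continuity of the retraction $r$ directly, and ``continuous image of a connected set is connected'' then suffices.

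The gap is in Step 3, which carries the real content of the lemma. The paper handles this step by citing the Rebjock--Boumal equivalence (PL $\Leftrightarrow$ Morse--Bott), which says the minimizer set is locally a $C^{\beta-1}$ submanifold. Your implicit-function-theorem construction correctly produces a $(D-m)$-dimensional $C^{\beta-1}$ submanifold $S\ni p$ with $\hypothesismanifold$ contained in $S$ near $p$. The reverse inclusion is not established, however. ``The flow moves only to higher order on $\ker P$-directions'' and ``tangent directions spanning $\ker P$'' only say that the tangent cone of $\hypothesismanifold$ at $p$ is large. A subset of $S$ can have tangent cone spanning $T_pS=\ker P$ without containing any open piece of $S$, for example a finite union of curves through $p$. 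Your estimate $\|r(x)-x\|\le\sqrt{2\eta(x)}=o(\|x-p\|)$ on $p+\ker P$ could be upgraded to ``$r$ covers a neighbourhood of $p$ in $S$'' only by an extra degree/Brouwer argument, which you do not give.

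There is a much shorter way to close this step. Differentiating the Eikonal identity once, which is legitimate for $\eta\in C^2$, gives $\nabla^2\eta(x)\,\nabla\eta(x)=\nabla\eta(x)$ on $\domain$. For $q\in S$ near $p$ you have $P\nabla\eta(q)=0$, so $w:=\nabla\eta(q)\in\ker P$. Hence
\[
w=\nabla^2\eta(q)\,w=\bigl(\nabla^2\eta(q)-P\bigr)w,
\qquad\text{so}\qquad
\|w\|\le\|\nabla^2\eta(q)-P\|_{\op}\,\|w\|.
\]
By continuity of the Hessian, $\|\nabla^2\eta(q)-P\|_{\op}<1$ for $q$ close to $p$. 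This forces $w=0$, and therefore $\eta(q)=\tfrac12\|w\|^2=0$. So $S\subseteq\hypothesismanifold$ locally, and local constancy of $m$ then follows from this local description.

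A minor point: ``differentiating twice'' and the $O(\|x-p\|^3)$ remainder both need $\eta\in C^3$. For $\beta=2$, compare the second-order Taylor expansions of $\|\nabla\eta\|^2$ and $2\eta$ at $p$, with $o(\|x-p\|^2)$ remainders. This gives $\|Pv\|^2=v^\top Pv$ for all $v$, hence $P^2=P$.
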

Please find the proof in \Cref{proof_lemma_manifold_structure}.
Note that we cannot determine the dimension of $\hypothesismanifold$ with only the requirements in $\c{D}$, and further assumptions like the rank constraint (fourth line in \cref{eq:distance_function_class}) are needed for that purpose.
\begin{theorem}[Classical eikonal solution equals the distance to $\hypothesismanifold$]\label{thm:eikonal-distance}
Recall the definition of $\domain$ in \cref{eq:def_U}. 
For any $\eta \in \c{D}$, define $\hypothesismanifold := \{x \in \domain \mid \eta(x) = 0\} $, which from \Cref{lemma_manifold_structure} we know is an embedded smooth submanifold. 
Recall the definition of $d_\domain(\cdot, \hypothesismanifold)$ in \cref{eq:intrinsic-distance}. We have
\[
    \eta(x)=\frac12\, d_\domain(x,\hypothesismanifold)^2\qquad \forall x\in \domain.
\]
\end{theorem}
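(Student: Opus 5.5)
We prove the two inequalities $\eta \le \tfrac12 d_\domain^2$ and $\eta \ge \tfrac12 d_\domain^2$ separately. Throughout we work with $u:=\sqrt{2\eta}$. Note that $\eta\ge 0$ on $\domain$, since $2\eta=\|\nabla\eta\|^2$. On the open set $\{\eta>0\}$, the function $u$ is $\c{C}^1$, and the Eikonal equation gives $\|\nabla u\|=\|\nabla\eta\|/u=1$.

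\emph{Upper bound ($u\le d_\domain$).} Fix $x\in\domain$ and an absolutely continuous path $\alpha:[0,1]\to\domain$ with $\alpha(0)=x$ and $\alpha(1)\in\hypothesismanifold$. The function $u$ is $1$-Lipschitz along any path inside $\domain$. On $\{\eta>0\}$ this follows from $\|\nabla u\|=1$. Across the zero set, $u$ is still locally Lipschitz with constant $1$: since $\eta\ge0$ is $\c{C}^2$ and vanishes on $\hypothesismanifold$, we have $\tfrac{d}{ds}u(\alpha(s))\le\|\alpha'(s)\|$ in the sense of a.e.\ derivatives of the absolutely continuous function $u\circ\alpha$. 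To justify this, we may use $\tfrac{d}{ds}\sqrt{2\eta(\alpha)}$ where $\eta>0$, and on the set where $u\circ\alpha=0$ the derivative vanishes a.e. Integrating gives $u(x)=u(x)-u(\alpha(1))\le \mathrm{Length}(\alpha)$. Taking the infimum over such paths yields $u\le d_\domain(\cdot,\hypothesismanifold)$.

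\emph{Lower bound ($u\ge d_\domain$).} Here we use the gradient flow of \Cref{thm_existence_gradient_flow}. Start at $x_0\in\domain$ with $\eta(x_0)>0$, and run $\dot x=-\nabla\eta(x)$. By that lemma, the trajectory stays in $\domain$ for all time. While $\eta>0$, we compute
\[
\tfrac{d}{dt}\eta(x(t))=-\|\nabla\eta\|^2=-2\eta,
\]
so $\eta(x(t))=\eta(x_0)e^{-2t}$. Moreover, $\|\dot x\|=\|\nabla\eta\|=\sqrt{2\eta(x(t))}=u(x_0)e^{-t}$. Hence the total length of the trajectory is $\int_0^\infty u(x_0)e^{-t}\,dt=u(x_0)<\infty$. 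It follows that $x(t)$ is Cauchy and converges to a limit $x_\infty$. Since $\eta(x_\infty)=0$, the point $x_\infty$ lies in the zero set.

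We must check that $x_\infty\in\domain$ rather than merely in $\bar\domain$. This follows from the non-escape barrier: on $\partial\domain$ we have $\eta\ge$ some positive value, because $\nabla\eta\neq0$ there. Hence $x_\infty\notin\partial\domain$, and $x_\infty\in\hypothesismanifold$. Reparametrizing the trajectory on $[0,1]$ gives an admissible absolutely continuous path in $\domain$ (its closure point $x_\infty$ is in $\domain$) of length $u(x_0)$. Therefore $d_\domain(x_0,\hypothesismanifold)\le u(x_0)$. For $x_0$ with $\eta(x_0)=0$ both sides vanish.

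\emph{Main obstacle.} The delicate step is the lower bound. It requires two facts. First, the flow must never leave $\domain$; this is supplied by \Cref{thm_existence_gradient_flow}, which uses the boundary barrier. Second, the limit point must lie in $\domain$ and not on $\partial\domain$. I would handle the second fact using the strict inequality $\langle\nabla\eta,n\rangle\ge\delta>0$ on $\partial\domain$. That inequality forces $\|\nabla\eta\|\ge\delta$, hence $\eta\ge\delta^2/2$ on $\partial\domain$. Since $\eta(x_\infty)=0$ by continuity, this excludes $x_\infty\in\partial\domain$. A secondary technical point is the a.e.\ chain-rule argument for $u$ near $\hypothesismanifold$ in the upper bound. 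Alternatively, one can approximate by $\sqrt{2\eta+\epsilon}$, whose gradient has norm $\|\nabla\eta\|/\sqrt{2\eta+\epsilon}\le1$ everywhere, and let $\epsilon\to0$; this avoids the issue cleanly.
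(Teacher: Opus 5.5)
Your proof is correct and follows essentially the same route as the paper. The upper bound $\sqrt{2\eta}\le d_{\mathbb{U}}$ comes from integrating $\|\nabla\sqrt{2\eta}\|=1$ along an admissible path, and the reverse bound from descending along $-\nabla$ to the zero set with path length exactly $\sqrt{2\eta(x)}$. Your $\eta$-gradient flow is just a time-reparametrization of the paper's unit-speed characteristics $\gamma_x'=-\nabla\rho$; it has the small advantage of being defined across $\mathcal{M}_\eta$, with global existence in $\mathbb{U}$ supplied directly by the gradient-flow lemma. Your explicit treatment of the zero set in the upper bound, and of $x_\infty\notin\partial\mathbb{U}$ via $\eta\ge\delta^2/2$ on $\partial\mathbb{U}$, fills in steps the paper passes over quickly.
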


Please find the proof in \Cref{proof_thm:eikonal-distance}.
Moreover, we show that on $\domain_2$, a smaller neighborhood of $Y_n$, $d_\domain(\cdot, \hypothesismanifold)$ identifies with $\dist(\cdot, \hypothesismanifold)$.

\begin{lemma} \label{lemma_distance_on_U_h}
    On $\domain_{2}$, we have $d_\domain(\cdot,\hypothesismanifold) = \dist(\cdot, \hypothesismanifold)$.
\end{lemma}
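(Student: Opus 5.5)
The claim is that on $\domain_2$ the intrinsic path distance inside $\domain$ equals the Euclidean distance to $\hypothesismanifold$. Since $d_\domain(x,\hypothesismanifold)\ge \dist(x,\hypothesismanifold)$ always holds, it suffices to show the reverse inequality for every $x\in\domain_2$. The plan is to exhibit, for each such $x$, a point of $\hypothesismanifold$ that is reachable from $x$ by a straight segment lying inside $\domain$ and whose length equals $\dist(x,\hypothesismanifold)$.

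\emph{Step 1: a point of $\hypothesismanifold$ lies close to every anchor.} Fix $x\in\domain_2$ and pick $y_i$ with $\|x-y_i\|<\reachmin/4$. By anchoring (or, within $\c{D}$ alone, by running the gradient flow of \Cref{thm_existence_gradient_flow} from $y_i$), we get a point of $\hypothesismanifold$ near $y_i$. In $\c{D}^k_{\mathbf L}$ this is simply $y_i\in\hypothesismanifold$, since $\eta(y_i)=0$.

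\emph{Step 2: the Euclidean nearest point is reachable.} Let $z\in\overline{\hypothesismanifold}$ attain $\dist(x,\hypothesismanifold)$; it exists because $\hypothesismanifold$ is closed by \Cref{lemma_manifold_structure}. Then $\|x-z\|\le\|x-y_i\|<\reachmin/4$. Every point $w$ on the segment $[x,z]$ satisfies
\[
\|w-y_i\|\le \|w-x\|+\|x-y_i\|<\reachmin/4+\reachmin/4=\reachmin/2,
\]
so $[x,z]\subset \euclideanball{D}(y_i;\reachmin/2)\subseteq\domain$. Hence $z\in\domain$, so $z\in\hypothesismanifold$, and the segment is an admissible curve in \eqref{eq:intrinsic-distance}. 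This gives $d_\domain(x,\hypothesismanifold)\le\|x-z\|=\dist(x,\hypothesismanifold)$.

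\emph{Main obstacle.} The delicate point is the anchoring step when working only in the larger class $\c{D}$: there is no guarantee that a zero of $\eta$ lies within $\reachmin/4$ of $y_i$. In that case I would use \Cref{thm:eikonal-distance} together with the Eikonal bound. The flow $\dot x=-\nabla\eta$ has speed $\sqrt{2\eta}$, and $\eta$ decays along it like $(\sqrt{\eta_0}-t/\sqrt2)^2$. The path therefore has length exactly $d_\domain(x,\hypothesismanifold)$ and ends on $\hypothesismanifold$, which reduces the question to bounding $\eta(y_i)$. For the lemma as used here, however, anchoring makes this immediate, so I would state the proof for $\eta\in\c{D}^k_{\mathbf L}$, i.e.\ with the anchoring constraint.
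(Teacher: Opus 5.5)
Your proof is correct and is essentially the paper's argument. The anchor $y_i\in\hypothesismanifold$ forces $\dist(x,\hypothesismanifold)<\reachmin/4$, so the segment from $x$ to a nearest point stays in the convex ball $\euclideanball{D}(y_i;\reachmin/2)\subseteq\domain$. You are in fact slightly more careful than the paper, which writes $\pi_\eta(x)$ before uniqueness of the projection is known and leaves both $d_\domain\ge\dist$ and $z\in\domain$ implicit.

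Like the paper, you genuinely rely on the anchoring constraint. One slip in your side remark on the class $\c{D}$: the decay $(\sqrt{\eta_0}-t/\sqrt{2})^2$ holds along the unit-speed flow $\dot x=-\nabla\sqrt{2\eta}$, whereas along $\dot x=-\nabla\eta$ one has $\eta=\eta_0 e^{-2t}$; the path length $\sqrt{2\eta_0}$ is the same in both cases.
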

\begin{proof}
    For any point $x \in \domain_{2}$, by definition, there exists $y \in Y_n$ such that $\|x-y\| \leq \reachmin/4$.
We clearly have $\dist(x, \hypothesismanifold) \leq \|x - y\| \leq \reachmin/4$, since we also have $y \in \hypothesismanifold$ (anchoring constraint). Consequently, we have 
\begin{equation}
    \pi_\eta(x) \in \euclideanball{D}(y; \frac{\reachmin}{2}) \subseteq \domain.
\end{equation}
Now both $x$ and $\pi_\eta(x)$ are in $\euclideanball{D}(y; \frac{\reachmin}{2}) \subseteq \domain$, and note that $\euclideanball{D}(y; \frac{\reachmin}{2})$ is a convex set. So the whole line segment
between $x$ and $\pi_\eta(x)$ is in $\euclideanball{D}(y; \frac{\reachmin}{2}) \subseteq \domain$. Consequently, $d_\domain(x,\hypothesismanifold) = d(x, \hypothesismanifold)$ for any point $x \in \domain_{2}$.
\end{proof}

\begin{lemma} \label{lemma_one_connected_component}
    For any open ball $U \subseteq \domain_2$ and any $\eta \in \c{D}$, we have that $U \cap \hypothesismanifold$ has at most one connected component.
\end{lemma}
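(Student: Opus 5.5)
The plan is to argue by contradiction, using the feature ball lemma \citep[Lemma 1.1]{dey2006curve} together with \Cref{lemma_distance_on_U_h} and \Cref{thm:eikonal-distance}. The key input is that on $\domain_2$ the function $\eta$ is smooth and equals $\frac12\dist(\cdot,\hypothesismanifold)^2$. Hence $\nabla\eta(x)=x-\pi_\eta(x)$ whenever the nearest point is unique, and this gradient is single-valued and continuous. So $\domain_2$ contains no point of the medial axis of $\hypothesismanifold$: a point with two or more nearest points on $\hypothesismanifold$ would make $\dist^2(\cdot,\hypothesismanifold)$ non-differentiable there. That contradicts $\eta\in\c{C}^\beta$ together with the identity $\eta=\frac12\dist^2$ on the open set $\domain_2$.

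More precisely, fix $x\in\domain_2$ and suppose $x$ has two distinct nearest points $p_1,p_2\in\hypothesismanifold$ at distance $d$. The one-sided directional derivatives of $\frac12\dist^2(\cdot,\hypothesismanifold)$ at $x$ are computed by the standard min-of-smooth-functions formula: in direction $v$ the derivative is $\min_i\langle x-p_i,v\rangle$. This directional derivative is not linear in $v$ when $p_1\ne p_2$, so differentiability fails. We therefore get a contradiction with \Cref{lemma_distance_on_U_h}, which identifies $\eta$ with $\frac12\dist^2$ there. A degenerate case is $x\in\hypothesismanifold$ itself, where $d=0$ and uniqueness is trivial. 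Since $\hypothesismanifold$ is closed (\Cref{lemma_manifold_structure}), nearest points exist, at least locally within $\domain$. So every $x\in\domain_2$ has a unique nearest point, i.e.\ $\domain_2\cap\mathrm{MedialAxis}(\hypothesismanifold)=\emptyset$.

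Next we invoke the feature ball lemma. If a closed ball $B$ intersects a smooth closed submanifold $\hypothesismanifold$ in a set that is not a single topological ball (in particular, in two or more connected components), then $B$ contains a point of the medial axis of $\hypothesismanifold$. Given an open ball $U\subseteq\domain_2$ with $U\cap\hypothesismanifold$ having at least two components, I would shrink it slightly to a closed ball $\bar U'\subset U$ that still meets two components. This is possible by taking points in distinct components and a slightly smaller concentric ball containing both. Applying the lemma then yields a medial axis point in $U'\subseteq\domain_2$, which contradicts the previous step.

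The main obstacle is making the feature ball lemma applicable. It is stated for smooth closed curves/surfaces in $\R^D$, with the medial axis taken with respect to the whole manifold. Here, however, $\hypothesismanifold$ lies in $\domain$ and our distance identification holds only on $\domain_2$. I would check that the lemma's proof is local: it constructs a medial axis point by shrinking or moving the ball inside $B$ while it touches $\hypothesismanifold$ in two places. With this, only nearest points from $B\cap\hypothesismanifold$ and the ball $B\subseteq\domain_2$ matter, and the nearest points of points in $\domain_2$ lie in $\domain$ by the argument in \Cref{lemma_distance_on_U_h}. If needed, I would replace the citation with a direct argument. One can take the center $c$ of $U'$ and flow toward whichever component is closest, and use continuity of $\pi_\eta$ on $\domain_2$ (a consequence of uniqueness plus compactness). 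Connectedness of $U'$ then forces the image $\pi_\eta(U')$ to be connected while meeting both components, and I would derive the contradiction from there.
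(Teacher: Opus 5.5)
Your proposal follows the paper's proof: argue by contradiction, use the feature ball lemma of \citep[Lemma 1.1]{dey2006curve} to obtain a medial-axis point in $U$, and contradict the differentiability of $\eta=\frac12\dist^2(\cdot,\hypothesismanifold)$ on $\domain_2$ given by \Cref{lemma_distance_on_U_h}. Your added details are sound; these are the directional-derivative argument showing that two nearest points destroy differentiability, and the shrinking to a closed ball. One caveat: your optional fallback via connectedness of $\pi_\eta(U')$ would need more work, because $\pi_\eta(U')$ need not stay inside $U'$.
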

\begin{proof}
    We prove by contradiction. Suppose that there exists an open ball $U \subseteq \domain_2$ such that $U \cap \hypothesismanifold$ has at least two connected components.
    Use $k$ to denote the dimension of $\hypothesismanifold$.
    Clearly, $U$ intersects with $\hypothesismanifold$ at least two points. Moreover, since $U \cap \hypothesismanifold$ is not connected, it is not homeomorphic to a ball in $\bb{R}^k$. By the feature ball lemma \citep[Lemma 1.1]{dey2006curve}, there exists a medial axis point in $U$.
    However, since $U \subseteq \domain_2$, by \Cref{lemma_distance_on_U_h}, $\eta = \frac{1}{2}\dist^2(\cdot, \hypothesismanifold)$ is non-differentiable at this medial axis point (since the projection onto $\hypothesismanifold$ is not unique). However, since for any $\eta \in \c{D}$, $\eta \in \c{C}^{\beta}(\domain)$, we have a contradiction. 
\end{proof}


\subsection{Graph-of-function Representation of $\hypothesismanifold$ on a Local Patch} \label{section_proof_local_graph_representation}
\Cref{lemma_manifold_structure} shows that, for any $\eta \in \c{D}$ (a superset of $\functionclass$), $\hypothesismanifold := \{x \in \domain \mid \eta(x) = 0 \}$ is a smooth submanifold.
The rank constraint (fourth lines in \cref{eq:distance_function_class}) specifies the dimension of the zero set for $\eta \in \functionclass$.
\begin{lemma}
    For every $\eta \in \functionclass$, its zero set $\hypothesismanifold := \{x \in \domain \mid \eta(x) = 0 \}$ is a $k$-dimensional connected closed $\c{C}^{\beta-1}$ smooth embedded submanifold.
\end{lemma}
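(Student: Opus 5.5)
We take from \Cref{lemma_manifold_structure} that $\hypothesismanifold$ is a nonempty, closed, connected $\c{C}^{\beta-1}$ embedded submanifold, since $\functionclass\subseteq\c{D}$. What remains is to show that its dimension is $k$. We will compute the dimension locally at an anchor point and then use connectedness to propagate it.

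\textbf{Step 1 (dimension at an anchor).} Fix $y_i\in Y_N$. By the anchoring constraint, $y_i\in\hypothesismanifold$. Since $\eta\ge 0$ (by \eqref{eqn_eikonal_D}, $2\eta=\|\nabla\eta\|^2$) and $\eta(y_i)=0$, the point $y_i$ is a minimizer, so $\nabla\eta(y_i)=0$. We will show $T_{y_i}\hypothesismanifold=\ker\nabla^2\eta(y_i)$.

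For the inclusion $\subseteq$: for any $C^1$ curve $\gamma$ in $\hypothesismanifold$ through $y_i$, we have $\nabla\eta(\gamma(s))\equiv 0$. Differentiating at $s=0$ gives $\nabla^2\eta(y_i)\gamma'(0)=0$.

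For the inclusion $\supseteq$: differentiate the Eikonal identity twice at $y_i$. This gives $\nabla^2\eta(y_i)^2=\nabla^2\eta(y_i)$, so the Hessian is an orthogonal projector, of rank $D-k$ by the rank constraint. Every point of $\hypothesismanifold$ is a critical point, and we use the Morse--Bott structure already exploited in the proof of \Cref{lemma_manifold_structure}. Concretely, the gradient flow of \Cref{thm_existence_gradient_flow} converges to a point of $\hypothesismanifold$, and its limit map restricted to $y_i+\ker\nabla^2\eta(y_i)$ near $y_i$ is a local $C^1$ injection into $\hypothesismanifold$ with differential the identity on the kernel. Alternatively, one can apply the implicit function theorem to $\nabla\eta=0$ restricted to the range of the projector. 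Either route gives $\dim T_{y_i}\hypothesismanifold\ge k$, so $\dim\hypothesismanifold=k$ near $y_i$.

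\textbf{Step 2 (constancy of dimension).} Differentiating the Eikonal identity twice also gives, at every $x\in\hypothesismanifold$, that $\nabla^2\eta(x)$ is an orthogonal projector. Its rank is therefore $\operatorname{tr}\nabla^2\eta(x)$, which is continuous and integer-valued, hence locally constant on the connected set $\hypothesismanifold$. It equals $D-k$ at $y_i$, so it equals $D-k$ everywhere. Applying the Step 1 argument at each point of $\hypothesismanifold$ then shows $\hypothesismanifold$ has dimension $k$ everywhere. Alternatively, one can simply note that the dimension of a connected manifold is constant. The Angle constraint is not needed here; it will be used later for the graph representation.

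\textbf{Main obstacle.} The delicate point is the inclusion $\ker\nabla^2\eta(y_i)\subseteq T_{y_i}\hypothesismanifold$, which shows the zero set is not thinner than the Hessian kernel. I would first try the implicit function theorem on the map $x\mapsto P\nabla\eta(x)$, where $P=\nabla^2\eta(y_i)$. Its zero set is a $k$-dimensional $\c{C}^{\beta-1}$ graph over the kernel. I would then argue that on this graph $\nabla\eta$ actually vanishes: its component in the kernel directions is controlled by the Eikonal relation, since $\|\nabla\eta\|^2=2\eta$ and, by Taylor expansion, $\eta$ is quadratic in range directions only. If that fails, I would fall back on the Morse--Bott conclusion already embedded in \Cref{lemma_manifold_structure}, namely that the critical manifold's tangent space equals the Hessian kernel.
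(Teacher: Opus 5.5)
Your proposal is correct and is essentially the paper's argument. The paper gets this lemma by combining \Cref{lemma_manifold_structure} with the rank constraint, implicitly relying on the same Morse--Bott identification $T_{y_i}\hypothesismanifold=\ker\nabla^2\eta(y_i)$ (from the PL result \citep{rebjock2024fast}) that you fall back on, with connectedness carrying the dimension $k$ to every point. If you want your implicit-function route to stand on its own, the Taylor heuristic will not close it; instead use the once-differentiated eikonal identity $\nabla^2\eta\,\nabla\eta=\nabla\eta$, which on $\{P\nabla\eta=0\}$ gives $\|\nabla\eta(x)\|\le\|\nabla^2\eta(x)-P\|_{\op}\,\|\nabla\eta(x)\|$ and hence $\nabla\eta=0$ (so $\eta=0$) near $y_i$ using only $\c{C}^2$ regularity, whereas your twice-differentiated identity in Step~2 needs $\c{C}^3$ (or a second-order Taylor comparison) when $\beta=2$.
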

Recall the extra anchoring constraint (third line in \cref{eq:distance_function_class}) in $\functionclass$.
According to \Cref{section_local_representation_submanifold}, in the neighborhood of every anchoring point $x_\uref \in Y_n$, we can represent $\hypothesismanifold$ locally as the graph of a function over a ball in the tangent space $T_{x_\uref} \hypothesismanifold\subseteq \bb{R}^k$. Please see \Cref{fig_candidate_manifold_score_function} for an example.

\begin{remark}
We highlight that this is a non-trivial result since we do not make assumptions on the reach of $\hypothesismanifold$: To establish the graph-of-function representation of $\hypothesismanifold$ in \Cref{def_graph_of_function}, we need to rule out the case where, for some $\eta \in \c{D}$, $\hypothesismanifold$ is almost self-intersecting. Since otherwise, it is possible that for any $d$-dimensional ball $U$ with a fixed radius, there exists some $\eta \in \c{D}$ where $U\cap \hypothesismanifold$ could have two disconnected components. 
While this worst case scenario can be naturally avoided by a global reach lower bound, we manage to exclude it in \Cref{lemma_one_connected_component} even without making such a strong reach assumption.    
\end{remark}


\begin{figure}[h]
    \centering
    \includegraphics[width=0.9\linewidth]{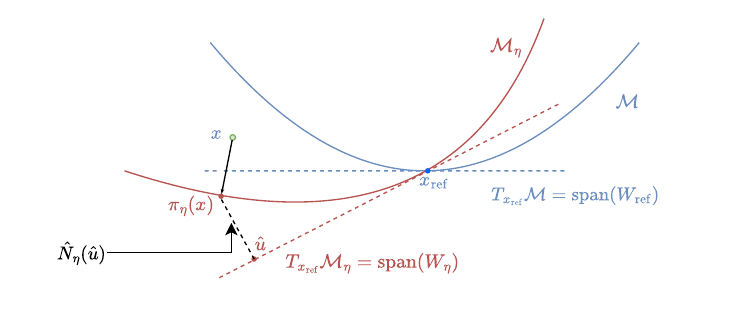}
    \caption{Understanding the hypothesis score function class $\{s_\eta\}$ in the local coordinate: i) pick a reference point $x_\uref \in \truemanifold$; ii) any $k$-dimensional $\c{C}^{\beta-1}$ submanifold $\hypothesismanifold$ passing $x_\uref$ can be parameterized by $[W_\eta, \hat N_\eta]$ in the sense that for all $\hat x \in \hypothesismanifold \cap \euclideanball{D}(x_\uref, h)$, there exists a unique coordinate $(\hat u, \hat N_\eta(\hat u))$ under the basis \((W_\eta, W_\eta^\perp)\); iii) for any $x \in \euclideanball{D}(x_\uref, h)$, the projection onto $\hypothesismanifold$ is unique, denoted by $\pi_\eta(x)$; iv) the score function indexed by $\eta$ can be written as $s_\eta(t, x) := - \frac{x - \pi_\eta(x)}{t}$ for $x \in \euclideanball{D}(x_\uref, h)$.}
    \label{fig_candidate_manifold_score_function}
\end{figure}
For each \(\eta\), let \(W_\eta\in \bb{R}^{d\times k}\) be a column-orthonormal matrix whose columns span the tangent space \(T_{x_\uref}\hypothesismanifold\). Let \(W_\eta^\perp\in \bb{R}^{d\times (d-k)}\) denote an orthonormal complement, and let \(\hat N_\eta:\bb{R}^k\to \bb{R}^{d-k}\) be a polynomial map of total degree at most \(\beta-1\). As discussed in \Cref{section_local_representation_submanifold}, for a sufficiently small chart neighborhood \(V\) around \(x_\uref\), every point \(\hat x\in \hypothesismanifold\cap V\) admits the representation\footnote{Since we assume that, for every $\eta \in \functionclass$, the operator norms of its derivatives are uniformly bounded, it follows that the size of $V$ is bounded below by a positive constant. We provide the details below.}
\begin{equation} \label{eqn_graph_of_function_hypothesis_basis}
    \hat x = x_\uref + W_\eta \hat u + W_\eta^\perp \hat N_\eta(\hat u),
\qquad \hat u\in \bb{R}^k.    
\end{equation}

The anchoring and tangency at \(x_\uref\) impose the normalization conditions (see \cref{eqn_normalizing_condition_N})
\begin{equation} \label{eqn_property_hat_N_eta}
    \hat N_\eta(0)=0
    \qquad\text{and}\qquad
    D[\hat N_\eta](0)=0,    
\end{equation}
where \(\hat u=0\) corresponds to the point \(x_\uref\) in the chosen chart. 

\subsubsection{Change of basis} \label{section_change_of_basis}
For the subsequent analysis, it is more convenient to re-express the same local patch around $x_\uref$ in the \emph{unknown} ground-truth basis \((W_\uref, W_\uref^\perp)\), where \(T_{x_\uref}\truemanifold=\mathrm{span}(W_\uref)\).

Accordingly, given a point \(\hat x\in \hypothesismanifold\) around $x_\uref$, we derive its coordinates \((u, N_\eta(u))\) under the ground-truth basis \((W_\uref, W_\uref^\perp)\) from its coordinates \((\hat u,\hat N_\eta(\hat u))\) under the hypothesis basis \((W_\eta, W_\eta^\perp)\). This change of coordinates implicitly defines a new function \(N_\eta:\bb{R}^k\to \bb{R}^{d-k}\), which will be the object used in our analysis.
Concretely, for any \(\hat x\in \hypothesismanifold\cap \euclideanball{D}(x_\uref,h)\), we can represent it under both bases
\begin{equation} \label{eqn_change_of_basis}
    \hat x = x_\uref + W_\eta \hat u + W_\eta^\perp \hat N_\eta(\hat u) = x_\uref + W_\uref u + W_\uref^\perp N_\eta(u).
\end{equation}
Define two functions $F:\bb{R}^k\to \bb{R}^{k}$ and $G:\bb{R}^k\to \bb{R}^{d-k}$
\begin{equation}
    F(\hat u) = W_\uref^\top W_\eta \hat u + W_\uref^\top W_\eta^\perp \hat N_\eta(\hat u) \text{ and } G(\hat u) = {W_\uref^\perp}^\top W_\eta \hat u + {W_\uref^\perp}^\top W_\eta^\perp \hat N_\eta(\hat u).
\end{equation}
Multiplying both sides of \cref{eqn_change_of_basis} by $W_\uref^\top$ and $(W^\perp_\uref)^\top$ yields two equations
\begin{equation} \label{eqn_inverse_function_theorem_equation}
    F(\hat u) = u \text{ and } G(\hat u) = N_\eta(u).
\end{equation}
We highlight that the subspace constraint (fifth line in \cref{eq:distance_function_class}) ensures that $F$ is invertible locally around $\hat u = 0$ ($\hat u = 0$ corresponds to the point $x_\uref$), and hence one can locally write
\begin{equation}
    N_\eta(u) = [G \circ F^{-1}] (u).
\end{equation}
We make the above derivation rigorous using the inverse function theorem. 
We highlight that $N_\eta$ is only used in the analysis. It is \emph{not} practically available as it involves $W_\uref$, which is unknown. 

\begin{figure}[h]
    \centering
    \includegraphics[width=0.9\linewidth]{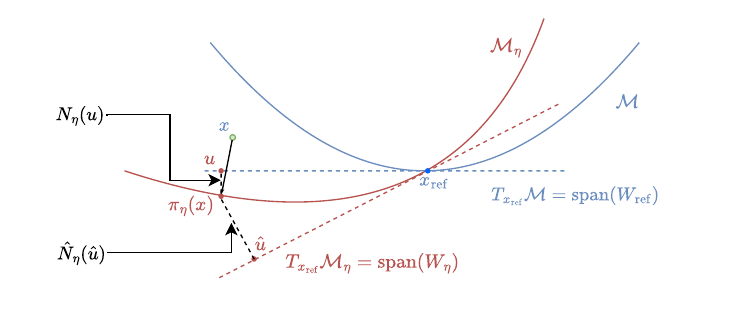}
    \caption{Change of basis. For any point $\hat x \in \hypothesismanifold \cap \euclideanball{D}(x_\uref, h)$, use $(\hat u, \hat N_\eta(\hat u))$ and $(u, N_\eta(u))$ to denote its coordinates under the bases \((W_\eta, W_\eta^\perp)\) and \((W_\uref, W_\uref^\perp)\) respectively. When $\sigma_{\min}(W_\eta^\top W_\uref) >0$, for a sufficiently small $h > 0$, one can identify $N_\eta$ with $\hat N_\eta$ up to a diffeomorphism.}
\end{figure}
\begin{theorem}[Change of basis] \label{thm_representation_in_W_ground_truth}
    Let $\eta \in \functionclass$.
    Recall the expressions of $F$ and $G$ in \cref{eqn_inverse_function_theorem_equation}.
    Under this condition, for a sufficiently small $h$ and for all $u \in B_k(0, h)$, 
    the function $F$ defined in \cref{eqn_inverse_function_theorem_equation} is invertible and the local coordinate function $N_\eta$ under the basis $[W, W^\perp]$ writes
    \begin{equation} \label{eqn_graph_of_function_ground_truth_basis}
        N_\eta(u) = [G\circ F^{-1}](u).
    \end{equation}
    Further, one has that 
    \begin{itemize}[leftmargin=*]
        \item The minimum eigenvalue of $D[F]$(0) is lower bounded by $\sqrt{1 - \sin^2(0.2\pi)} > 0$.
        \item The Jacobian of $N_\eta$ is given by
        \begin{equation} \label{eqn_Jacobian_N_psi}
            D[N_\eta](u) = D[G](F^{-1}(u)) D[F^{-1}](u) = D[G](F^{-1}(u)) (D[F](F^{-1}(u)))^{-1}, 
        \end{equation}
        \begin{equation}
            D[G](\hat u) = {W^\perp_\uref}^\top W_\eta + {W^\perp_\uref}^\top W_\eta^\perp D[\hat N_\eta](\hat u)
        \end{equation}
        \begin{equation}
            D[F](\hat u) = W^\top_\uref W_\eta + W^\top_\uref W_\eta^\perp D[\hat N_\eta](\hat u).
        \end{equation}
        Moreover, the first-order Taylor expansion of $N_\eta$ around $0$ is
        \begin{equation*}
            N_\eta(v) = N_\eta(0) + D[N_\eta](0) v + \c{O}(\|v\|^2),
        \end{equation*}
        where we have
        \begin{equation} \label{eqn_first_order_N_psi}
            N_\eta(0) = 0 \text{ and } D[N_\eta](0) = {W^\perp_\uref}^\top W_\eta \left(W_\uref^\top W_\eta\right)^{-1}.
        \end{equation}
        \item $N_\eta \in \c{C}^{\beta-1}$ and the operator norms of the derivatives of $N_\eta$ up to order $\beta - 1$ is bounded in $B_k(0, h)$.
    \end{itemize}
\end{theorem}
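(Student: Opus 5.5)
The approach is to apply the inverse function theorem to $F$ at $\hat u=0$ and then read off every claimed property from the composition $N_\eta=G\circ F^{-1}$. Everything rests on three inputs. First, by the previous lemma and the graph representation \eqref{eqn_graph_of_function_hypothesis_basis}, $\hat N_\eta$ is a $\c{C}^{\beta-1}$ map with the normalization \eqref{eqn_property_hat_N_eta}. Second, its derivatives up to order $\beta-1$ are uniformly bounded on a chart ball of radius bounded below; this is the smoothness constraint of $\functionclass$ transferred to the graph function, as announced in the footnote and in \Cref{section_regularity_graph_of_function}, and I take it as given. Third, we need the angle between $\uspan(W_\eta)=T_{x_\uref}\hypothesismanifold$ and $\uspan(W_\uref)=T_{x_\uref}\truemanifold$.

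\textbf{Step 1: invertibility of $D[F](0)$.} By \eqref{eqn_property_hat_N_eta} we have $D[F](0)=W_\uref^\top W_\eta$. Its singular values are the cosines of the principal angles between the two tangent spaces. Since $x_\uref\in Y_N$, the null space $\ker\nabla^2\eta(x_\uref)$ is the tangent space $T_{x_\uref}\hypothesismanifold$: $\eta$ vanishes to second order along $\hypothesismanifold$, and the Hessian has rank $D-k$. The (Angle) constraint then gives $\theta_{\max}(\uspan W_\eta,\uspan W_i)\le 0.1\pi$, and the standing assumption on $W_i$ gives $\theta_{\max}(\uspan W_i,T_{x_\uref}\truemanifold)\le 0.1\pi$. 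Principal angles obey a triangle inequality, so the largest angle between $T_{x_\uref}\hypothesismanifold$ and $T_{x_\uref}\truemanifold$ is at most $0.2\pi$. Hence $\sigma_{\min}(D[F](0))\ge\cos(0.2\pi)=\sqrt{1-\sin^2(0.2\pi)}$, which is the first bullet.

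\textbf{Step 2: a quantitative inverse function theorem.} We have $D[F](\hat u)-D[F](0)=W_\uref^\top W_\eta^\perp\,D[\hat N_\eta](\hat u)$, and by the uniform second-derivative bound $\|D[\hat N_\eta](\hat u)\|_{\op}\le L\|\hat u\|$. So on a ball of radius $r_0:=\cos(0.2\pi)/(2L)$, a constant independent of $N$ and of $\eta$, $D[F]$ stays invertible with $\sigma_{\min}\ge\tfrac12\cos(0.2\pi)$. On this ball $F$ is injective, since $\inp{F(a)-F(b)}{\cdot}$ is controlled by integrating $D[F]$ along the segment from $a$ to $b$. A standard Newton–Kantorovich argument then shows that $F(B_k(0,r_0))$ contains $B_k(0,c\,r_0)$ for a constant $c>0$. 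For $h\le c\,r_0$ the inverse $F^{-1}$ is well defined and $\c{C}^{\beta-1}$ on $B_k(0,h)$. This is what ``sufficiently small $h$'' means, and it is harmless because $h\to0$ as $N\to\infty$.

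It remains to check that every $\hat x\in\hypothesismanifold\cap\euclideanball{D}(x_\uref,h)$ actually corresponds to some $\hat u$ in the chart; only then do the two expressions in \eqref{eqn_change_of_basis} describe the same set. Here I would invoke \Cref{lemma_one_connected_component}: $\hypothesismanifold\cap\euclideanball{D}(x_\uref,h)$ is connected, so it coincides with the graph piece and no other sheet can enter the ball. Solving \eqref{eqn_inverse_function_theorem_equation} then gives $N_\eta=G\circ F^{-1}$.

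\textbf{Step 3: formulas and bounds.} The chain rule gives \eqref{eqn_Jacobian_N_psi}, and $D[G]$, $D[F]$ are obtained by direct differentiation. At $u=0$ we have $F^{-1}(0)=0$ and $G(0)=0$, which yields \eqref{eqn_first_order_N_psi}, and Taylor's theorem supplies the $\c{O}(\|v\|^2)$ remainder. For the last bullet, $N_\eta$ is a composition of $\c{C}^{\beta-1}$ maps. The derivatives of $F^{-1}$ are bounded by differentiating the identity $D[F^{-1}]=(D[F]\circ F^{-1})^{-1}$ repeatedly (Faà di Bruno). Each order involves only bounded derivatives of $\hat N_\eta$ and powers of $\|(D[F])^{-1}\|_{\op}\le 2/\cos(0.2\pi)$, so by induction all derivatives of $N_\eta$ up to order $\beta-1$ are bounded by constants depending only on $\mathbf L$ and $\beta$.

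The main obstacle is the identification of tangent spaces and the uniformity in Steps 1–2. One must show that $\ker\nabla^2\eta(x_\uref)=T_{x_\uref}\hypothesismanifold$, and that the chart radius of $\hat N_\eta$ does not degenerate over the class. Both rely on $\eta$ being the genuine squared distance near anchors (\Cref{lemma_distance_on_U_h}), together with the regularity transfer that I am assuming.
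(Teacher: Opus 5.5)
Your proposal is correct and follows essentially the same route as the paper: it applies the implicit function theorem to $\Phi(u,\hat u)=u-F(\hat u)$ at $\hat u=0$, gets invertibility from $\sigma_{\min}(W_\uref^\top W_\eta)=\cos\theta_{\max}$ together with the angle constraint, and obtains the derivative bounds by differentiating the inverse, as in its follow-up lemma. Your version is in fact more careful. The triangle inequality for $\theta_{\max}$ through $\uspan(W_i)$ is what actually yields the stated $0.2\pi$; the paper's proof writes $0.1\pi$, silently identifying $W_i$ with $W_\uref$. Your quantitative chart radius and the set-coincidence check via \Cref{lemma_one_connected_component} also fill in details the paper leaves implicit.
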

\begin{proof}
    We prove this result using the implicit function theorem \citep[Theorem
3.3.1]{krantz2002implicit}. We highlight that the subspace angle constraint (fifth line in \cref{eq:distance_function_class}) plays a key role in establishing the invertibility of $F$.

Clearly, to show the existence of $N_\eta$ as defined in \cref{eqn_graph_of_function_ground_truth_basis}, we only need to show the existence of $F^{-1}$. 
Following the notation of \citep[Theorem
3.3.1]{krantz2002implicit}, set
\begin{equation}
    \Phi(u, \hat u) = u - F(\hat u).
\end{equation}
If we can verify that $D_{\hat u} \Phi$ is invertible around $0$, we have the existence of $F^{-1}$ around $0$ and moreover we can explicitly write down its Jacobian by the implicit function theorem.
To this end, recall that $D[\hat N_\eta](0) = 0$ by construction; see \cref{eqn_property_hat_N_eta}.
We can hence calculate
\begin{equation*}
    D_{\hat{u}}[\Phi](0) = - W^\top_\uref W_\eta.
\end{equation*}
Use $\sigma_{\min}$ and $\sigma_{\max}$ to denote the minimum and maximum singular value of a matrix.
Note that by \citep[Theorem 2.1]{ji1987perturbation}
\begin{align*}
    \sigma_{\min}(W^\top_\uref W_\eta) = \cos \theta_{\max}(\uspan(W_\uref), \uspan(W_\eta)) = \sqrt{1 - \sin^2 \theta_{\max}(\uspan(W_\uref), \uspan(W_\eta))}
\end{align*}
where we recall that $\theta_{\max}$ denotes the largest principal angle between two subspaces.
Hence by the subspace angle constraint (fifth line in \cref{eq:distance_function_class}), we have that singluar values of $D_{\hat{u}}[\Phi](0)$ are lower bounded by $\sqrt{1 - \sin^2 0.1\pi} > 0$ and hence $D_{\hat{u}}[\Phi](0)$ is invertible.
\end{proof}

\subsection{Regularity of the Graph-of-function Representation} \label{section_regularity_graph_of_function}
To derive a concrete statistical complexity bound, we need to ensure that the operator norms of the derivatives of $N_\eta$ are bounded by some constant.
In the following, we show in \Cref{lemma_regularity_graph_of_function_hypothesis_basis} that the smoothness constraint (last line in \cref{eq:distance_function_class}) can be used to bound the operator norms of $D^j \hat N_\eta$ (defined in \cref{eqn_graph_of_function_hypothesis_basis}), which in turn bounds the operator norms of $D^j N_\eta$ (defined in \cref{eqn_graph_of_function_ground_truth_basis}), as shown in \Cref{lemma_regularity_graph_of_function_ground_truth_basis}.

\begin{lemma}\label{lemma_regularity_graph_of_function_hypothesis_basis}
Let $\eta \in \functionclass$. Denote its zero set $\hypothesismanifold= \{x\in \domain \mid \eta(x) = 0 \}$.
Let $x_\uref \in \truemanifold$ be any fixed reference point and recall the definition of $\hat N_\eta$ in \Cref{eqn_graph_of_function_hypothesis_basis}.
By \Cref{lemma_distance_on_U_h}, we have that $\nabla \eta(x) = x - \pi_\eta(x)$ on $\domain_2$, where $\pi_\eta$ denotes the projection onto $\hypothesismanifold$.
Then, we have that $\hat N_\eta \in \c{C}^{\beta-1}$, and for each $j\in\{2,\dots,\beta-1\}$ there exists constants $\hat{\mathbf{L}} = (\hat L_2, \ldots, \hat L_j, \ldots, \hat L_{\beta - 1})$ that only depends on $(k, D, j, \mathbf{L})$ such that, for all $h$ below a constant threshold
\begin{equation}\label{eq:jet-bound}
\forall u \in B_k(0, h), \
\|D[\hat N_\eta](u)\|_{\mathrm{op}} = \hat L_1 h \text{ and }
\|D^{j}[\hat N_\eta](u)\|_{\mathrm{op}}
\;\le\; \hat L_j.
\end{equation}
\end{lemma}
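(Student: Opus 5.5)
The plan is to obtain $\hat N_\eta$ from the implicit function theorem applied to the critical-point equation of $\eta$ in the hypothesis basis $(W_\eta,W_\eta^\perp)$. The derivative bounds then come from repeatedly differentiating the resulting identity. Throughout, I read the statement "$\|D[\hat N_\eta](u)\|_{\mathrm{op}} = \hat L_1 h$" as an upper bound $\le \hat L_1 h$.

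\textbf{Step 1: structure of the Hessian on $\hypothesismanifold$.} Differentiate the Eikonal identity $\|\nabla\eta\|^2=2\eta$ once to get $\nabla^2\eta\,\nabla\eta=\nabla\eta$. Differentiate again and evaluate at a point $x\in\hypothesismanifold$, where $\nabla\eta(x)=0$ since $\eta\ge 0$ attains its minimum there. This yields $(\nabla^2\eta(x))^2=\nabla^2\eta(x)$. Hence $\nabla^2\eta(x)$ is a symmetric idempotent, i.e.\ an orthogonal projector. At $x_\uref$, the (Rank) constraint makes it the projector onto a $(D-k)$-dimensional space. That space is the normal space, whose complement is $\ker\nabla^2\eta(x_\uref)=T_{x_\uref}\hypothesismanifold=\uspan(W_\eta)$. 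Therefore $(W_\eta^\perp)^\top\nabla^2\eta(x_\uref)W_\eta^\perp=I_{D-k}$.

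\textbf{Step 2: implicit function theorem.} Define
\[
\Phi(u,n):=(W_\eta^\perp)^\top\nabla\eta\bigl(x_\uref+W_\eta u+W_\eta^\perp n\bigr),\qquad (u,n)\in\R^k\times\R^{D-k}.
\]
This map is $\c{C}^{\beta-1}$, satisfies $\Phi(0,0)=0$, and has $D_n\Phi(0,0)=I$ by Step~1. Since $\|\nabla^3\eta\|_{\op}\le L_3$, we get $\|D_n\Phi(u,n)-I\|_{\op}\le L_3(\|u\|+\|n\|)\le 1/2$ on a ball of radius $c/L_3$. On that ball $D_n\Phi$ is invertible with inverse norm at most $2$. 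A quantitative implicit function theorem, such as \citep[Theorem 3.3.1]{krantz2002implicit} with Lipschitz control of $D\Phi$, then gives a unique $\c{C}^{\beta-1}$ solution $n=\hat N_\eta(u)$ on $B_k(0,h)$ for all $h$ below a threshold depending only on $L_2,L_3$.

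To identify the graph of $\hat N_\eta$ with $\hypothesismanifold$ near $x_\uref$, argue in both directions.
\begin{itemize}
\item Every point of $\hypothesismanifold$ near $x_\uref$ has $\nabla\eta=0$, so it solves $\Phi=0$ and, by uniqueness, lies on the graph.
\item Conversely, $\hypothesismanifold$ is a $k$-manifold through $x_\uref$ tangent to $\uspan(W_\eta)$, so near $x_\uref$ it is itself a graph over $u$. By uniqueness this graph coincides with that of $\hat N_\eta$.
\end{itemize}
\Cref{lemma_one_connected_component} excludes a second sheet of $\hypothesismanifold$ entering the ball. The normalization \eqref{eqn_property_hat_N_eta} follows from $\hat N_\eta(0)=0$ and from $D\hat N_\eta(0)=-(D_n\Phi)^{-1}D_u\Phi=-(W_\eta^\perp)^\top\nabla^2\eta(x_\uref)W_\eta=0$.

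\textbf{Step 3: derivative bounds.} Differentiate $\Phi(u,\hat N_\eta(u))\equiv 0$ $j$ times using Faà di Bruno's formula. The highest-order term is $D_n\Phi\cdot D^j\hat N_\eta$. Every other term is a polynomial in $D^{i}\hat N_\eta$ for $i<j$ and in derivatives of $\Phi$ of order at most $j$, which involve $\nabla^{m}\eta$ for $m\le j+1\le\beta$. All of these are bounded by $\mathbf L$. Inverting $D_n\Phi$ (norm at most $2$) and inducting on $j$ gives $\|D^j\hat N_\eta\|_{\op}\le\hat L_j(k,D,j,\mathbf L)$ for $2\le j\le\beta-1$. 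Finally, $D\hat N_\eta(0)=0$ and the mean value theorem give $\|D\hat N_\eta(u)\|_{\op}\le \hat L_2\|u\|\le \hat L_2 h=:\hat L_1 h$.

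\textbf{Main obstacle.} The expected difficulty lies in Step~2: making the implicit-function neighborhood uniform in $\eta$ and showing that the graph captures all of $\hypothesismanifold\cap\euclideanball{D}(x_\uref,h)$. Uniformity comes from the quantitative inverse bound, which uses only $L_3$. For capture, I would first try to combine \Cref{lemma_one_connected_component} (single component in balls inside $\domain_2$) with the uniqueness part of the implicit function theorem.
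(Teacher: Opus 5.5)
Your argument is correct, and its computational core is the same as the paper's. On $\domain_2$ one has $\pi_\eta=I-\nabla\eta$, so the identity the paper differentiates, $(W_\eta^\perp)^\top\pi_\eta(\Psi(u))=\hat N_\eta(u)$, is exactly your $\Phi(u,\hat N_\eta(u))=0$. The paper's $\mathbb{A}$ and $\mathbb{B}$ are your $D_n\Phi$ and $-D_u\Phi$. Its invertibility estimate via $\|P_u-P_\eta\|_{\op}\le L_3\|\Psi(u)-x_\uref\|$ is your perturbation-of-identity bound, and the Fa\`a di Bruno induction is identical.

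The difference is in the foundations. The paper takes the chart $\hat N_\eta$, its $\c{C}^{\beta-1}$ regularity and the uniform size of its domain as given (from \Cref{lemma_manifold_structure} and a footnote). It reads off the Hessian structure $D\pi_\eta=P_{T\hypothesismanifold}=I-\nabla^2\eta$ from the distance identification of \Cref{lemma_distance_on_U_h}. This is shorter given the earlier lemmas, and it stays in the projection/principal-angle language used afterwards. You instead derive idempotence of $\nabla^2\eta$ on the zero set directly from the Eikonal equation, and build $\hat N_\eta$ with a quantitative implicit function theorem. That delivers existence, $\c{C}^{\beta-1}$ regularity and a domain radius depending only on $L_2,L_3$ at once, which is exactly the uniformity in $\eta$ that the paper leaves implicit.

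The capture step you flag does not need \Cref{lemma_one_connected_component}. At a point $x$ of your graph, $\nabla\eta(x)=W_\eta w$ for some $w\in\R^k$. The once-differentiated Eikonal identity $\nabla^2\eta\,\nabla\eta=\nabla\eta$, together with $\nabla^2\eta(x_\uref)W_\eta=0$, gives $\|W_\eta w\|=\|(\nabla^2\eta(x)-\nabla^2\eta(x_\uref))W_\eta w\|\le L_3\|x-x_\uref\|\,\|W_\eta w\|$. Hence $\nabla\eta(x)=0$, and so $\eta(x)=0$, whenever $\|x-x_\uref\|<1/L_3$. The whole graph over $B_k(0,h)$ therefore lies in $\hypothesismanifold$ uniformly in $\eta$. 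With this, your proof rests only on the Eikonal, anchoring, rank and smoothness constraints, never on \Cref{lemma_distance_on_U_h}.
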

Please find the proof in \Cref{proof_lemma_regularity_graph_of_function_hypothesis_basis}.

\begin{lemma}\label{lemma_regularity_graph_of_function_ground_truth_basis}
    Let $\eta \in \functionclass$. Denote its zero set $\hypothesismanifold= \{x\in \domain \mid \eta(x) = 0 \}$.
    For any $x_\uref \in Y_n$, recall the graph-of-function representation of $\truemanifold\cap \euclideanball{D}(x_\uref, h)$ under the unknown ground truth basis in \Cref{eqn_change_of_basis} and the definition of $N_\eta$ in \Cref{eqn_graph_of_function_ground_truth_basis}. 
    For $h$ below some constant threshold, 
    we have the following results:
    \begin{itemize}
        \item For all $\hat u \in B_k(0, h)$, we have $\sigma_{\min}(DF(\hat u)) \ge m$ for some universal constant $m > 0$ and hende $DF(\hat u)$ is invertible on $B_k(0, h)$.
        \item 
    There exists some $\textbf{L}' = (L_1', L_2', \ldots, L_\beta') = \textbf{L}'(\textbf{L}, j, d, D)$ such that
    \begin{equation}
        \|D^{j} N_\eta(0)\|_{\mathrm{op}} \leq L_j'.
    \end{equation}
    \end{itemize}
\end{lemma}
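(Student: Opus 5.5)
The argument has two parts: first control $F$ and its inverse on a small ball, then apply the Faà di Bruno formula to $N_\eta=G\circ F^{-1}$.

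\emph{Step 1: invertibility of $D[F]$ on $B_k(0,h)$.} By \cref{thm_representation_in_W_ground_truth}, $D[F](0)=W_\uref^\top W_\eta$ has $\sigma_{\min}\ge\cos\theta_{\max}(\uspan(W_\uref),\uspan(W_\eta))$. I will bound this angle by the triangle inequality for principal angles. The learned basis satisfies $\theta_{\max}(\uspan(W_i),T_{y_i}\hypothesismanifold)\le 0.1\pi$ by the (Angle) constraint, since $\ker\nabla^2\eta(y_i)=T_{y_i}\hypothesismanifold$ for a squared-distance function. It also satisfies $\theta_{\max}(\uspan(W_i),T_{y_i}\truemanifold)\le0.1\pi$ by assumption. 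Together these give $\theta_{\max}\le 0.2\pi$, so $\sigma_{\min}(D[F](0))\ge\cos(0.2\pi)$.

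For $\hat u\in B_k(0,h)$, the identity $D[F](\hat u)-D[F](0)=W_\uref^\top W_\eta^\perp D[\hat N_\eta](\hat u)$ holds. \Cref{lemma_regularity_graph_of_function_hypothesis_basis} bounds $\|D[\hat N_\eta](\hat u)\|_{\op}\le\hat L_1 h$; equivalently, the mean value theorem with $\|D^2\hat N_\eta\|_{\op}\le\hat L_2$ and $D[\hat N_\eta](0)=0$ gives the same bound. Weyl's inequality then yields
\[
\sigma_{\min}(D[F](\hat u))\ge\cos(0.2\pi)-\hat L_1 h=:m>0
\]
once $h\le \cos(0.2\pi)/(2\hat L_1)$. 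This gives the first bullet. The inverse function theorem then makes $F$ a $\cC^{\beta-1}$ diffeomorphism from $B_k(0,h)$ onto its image, and $\|D[F^{-1}]\|_{\op}\le 1/m$.

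\emph{Step 2: derivative bounds.} All derivatives of order $j\ge2$ of $F$ and $G$ equal fixed orthogonal-type matrices, with operator norm at most one, composed with $D^j\hat N_\eta$. Hence they are bounded by $\hat L_j$. The first derivatives are bounded by $1+\hat L_1h$.

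Higher derivatives of $F^{-1}$ come from differentiating $D[F^{-1}](u)=(D[F](F^{-1}(u)))^{-1}$ repeatedly. Each differentiation uses $D(A^{-1})=-A^{-1}(DA)A^{-1}$ and the chain rule. By induction on $j$, $D^jF^{-1}$ is a polynomial in $(D[F])^{-1}$ and $D^2F,\dots,D^jF$ evaluated at $F^{-1}(u)$. It is therefore bounded by a constant depending only on $m$, $\hat L_2,\dots,\hat L_j$ and $j$.

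The Faà di Bruno formula for $N_\eta=G\circ F^{-1}$ then gives
\[
\|D^jN_\eta(0)\|_{\op}\le\sum_{\text{partitions}} \|D^{|\pi|}G\|_{\op}\prod_{B\in\pi}\|D^{|B|}F^{-1}\|_{\op}=:L_j'.
\]
This bound depends only on $(\mathbf L,j,k,D)$ through $\hat{\mathbf L}$ and $m$. For $j=1$ the explicit formula \eqref{eqn_first_order_N_psi} gives $\|D N_\eta(0)\|_{\op}\le \tan(0.2\pi)$. This holds up to order $\beta-1$; if order $\beta$ is needed, the argument relies on the bound at that order from \cref{lemma_regularity_graph_of_function_hypothesis_basis}.

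The main obstacle is Step 1's angle bound. It requires identifying $\ker\nabla^2\eta(y_i)$ with $T_{y_i}\hypothesismanifold$, which follows from $\eta=\tfrac12\dist^2(\cdot,\hypothesismanifold)$ near $y_i$ via \cref{lemma_distance_on_U_h}. It also requires ensuring that the constant thresholds on $h$ are uniform over $\eta\in\functionclass$. Step 2 is routine combinatorics.
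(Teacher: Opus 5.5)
Your proposal is correct and follows the same route as the paper: you perturb $D[F](0)=W_\uref^\top W_\eta$ by the $\cO(h)$ term $W_\uref^\top W_\eta^\perp D[\hat N_\eta](\hat u)$ using \cref{lemma_regularity_graph_of_function_hypothesis_basis}, and then bound $D^j N_\eta$ through the inverse function theorem and the chain rule for $G\circ F^{-1}$. Your version is more careful than the paper's sketch in three places: you lower-bound $\sigma_{\min}$ via Weyl's inequality, where the paper lower-bounds $\|D[F](\hat u)\|_{\op}$, which does not by itself give invertibility; you justify the $0.2\pi$ angle bound via the triangle inequality and $\ker\nabla^2\eta(y_i)=T_{y_i}\hypothesismanifold$; and you spell out the Fa\`a di Bruno bookkeeping that the paper leaves implicit.
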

\begin{proof}
    Recall the definition of $N_\eta$ in \Cref{eqn_graph_of_function_ground_truth_basis}. 
    First, we show that for all $\hat u \in B_k(0, h)$, the matrix $D[F](\hat u)$ is invertible. 
    \begin{tcolorbox}[colframe=white!,left=2pt,right=2pt,top=5pt,bottom=5pt]
        \paragraph{Invertibility of $D[F](\hat u)$.} To see this, calculate that
        \begin{equation*}
            D[F](\hat u) = W_\uref^\top W_\eta + W_\uref^\top W_\eta^\perp D[\hat N_\eta](\hat u),
        \end{equation*}
        and hence we can bound
        \begin{equation*}
            \|D[F](\hat u)\|_{op} \geq \|W_\uref^\top W_\eta\|_{op} - \|W_\uref^\top W_\eta^\perp D[\hat N_\eta](\hat u)\|_{op}.
        \end{equation*}
        From \Cref{thm_representation_in_W_ground_truth}, we know that $\|W_\uref^\top W_\eta\|_{op} = \|D[F](0)\|_{op} \geq \sqrt{1 - \sin^2 0.2\pi}$.
        Moreover, from \Cref{lemma_regularity_graph_of_function_hypothesis_basis}, we know that $\|W_\uref^\top W_\eta^\perp D[\hat N_\eta](\hat u)\|_{op} = \c{O}(h)$.
        All together, for all $u \in B_k(0, h)$, we have that $\|D[F](\hat u)\|_{op}$ is bounded from below by some universal constant $m$.
    \end{tcolorbox}
    By the inverse function theorem, we can ensure that the operator norms of the derivative of $F^{-1}$ can be bounded by the operator norms of the derivative of $F$.
    Consequently, by the chain rule of composition, the operator norms of $D^j N_\eta$ can be bounded by that of $D^j \hat N_\eta$. 
    Together with \Cref{lemma_regularity_graph_of_function_hypothesis_basis} and the smoothness constraint (last line in \cref{eq:distance_function_class}), we have the result.
\end{proof}

\subsection{Proofs of \Cref{sec:function_class_D}}
\subsubsection{Proof of \Cref{lemma_connectivity_U}} \label{proof_lemma_connectivity_U}
\begin{proof}
    \textbf{Connectedness of $\domain$.} We first show that $\truemanifold \subseteq \domain$: Since $Y_n$ is an $\epsilon$-net of $\truemanifold$, for any $x \in \truemanifold$, there exists $y \in Y_n$ such that $\|x - y\|\leq\epsilon < \reachmin/2$. Hence $x \in \euclideanball{D}(y, \reachmin/2)\subseteq U$.\\
    Consequently, a connected path between any two points $x_1 \in \euclideanball{D}(y_1, \reachmin/2)$ and $x_2\in \euclideanball{D}(y_2, \reachmin/2)$ in $\domain$ can be constructed as first connect $x_i$ with $y_i$, $i=1,2$, and connect $y_1$ and $y_2$ through $\truemanifold$. 
    
    \noindent \textbf{Inclusion of a tubular neighborhood of $\truemanifold$.} 
    Consider any point $x$ in the tubular neighborhood of $\truemanifold$ with radius $(\reachmin/2 - \epsilon)$, the projection of $x$ onto $\truemanifold$ is unique. We denote this point by $\pi(x)$.
    Since $Y_n$ is an $\epsilon$-net of $\truemanifold$, there exists some $y\in Y_n$ such that $\|\pi(x) - y\| \leq \epsilon$.
    By triangle inequality, one has
    \begin{equation*}
        \|x - y\| \leq \|\pi(x) - x\| + \|\pi(x) - y\| \leq \reachmin/2 \Rightarrow x \in \euclideanball{D}(y, \reachmin/2) \subseteq \domain.
    \end{equation*}
\end{proof}
\subsubsection{Proof of \Cref{thm_existence_gradient_flow}} \label{proof_thm_existence_gradient_flow}
\begin{proof}
    Recall the definition of $U$ in \cref{eq:def_U}. Define
    \begin{equation}
        g_i(x):=\frac{\reachmin}{2}-|x-y_i|,\qquad b(x):=\max_{1\le i\le N} g_i(x),
    \end{equation}
    where $y_i \in Y_n$.

    Assume for contraction the gradient flow (\ref{eqn_gradient_flow}) hits the boundary of $\partial U$ in finite time, i.e.
    \begin{equation}
        T = \inf \{t > 0 \mid x(t) \notin U\} < \infty.
    \end{equation}
    Denote $x_* = x(T)$ and use $I$ to denote the active set at $x_*$, i.e. the indices that $\|x_* - y_i\| = \frac{\reachmin}{2}$.
    The corresponding outward normal vector at $x \neq y_i$ is denoted by
    \begin{equation}
        n_i(x) = \frac{x - y_i}{\|x - y_i\|}
    \end{equation}
    Pick any $i \in I$, and define along the trajectory
        \[
        h_i(t):=g_i(x(t))=r_i-|x(t)-y_i|.
        \]
Each \(h_i\) is \(\c{C}^1\) on ([0,T]), and
\begin{equation}\label{eqn_dot_h_i}
    \dot h_i(t)=\langle \nabla g_i(x(t)),\dot x(t)\rangle
=\Big\langle -\frac{x(t)-y_i}{|x(t)-y_i|},-\nabla\eta(x(t))\Big\rangle
=\langle n_i(x(t)),\nabla\eta(x(t))\rangle.
\end{equation}
By the continuity of $n_i$ and $\nabla \eta$ (w.r.t. $x$), there exists a radius $\rho$ 
such that for all \(x\in \overline U\cap B(x_*,\rho)\),
\[
\langle n_i(x),\nabla\eta(x)\rangle \ge \frac{\delta}{2}.
\]
Since \(x(t)\to x_*\) as \(t\uparrow T\), there exists \(\tau\in(0,T)\) such that
\[
x(t)\in B(x_*,\rho)\quad\text{for all }t\in[T-\tau,,T].
\]
Consequently, for all \(t\in[T-\tau,T]\),
\[
\dot h_i(t)\ge \frac{\delta}{2}.
\]

Since \(i\in I\), we have \(h_i(T)=g_i(x_*)=0\). Integrating (\ref{eqn_dot_h_i}) from \(t\) to \(T\) gives
\[
0-h_i(t)=h_i(T)-h_i(t)=\int_t^T \dot h_i(s) ds
\ge\int_t^T \frac{\delta}{2}ds=\frac{\delta}{2}(T-t),
\]
hence
\[
h_i(t)\le -\frac{\delta}{2}(T-t)<0\qquad\forall t\in[T-\tau,T).
\]
So $T$ is not the first hitting time of $x(t)$ on $\partial U$, which contradicts with the definition of $T$.
\end{proof}

\subsubsection{Proof of \Cref{lemma_manifold_structure}} \label{proof_lemma_manifold_structure}
\begin{proof}
    Since $\eta$ is a continuous function and $\domain$ is compact, $\hypothesismanifold$ is compact. Moreover, we have from (\ref{eqn_boundary_U_D})
    \begin{equation*}
        \hypothesismanifold\cap \partial \domain = \emptyset.
    \end{equation*}
    Moreover, by the (\ref{eqn_eikonal_D}), $\eta \geq 0$. Hence (\ref{eqn_eikonal_D}) also implies that $\eta$ satisfies the Polyak-\L ojasiewicz (PL) inequality on $\domain$.
    Consider the negative gradient flow (\ref{eqn_gradient_flow}). \Cref{thm_existence_gradient_flow} shows that it exists globally in time. Moreover, note that
    \begin{equation}
        \frac{\ud }{\ud t} \eta(x(t)) = - \|\nabla \eta(x(t))\|^2 = - 2 \eta(x(t)) \Rightarrow \eta(x(t)) \rightarrow 0 \text{ as } t\rightarrow 0. 
    \end{equation}
    By the PL inequality, $x(t)$ is convergent. Moreover, we have $x(\infty) \notin \partial \domain$ since otherwise $\nabla \eta(x(\infty)) = 0$ which contradicts with (\ref{eqn_boundary_U_D}). Consequently, $x(\infty) \in \hypothesismanifold\neq \emptyset$.
    
    \paragraph{Manifold structure of $\hypothesismanifold$.} Let $\c{M}_1$ be an arbitrary connected component in $\hypothesismanifold$. For any point $x \in \c{M}_1$, since $\eta$ satisfies the P\L\ inequality on $\domain$ (and hence around $x$), from \citep{rebjock2024fast}, we know that $\c{M}_1$ is locally a {$\c{C}^{\beta-1}$} embedded submanifold without boundary. 

    \paragraph{Connectedness of $\hypothesismanifold$.} Our strategy is to show that there exists a deformation retract $F: \domain\times [0, 1] \rightarrow \domain$ of $\domain$ onto the topological subspace $\hypothesismanifold\subset \domain$. If this is true, using the standard result in topology, e.g. \citep{hatcher2002algebraic}, $\hypothesismanifold$ shares the same connectivity with $\domain$. Since we have shown that $\domain$ is connected, so is $\hypothesismanifold$.

    The negative gradient flow (up to a change of time) induced by the potential $\eta$ gives a natural construction of the deformation retract, please see  \citep{criscitiello2025}. The only thing we need to change in their proof is that their domain is $\bb{R}^D$. But since under our boundary condition, the gradient flow never leaves $\domain$, the proof remains the same.


We have proved the statement.
\end{proof}

\subsubsection{Proof of \Cref{thm:eikonal-distance}}\label{proof_thm:eikonal-distance}
\begin{proof}
We denote in the following $\rho = \sqrt{2 \eta}$. Since $\eta \geq 0$, $\rho \in \c{C}^{\beta-1}(\domain \setminus \hypothesismanifold)$. One can calculate that
\begin{equation}
    \|\nabla \rho\| = \|\frac{2 \nabla \eta}{2 \sqrt{2 \eta}}\| = 1.
\end{equation}

\textbf{Step 1: $\rho(x)\le d_\domain(x,\hypothesismanifold)$.}
Fix $x\in \domain$. Let $\alpha:[0,1]\to \domain$ be absolutely continuous with $\alpha(0)=x$
and $\alpha(1)\in \hypothesismanifold$. For $s\in(0,1)$ the map $\rho\circ\alpha$ is absolutely continuous on
$[0,s]$ and for a.e.\ $t\in[0,s]$ we have (using Cauchy--Schwarz and $|\nabla\rho|=1$ on $\domain\setminus \hypothesismanifold$)
\[
\frac{d}{dt}\rho(\alpha(t))
=\nabla\rho(\alpha(t))\cdot \alpha'(t)
\ge - |\nabla\rho(\alpha(t))|\,|\alpha'(t)|
= - |\alpha'(t)|.
\]
Integrating from $0$ to $s$ yields
\[
\rho(\alpha(s))-\rho(x)\ge \int_0^s - |\alpha'(t)|\,dt.
\]
Letting $s\uparrow 1$ and using continuity of $\rho$ on $U$ plus $\rho(\alpha(1))=0$
gives
\[
\rho(x)\le \int_0^1 |\alpha'(t)|\,dt=\mathrm{Length}(\alpha).
\]
Thus $\rho(x)\le \mathrm{Length}(\alpha)$ for every admissible $\alpha$, hence
$\rho(x)\le d_\domain(x,\hypothesismanifold)$ after taking the infimum in \eqref{eq:intrinsic-distance}.

\medskip
\textbf{Step 2: $d_\domain(x,\hypothesismanifold)\le \rho(x)$ for $x\in \domain\setminus \hypothesismanifold$ (characteristics).}
For each $x\in \domain\setminus \hypothesismanifold$, let $\gamma_x:[0,T_x)\to \domain\setminus \hypothesismanifold$ be the maximal
(classical) solution of the characteristic ODE
\begin{equation}\label{eq:char-ode}
\gamma_x'(t)=-\nabla\rho(\gamma_x(t)),\qquad \gamma_x(0)=x,
\end{equation}
where $T_x\in(0,\infty]$ is the maximal existence time in $\domain\setminus \hypothesismanifold$.

Following a similar proof as in \Cref{thm_existence_gradient_flow}, the solution $\gamma_x$ exists on
$[0,\rho(x)]$ and remains in $\domain$. For $t\in[0,\rho(x))$, differentiating $\rho(\gamma_x(t))$
and using \eqref{eq:char-ode} gives
\[
\frac{d}{dt}\rho(\gamma_x(t))
=\nabla\rho(\gamma_x(t))\cdot \gamma_x'(t)
=\nabla\rho(\gamma_x(t))\cdot\bigl(-\nabla\rho(\gamma_x(t))\bigr)
=-|\nabla\rho(\gamma_x(t))|^2
=-1.
\]
Therefore $\rho(\gamma_x(t))=\rho(x)-t$ for $t\in[0,\rho(x))$, and by continuity we obtain
\[
\rho(\gamma_x(\rho(x)))=\lim_{t\uparrow\rho(x)} \rho(\gamma_x(t))=0.
\]
Since $\rho=0$ precisely on $\hypothesismanifold$, it follows that $\gamma_x(\rho(x))\in \hypothesismanifold$.

Next, since $|\nabla\rho|=1$ on $\domain\setminus \hypothesismanifold$,
\[
\mathrm{Length}\!\left(\gamma_x|_{[0,\rho(x)]}\right)
=\int_0^{\rho(x)} |\gamma_x'(t)|\,dt
=\int_0^{\rho(x)} |\nabla\rho(\gamma_x(t))|\,dt
=\int_0^{\rho(x)}1\,dt
=\rho(x).
\]
Thus $\gamma_x|_{[0,\rho(x)]}$ is an admissible curve from $x$ to $\hypothesismanifold$ of length $\rho(x)$,
so $d_\domain(x,\hypothesismanifold)\le \rho(x)$.

\medskip
\textbf{Step 3: conclude equality.}
Combining Steps 1 and 2 yields $\rho(x)\le d_\domain(x,\hypothesismanifold)\le \rho(x)$ for all $x\in \domain\setminus \hypothesismanifold$.
For $x\in \hypothesismanifold$, both sides are $0$ by definition. Hence $\rho(x)=d_\domain(x,\hypothesismanifold)$ for all $x\in \domain$ and it is unique.
\end{proof}

\subsubsection{Proof of \Cref{lemma_regularity_graph_of_function_hypothesis_basis}} \label{proof_lemma_regularity_graph_of_function_hypothesis_basis}
\begin{proof}
From \Cref{lemma_distance_on_U_h}, we know that on $\domain_2$, one has $\eta(\cdot) = \frac{1}{2}\dist^2(\cdot, \hypothesismanifold)$, and hence $\nabla \eta(x) = x - \pi_\eta(x)$, where $\pi_\eta$ denotes the projection operation onto the hypothesis manifold $\hypothesismanifold$.
Define the function $\Psi:\bb{R}^k \rightarrow \hypothesismanifold$ as
\begin{equation*}
    \Psi(u) = x_\uref +  W_\eta u + W_\eta^\perp \hat N_\eta(u) \text{ with some } u \in \bb{R}^k.
\end{equation*}
where we recall the definition of $W_\eta$, $W_\eta^\perp$, and $\hat N_\eta$ in \cref{eqn_graph_of_function_hypothesis_basis}.
There exists an open neighborhood $U \subseteq \bb{R}^k$ around $0$, on which one has the identity
\begin{equation*}
    \forall u \in U, \quad\pi_\eta(\Psi(u)) = \Psi(u),
\end{equation*}
since $\Psi(u) \in \c{M}$, and $\pi_\eta$ is an identity operation on $\hypothesismanifold$.
Following the discussion in \Cref{section_local_representation_submanifold}, we have that $\hat N_\eta \in \c{C}^{\beta - 1}$ since $\hypothesismanifold$ is $\c{C}^{\beta-1}$.
Moreover, we will exploit the following fact:
\begin{equation*}
    \forall x \in \hypothesismanifold, D[\pi_\eta](x) = P_{T_x \hypothesismanifold},
\end{equation*}
where for some subspace of $\bb{R}^D$, $V$, $P_V$ denotes the orthogonal projection matrix onto $V$.


Apply $(W_\eta^\perp)^\top$ on both sides, one has
\begin{equation} \label{eqn_regularity_hat_N_proof_1}
    (W_\eta^\perp)^\top \pi_\eta(\Psi(u)) = \hat N_\eta(u).
\end{equation}
Take derivative w.r.t. $u$, one has
\begin{equation*}
    \left(W_\eta^\perp\right)^\top D[\pi_\eta](\Psi(u)) D[\Psi](u) = \left(W_\eta^\perp\right)^\top D[\pi_\eta](\Psi(u)) \left(W_\eta + W_\eta^\perp D[\hat N_\eta](u) \right) = D[\hat N_\eta](u).
\end{equation*}
Rearranging terms, we have
\begin{equation*}
    D[\hat N_\eta](u) = \left(\underbrace{\mathbf{I}_{D-k} - \left(W_\eta^\perp\right)^\top D[\pi_\eta](\Psi(u))W_\eta^\perp}_{=: \mathbb{A}}\right)^{-1} \underbrace{\left(W_\eta^\perp\right)^\top D[\pi_\eta](\Psi(u)) W_\eta}_{=: \mathbb{B}}.
\end{equation*}
\begin{tcolorbox}[colframe=white!,left=2pt,right=2pt,top=5pt,bottom=5pt,breakable]
\paragraph{Invertibility of $\mathbb{A}$.} Use $W_u$ to denote an orthogonal basis of $\uspan(T_{\Psi(u)}\hypothesismanifold)$. 
And for compactness, denote $P_u = P_{T_{\Psi(u)}\hypothesismanifold} = W_u W_u^\top$ and $P_\eta^\perp = P_{\left(T_{x_\uref} \hypothesismanifold\right)^\perp} = W^\perp_\eta (W^\perp_\eta)^\top$.
We have
\begin{equation}
    \left(W_\eta^\perp\right)^\top D[\pi_\eta](\Psi(u))W_\eta^\perp = \left(W_\eta^\perp\right)^\top P_uW_\eta^\perp = \left(W_\eta^\perp\right)^\top W_u W_u^\top W_\eta^\perp.
\end{equation}
Let $\sigma_{\max}(\cdot)$ denote the largest singular value of a matrix. One has
\begin{equation*}
    \sigma_{\max}(\left(W_\eta^\perp\right)^\top W_u W_u^\top W_\eta^\perp) = \sigma_{\max}(W_u^\top W_\eta^\perp \left(W_\eta^\perp\right)^\top W_u ).
\end{equation*}
Note that
\begin{equation*}
    W_u^\top W_\eta^\perp \left(W_\eta^\perp\right)^\top W_u + W_u^\top W_\eta W_\eta^\top W_u = \mathbf{I}_k.
\end{equation*}
Note that if $A + B = I$, then $\sigma_{\min}(A) = 1 - \sigma_{\max} (B)$, and hence
\begin{equation*}
    \sigma_{\max}(W_u^\top W_\eta^\perp \left(W_\eta^\perp\right)^\top W_u ) = 1 - \sigma_{\min}^2(W_\eta^\top W_u).
\end{equation*}
Use \citep[Theorem 2.1]{ji1987perturbation} again to obtain
\begin{equation*}
    \sigma_{\max}(\left(W_\eta^\perp\right)^\top W_u W_u^\top W_\eta^\perp) = \sin^2 \theta_{\max}(\uspan(W_u), \uspan(W_\eta)) = \|P_u - P_\eta\|^2_{op}.
\end{equation*}
Note that
\begin{equation*}
    P_u = D[\pi_\eta](\Psi(u)) = \mathbf{I}_D - D^2[\eta](\Psi(u)) \text{ and } P_\eta = D[\pi_\eta](x_\uref) = \mathbf{I}_D - D^2[\eta](x_\uref).
\end{equation*}
Hence by the smoothness constraint (last line in \cref{eq:distance_function_class}), we have
\begin{equation} \label{eqn_proof_P_u_P_eta_1}
    \|P_u - P_\eta\|^2_{op} = \|D^2[\eta](\Psi(u)) - D^2[\eta](x_\uref)\|^2_{op} \leq L_3 \|\Psi(u) - x_\uref\|^2 = \c{O}(h^2),
\end{equation}
since $\Psi(u) \in \euclideanball{D}(x_\uref, h)$.
Consequently, $\mathbb{A} \succeq (1 - \c{O}(h^2))\mathbf{I}_{D-k} \succ 0$ for $h$ sufficiently small.
\end{tcolorbox}

\begin{tcolorbox}[colframe=white!,left=2pt,right=2pt,top=5pt,bottom=5pt]
\paragraph{Boundedness of $\mathbb{B}$.} We can simply bound
\begin{equation*}
    \|\left(W_\eta^\perp\right)^\top D[\pi_\eta](\Psi(u)) W_\eta\|_{op} = \|\left(W_\eta^\perp\right)^\top W_u W_u^\top W_\eta\|_{op} \leq \|W_u^\top W_\eta\|_{op}.
\end{equation*}
Following the derivation in \cref{eqn_proof_P_u_P_eta_1}
Note that 
\begin{equation*}
    \forall u \in B_k(0, h), \ \|W_u^\top W_\eta\|_{op} = \|P_u - P_\eta\|_{op} = L_3 h.
\end{equation*}
\end{tcolorbox}
Combining the above derivation, we conclude that with $\hat L_1 = 2L_3$
\begin{equation*}
    \|D[\hat N_\eta](u) \leq \hat L_1 h,
\end{equation*}
for all $h$ below a constant threshold.


For higher derivatives, apply the multivariate Fa\`a di Bruno formula to the composition
$(W_\eta^\perp)^\top \pi_\eta\circ\Psi$. At order $j\ge 2$, every term in
$D^j\big[(W_\eta^\perp)^\top \pi_\eta\circ\Psi\big](u)$ is a finite sum of tensors built from:
\begin{itemize}
\item $D^\ell [\pi_\eta](\Psi(u))$ for $1\le \ell\le j$, and
\item derivatives $D^q[\Psi](u)$ for $1\le q\le j$.
\end{itemize}
Crucially, the only term in the expansion of $D^j\big[(W_\eta^\perp)^\top \pi_\eta\circ\Psi\big](u)$ that contains $D^j [\hat N_\eta](u)$ is
\begin{equation} 
    (W_\eta^\perp)^\top D[\pi_\eta](\Psi(u)) W_\eta^\perp D^j[\hat N_\eta](u),
\end{equation}
Hence the order-$j$ identity obtained by differentiating \eqref{eqn_regularity_hat_N_proof_1} at $0$ has the form
\[
D^j [\hat N_\eta](u)
= \mathbb{A}^{-1}
\mathbf{F}_j\Big( \{D^\ell[\pi_\eta](\Phi(u))\}_{\ell=1}^j,\; \{D^q [\hat N_\eta](u)\}_{q=1}^{j-1}\Big),
\]
where $\mathbf{F}_j$ is a universal multilinear combination (coming from the Fa\`a di Bruno formula) that does
\emph{not} involve $D^j [\hat N_\eta](u)$ on the right-hand side. This yields an induction:
assuming bounds for $D^q [\hat N_\eta](u)$ for $1\le q\le j-1$, one bounds $D^j [\hat N_\eta](u)$ by a polynomial in
$\|D[\pi_\eta](\Phi(u))\|,\dots,\|D^j[\pi_\eta](\Phi(u))\|$ with a constant $\hat L_j$ depending only on $(k,D,j, \mathbf{L})$.
This proves \eqref{eq:jet-bound}. 
\end{proof}

\subsection{Feasibility of the Ground Truth Distance Function, i.e. $\eta^\star \in \functionclass$} \label{appendix_feasibility_ground_truth}
Suppose that the parameters in $\mathbf{L}$ are taken to be sufficiently large.
All requirements in \Cref{eq:distance_function_class} are straight-forward to verify except for the non-escape boundary condition (second line).

To verify the non-escape condition, for any $x \in \partial \domain$, if we can show that for any $x \in \partial \domain$, $\dist(x, \truemanifold) \geq \reachmin / 2 -\epsilon$, conditioned on the high probability event that $Y_n \subseteq \truemanifold$ is an $\epsilon$-net of $\truemanifold$.
We can then use \Cref{lem:ambient_to_geodesic} to show that all points $y$ on $\truemanifold$ such that $\|y - x\| = \reachmin/2$ are close to $proj(x)$ in the manifold geodesic distance.
Since we have that the Euclidean distance between $y$ and $\proj(x)$ is bounded by the corresponding geodesic distance, we can use the law of cosines to ensure the existence of $\delta$ in the non-escape boundary (note that $\nabla \eta^\star(x) = x - \proj(x)$).

To establish a lower bound for $\dist(x, \truemanifold)$, notice that since $Y_n$ is an $\epsilon$-net of $\truemanifold$, there exists $y \in Y_n$ such that $\|y - \proj(x)\| \leq \epsilon$. Moreover, since $x \in \partial \domain$, $\|y - x\| \geq \reachmin/2$ (otherwise, $x \in \domain$ which is not in $\partial U$). We hence have
\begin{equation}
    \|x - \proj(x)\| \geq \|x - y\| - \|y - \proj(x)\| \geq \reachmin/2 - \epsilon.
\end{equation}

Now use \Cref{lem:ambient_to_geodesic} with $\anymanifold = \truemanifold$, $d = \dist(x, \truemanifold)$ and note that $\epsilon' \leq \epsilon$, we have that $d_{\truemanifold}^2(y, \proj(x)) = \c{O}(\epsilon)$. Following the above discussion, we ensure that $\nabla \eta^\star(x) = x - \proj(x)$ fulfills the non-escape boundary condition (second line of \Cref{eq:distance_function_class}).

%% file: appendix_DSM_to_PME_zebang.tex
\section{Proof of \Cref{thm:hausdorff_and_projection}}
\label{sec:proof_hausdorff_and_projection}

We take the following steps to prove \Cref{thm:hausdorff_and_projection2}. We then prove \Cref{thm:hausdorff_and_projection} based on \Cref{thm:hausdorff_and_projection2} in \Cref{appendix_C_beta_based_on_C_beta_1}.
\begin{itemize}
    \item We first show \Cref{assump:local_DSM} can be translated to guarantee that a local principal manifold estimation (PME) problem is solved with high accuracy.
    \item We then show that the small PME loss implies a polynomial estimation problem is solved up to the accuracy of $t$. 
    \item By picking $t = \c{O}(h^{2(\beta-1)})$, we can use \citep[Proposition 2]{AamariLevrard2019} to show that we have estimated the derivatives of the graph-of-function representation for the ground truth manifold $\truemanifold$ to a high accuracy. We can hence follow the same argument as \citep[Theorem 6]{AamariLevrard2019} to conclude the closeness between $\truemanifold$ and $\hypothesismanifold$ in the Hausdorff sense.
\end{itemize}
\subsection{From Denoising Score Matching to Principal Manifold Estimation} \label{appendix_DSM_to_PME}
Recall the definition of $U_2$ in \Cref{eq:def_U_2} and recall \Cref{lemma_distance_on_U_h} which proves that for any fixed $\eta \in \functionclass$, 
\begin{equation}
    \forall x \in \domain_2, \quad \eta(x) = \frac{1}{2}\dist^2(x, \hypothesismanifold),
\end{equation}
where $\hypothesismanifold = \{x\in \domain \mid \eta(x) = 0\}$ is the corresponding zero set.
Consequently, we have for every $s_\eta \in \c{S}$ (we use the subscript to highlight the correspondence between $\eta$ and $s$)
\begin{equation}
    \forall x \in \domain_2, \quad s_\eta(t, x) = -\frac{x - \pi_\eta(x)}{t},
\end{equation}
where $\pi_\eta$ denotes the projection onto $\hypothesismanifold$.

Define the truncated Gaussian measure as 
\begin{equation}
    z \sim \c{N}_{tr}^s(0, tI_d) = \frac{1}{Z_t \cdot (1-\mathrm{Pr}(\|z\| \geq s))} \exp(-\frac{\|z\|^2}{2t}) \mathbbm{1}(\|z\|\leq s).
\end{equation}
where $Z_t$ is the normalizing factor for the standard $d$ dimensional Gaussian with variance $t$, and $s$ is some threshold.

For a given reference point $x_\uref \in Y_n$, we define a corresponding Principal Manifold Estimation (PME) loss as follows
\begin{equation}
\label{eq:PME_loss}
    \PME_t(\eta) := \bb{E}_{x_0 \sim \muemprefh, x = x_0 + z, z\sim \c{N}_{tr}^h(0, tI_d)}\left[\dist^2(x, \hypothesismanifold)\right].
\end{equation}
We justify the naming of PME by noting that the above loss defines the average deviation of the samples $x$ from the corresponding zero set $\hypothesismanifold$. This is a non-linear extension to the classical principal component analysis.

\begin{lemma}
    Let $s_{\hat\eta} \in \c{S}$ be a function that satisfies \Cref{assump:local_DSM}. Let $\hat{\eta}$ be the corresponding function in $\functionclass$. We have
    \begin{equation}
        \PME_t(\hat{\eta}) = \c{O}(t).
    \end{equation}
\end{lemma}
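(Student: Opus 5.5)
The plan is to relate the local DSM objective of a score $s_\eta\in\cS$ to the PME loss by expanding the square, and then to use the coarse-optimality gap together with the feasibility of $\eta^\star$ to bound $\PME_t(\hat\eta)$.

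\textbf{Step 1: reduce DSM to a squared distance.} Fix $x_\uref\in Y_N$. For $x_0\in\supp(\muemprefh)$ and $x=x_0+z$, the conditional score is $-z/t$. On $\domain_2$ we have $s_\eta(x,t)=-(x-\pi_\eta(x))/t$ by \Cref{lemma_distance_on_U_h}, so
\[
\bigl\|s_\eta(x,t)+z/t\bigr\|^2=\frac{1}{t^2}\bigl\|x_0-\pi_\eta(x)\bigr\|^2 .
\]
Writing $x_0-\pi_\eta(x)=(x-\pi_\eta(x))-z$ and expanding gives
\[
\frac{1}{t^2}\Bigl(\dist^2(x,\hypothesismanifold)-2\langle x-\pi_\eta(x),z\rangle+\|z\|^2\Bigr).
\]
The event $\|z\|>h$ has Gaussian probability $e^{-\Omega(h^2/t)}$. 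Since $\|x_0-x_\uref\|\le h$ and $h\ll\reachmin$, the points $x$ with $\|z\|\le h$ lie in $\domain_2$. On the complementary event the integrand is at most $O(t^{-2})$, because $x$ stays in $\domain$ or the score vanishes outside it. Hence, up to an exponentially small error and the normalizing factor of the truncated Gaussian,
\[
t^2\,\LDSM_t(s_\eta;x_\uref)=\PME_t(\eta)-2\,\E\langle x-\pi_\eta(x),z\rangle+\E\|z\|^2 .
\]
The term $\E\|z\|^2$ does not depend on $\eta$. The cross term is bounded by Cauchy--Schwarz:
\[
\bigl|\E\langle x-\pi_\eta(x),z\rangle\bigr|\le \PME_t(\eta)^{1/2}\sqrt{Dt}.
\]

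\textbf{Step 2: compare with the ground truth.} Since $\eta^\star\in\functionclass$, we have $\inf_{s\in\cS}\LDSM_t(s;x_\uref)\le \LDSM_t(s_{\eta^\star};x_\uref)$. Because $x_0\in\truemanifold$, $\dist(x,\truemanifold)\le\|z\|$, so $\PME_t(\eta^\star)\le Dt$. Applying \Cref{assump:local_DSM} and multiplying by $t^2$, with $P:=\PME_t(\hat\eta)$, gives
\[
P-2\sqrt{DtP}\le Dt+2Dt+Ct .
\]
This is a quadratic inequality in $\sqrt P$. Solving it yields $\sqrt P\le c\sqrt t$, i.e.\ $\PME_t(\hat\eta)=O(t)$. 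The same argument works for the allowed exponents $\gamma<2$, since $Ct^{2-\gamma}=O(t)$ for $\gamma\le1$; for $\gamma\in(1,2)$ one only obtains the weaker bound $O(t^{2-\gamma})$. A normalization convention (per-unit mass of $\muemprefh$ or unnormalized) multiplies both sides equally and does not affect the conclusion.

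\textbf{Main obstacle.} The delicate part is justifying the truncation in Step 1. One must check that the truncated Gaussian in \eqref{eq:PME_loss} matches the untruncated Gaussian in $\DSM_t$ up to $e^{-\Omega(h^2/t)}t^{-2}$. This requires $t\lesssim h^2/\log N$, which is consistent with the later choice $t=O(h^{2(\beta-1)})$. In the regime $t\gg h^2$, I would instead localize using the radius $\reachmin/4$ of $\domain_2$ rather than $h$. This keeps the exponential tail $e^{-\Omega(\reachmin^2/t)}$ uniformly small for $t\le\tnought$, and the whole estimate stays within $\domain_2$, where $s_\eta$ has the projection form.
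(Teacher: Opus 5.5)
Your argument is correct where the lemma is actually used ($t\lesssim h^2$, e.g.\ $t=\cO(h^{2(\beta-1)})$), and it is essentially the paper's: expand the DSM square, compare with the feasible $\eta^\star$ via $\dist(x,\truemanifold)\le\|z\|$, and absorb the cross term. The paper does the last step with a pointwise Young inequality. This gives $\E\|s_{\hat\eta}\|^2=\cO(1/t)$, and the eikonal equation then yields $\dist^2(x,\mathcal{M}_{\hat\eta})=t^2\|s_{\hat\eta}(x,t)\|^2$ on $\domain_2$. Your exact identity plus Cauchy--Schwarz and a quadratic inequality in $\sqrt P$ does the same job.

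Two remarks on the truncation.

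First, you do not need a two-sided identity with exponentially small tails. That requirement forces $t\lesssim h^2/\log N$ and, for instance, does not cover $t\asymp h^2$. As in the paper, handle the two sides separately:
\begin{itemize}
\item On the $\hat\eta$ side, drop the nonnegative contribution of $\{\|z\|>h\}$.
\item On the $\eta^\star$ side, bound on the full Gaussian, $t^2\LDSM_t(s_{\eta^\star};x_\uref)\le \E\,[4\|z\|^2]\le 4Dt$, using $\|t\,s_{\eta^\star}(x,t)\|\le\dist(x,\truemanifold)\le\|z\|$.
\end{itemize}
The truncation then only costs the factor $(1-\PP(\|z\|>h))^{-1}\le 2$.

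Second, your proposed fix for $t\gg h^2$ does not work. The radius $h$ is built into \eqref{eq:PME_loss}, so returning from a $\reachmin/4$-localized bound to $\PME_t$ costs a factor $\PP(\|z\|\le h)^{-1}\approx (t/h^2)^{D/2}$. That factor destroys the $\cO(t)$ rate. In that regime the claim follows directly from anchoring instead: $x_\uref\in\mathcal{M}_{\hat\eta}$ and $\|x_0+z-x_\uref\|\le 2h$, so $\PME_t(\hat\eta)\le 4h^2\lesssim t$.
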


\begin{proof}
    Recall the definition of $\DSM_t(s; x_0)$ in \cref{eq:dsm-sigma-x0}.
Expand the above quadratic, we have
\begin{equation*}
    \DSM_t(s; x_0) 
    = \bb{E}_{x\sim q_t(x \mid x_0)}\left[\| s(t, x)\|^2 + \|\nabla_x \log q_t(x\mid x_0)\|^2 - 2 s(t, x) \cdot \nabla_x \log q_t(x\mid x_0)  \right]
\end{equation*}
Using Young's inequality for the last term, one has 
\begin{equation*}
    \left|2  s(t, x) \cdot \nabla_x \log q_t(x\mid x_0) \right| \leq \frac{1}{2}\| s(t, x)\|^2 + 2\|\nabla_x \log q_t(x\mid x_0)\|^2.
\end{equation*}
Moreover, one can explicitly calculate that
\begin{equation}
    \bb{E}_{x\sim q_t(x \mid x_0)}\left[\|\nabla_x \log q_t(x\mid x_0)\|^2\right] = \bb{E}_{(x - x_0) \sim \c{N}(0, t \mathbf{I}_d)} \left[\frac{\|x - x_0\|^2}{t^2}\right] = \frac{1}{t}.
\end{equation}
We hence have, for any $s \in \c{S}$ 
\begin{equation} \label{eqn_DSM_x_0_upper_lower}
    \frac{1}{2}\bb{E}_{x\sim q_t(x \mid x_0)}\left[\| s(t, x)\|^2\right] - \frac{1}{t} \leq \DSM_t(s; x_0) \leq \frac{3}{2}\bb{E}_{x\sim q_t(x \mid x_0)}\left[\| s(t, x)\|^2\right] + \frac{3}{t}
\end{equation}

Let $\trueprojection$ denote the projection onto the ground truth manifold $\truemanifold$.
We use $s^\star$ to denote the score function corresponding to the ground truth manifold, i.e.
\begin{equation}
    s^\star = - \frac{x - \trueprojection(x)}{t} \text{ for } x\in \domain \quad \text{and} \quad s^\star = 0 \text{ otherwise}.
\end{equation}
We have (use $\c{N}(x; x_0, t\mathbf{I}_d)$ to denote the density of $\c{N}(x_0, t\mathbf{I}_d)$ at $x$)
\begin{align}
    \bb{E}_{x\sim q_t(x \mid x_0)}\left[\| s^\star(x)\|^2\right] =&\ \int_{x \in \domain} \frac{1}{t^2}\dist(x, \truemanifold)^2 \c{N}(x; x_0, t\mathbf{I}_d) \ud x \notag \\
    (\text{since }x_0 \in \truemanifold) \quad \leq&\ \int_{x \in \domain} \frac{1}{t^2}\|x-x_0\|^2 \c{N}(x; x_0, t\mathbf{I}_d) \ud x \leq \int \frac{1}{t^2}\|x-x_0\|^2 \c{N}(x; x_0, t\mathbf{I}_d) \ud x \notag \\
    =&\ \c{O}(\frac{1}{t}).    \label{eqn_DSM_x_0_upper_x^*}
\end{align}

Using the first inequality in \cref{eqn_DSM_x_0_upper_lower}, we have
\begin{equation}
    \frac{1}{2}\bb{E}_{x_0 \sim \muemprefh, x\sim q_t(x \mid x_0)}\left[\| s_{\hat\eta}(x)\|^2\right] - \frac{1}{t}\leq \bb{E}_{x_0 \sim \muemprefh} [\DSM_t(s_{\hat\eta}; x_0)]
\end{equation}
Using \cref{assump:local_DSM} (w.l.o.g., assume the constant $C$ therein is $C=1$)  and 
\[
    \min_{\eta \in \functionclass} \bb{E}_{x_0 \sim \muemprefh} [\DSM_t(s_\eta; x_0)]\leq\bb{E}_{x_0 \sim \muemprefh} [\DSM_t(s^\star; x_0)]
\]
(since $s^\star$ is feasible), we have
\begin{align}
    \bb{E}_{x_0 \sim \muemprefh} [\DSM_t(s_{\hat\eta}; x_0)]
    \leq \bb{E}_{x_0 \sim \muemprefh} [\DSM_t(s^\star; x_0)] + \frac{1}{t}
\end{align}
Using the second inequality in \cref{eqn_DSM_x_0_upper_lower} and \cref{eqn_DSM_x_0_upper_x^*}, we have
\begin{equation}
    \bb{E}_{x_0 \sim \muemprefh} [\DSM_t(s^\star; x_0)] \leq \frac{3}{2}\bb{E}_{x_0 \sim \muemprefh, x\sim q_t(x \mid x_0)}\left[\| s^\star(x)\|^2\right] + \frac{3}{t} = \c{O}(\frac{1}{t})
\end{equation}
Combining the above results, we have
\begin{equation}
    \bb{E}_{x_0 \sim \muemprefh, x\sim q_t(x \mid x_0)}\left[\| s_{\hat\eta}(x)\|^2\right] = \c{O}(\frac{1}{t})
\end{equation}
By \Cref{thm:eikonal-distance}, one has
\begin{equation}
    \bb{E}_{x_0 \sim \muemprefh, x\sim q_t(x \mid x_0)\cdot \mathbbm{1}(\domain)}[d_\domain^2(x, \estimatedmanifold)] = \c{O}(t).
\end{equation}
Since $d_\domain^2(x, \estimatedmanifold) \geq 0$, we clearly have
\begin{equation} \label{eqn_proof_1}
    \bb{E}_{x_0 \sim \muemprefh, x\sim q_t(x \mid x_0)\cdot \mathbbm{1}(\domain_2)}[\dist^2(x, \estimatedmanifold)] = \c{O}(t),
\end{equation}
where we used that $\dist_\domain^2(x, \estimatedmanifold)$ and $\dist^2(x, \estimatedmanifold)$ agree on $\domain_2$.

Next, we show that \Cref{eqn_proof_1} implies $\PME_t(\hat \eta) = \c{O}(t)$.
Note that for any $\eta \in \functionclass$
\begin{align*}
    \PME_t(\eta) = \bb{E}_{x_0 \sim \muemprefh}\int_{\|z\|\leq h}\dist^2(x_0 + z, \hypothesismanifold) \frac{1}{Z_t \cdot (1-\mathrm{Pr}(\|z\| \geq h))} \exp(-\frac{\|z\|^2}{2t}) \ud z \\
    \leq \bb{E}_{x_0 \sim \muemprefh}\int_{\|z\|\leq \tau_{\min}/4}\dist^2(x_0 + z, \hypothesismanifold) \frac{1}{Z_t \cdot (1-\mathrm{Pr}(\|z\| \geq h))} \exp(-\frac{\|z\|^2}{2t}) \ud z \\
    = \frac{1}{1-\mathrm{Pr}(\|z\| \geq h)} \bb{E}_{x_0 \sim \muemprefh, x\sim q_t(x \mid x_0)\cdot \mathbbm{1}(\domain_2)}[\dist^2(x, \hypothesismanifold)]
\end{align*}
Hence it suffices to show that for $z \sim \c{N}(0, tI_d)$
\begin{equation}
    \mathrm{Pr}(\|z\| \geq h) \leq \frac{1}{2}.
\end{equation}
\paragraph{The probability that $\|z\| \geq h$}
We note that the probability of the event that $\|z\| \geq h$ is bounded by (up to a constant)
\[
I(t):= \frac{1}{t^{D/2}}\int_{|z|\ge h}e^{-|z|^2/t} dz,
\]
up to some constant.
Moreover, by the standard gamma function asymptotics, we have
\begin{equation}
    I(t) = \c{O}(t^{-D/2} \exp( - h/t)).
\end{equation}
Since we choose $t = \c{O}(h^{2(\beta-1)})$, $\mathrm{Pr}(\|z\| \geq h) \leq \frac{1}{2}$ for a sufficiently small $h$.

\end{proof}

\subsection{From Principal Manifold Estimation to Polynomial Estimation}\label{appendix_PME_to_PE}
Recall the parameterization of the hypothesis submanifold $\hypothesismanifold$ under the ground-truth basis $(W_\uref, W_\uref^\perp)$ in \Cref{section_change_of_basis}.
To approximately recover the ground truth manifold $\truemanifold$, it is sufficient if one shows the hypothesis coordinate function $N_\eta$ (defined in \cref{eqn_graph_of_function_ground_truth_basis}) is close to the ground truth one $N_\uref$.
In this section, we show that when the PME loss is small, so is the $\c{L}^2$ distance between $N_\eta$ and $N_\uref$, under the measure $[\muemprefh]$.
This result together with \citep[Proposition 2]{AamariLevrard2019} shows that the coefficients of the Taylor's expansion of $N_\eta$ and $N_\uref$ around $0$ (this corresponds to $x_\uref$) are close up to the $\beta^{th}$ order. 
Following the same argument as in \citep{AamariLevrard2019}, one can bound the Hausdorrf distance between $\hypothesismanifold$ and $\truemanifold$.

\begin{lemma} \label{lemma_PME_to_PE}
    Let $s_{\hat\eta}\in\c{S}$ be the hypothesis score function that satisfies \Cref{assump:local_DSM} and let $\hat\eta$ be the corresponding function in $\functionclass$.
    We have (denote $ v(x_0) = W_\uref^\top(x_0 - x_\uref)$)
    \begin{equation}
        \bb{E}_{x_0 \sim \muemprefh} \|N_\eta(v(x_0) ) - N_\uref(v(x_0))\|^2 = \c{O}(t)
    \end{equation}
    Further, let $T_\beta(v)$ denote the $(\beta-2)^{th}$ order Taylor expansion of $(N_\uref - N_\eta)$ around $0$. 
    By taking $t = \c{O}(h^{2(\beta-1)})$, we have 
\begin{equation} \label{eqn_PE_loss}
    \bb{E}_{x_0 \sim \muemprefh} [\|T_\beta(v(x_0))\|^2] = \c{O}(h^{2(\beta-1)}).
\end{equation}
\end{lemma}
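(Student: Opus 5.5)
Starting from the PME bound of the previous lemma, $\PME_t(\hat\eta)=\cO(t)$ (equivalently \eqref{eqn_proof_1}), I will pass from the average squared distance of noisy samples to $\cM_{\hat\eta}$ to the average squared normal discrepancy $\|N_{\hat\eta}(v(x_0))-N_\uref(v(x_0))\|^2$ at the clean anchor points. Every $x_0$ in the support of $\muemprefh$ lies on $\truemanifold\cap\euclideanball{D}(x_\uref,h)$. Hence $x_0=x_\uref+W_\uref v(x_0)+W_\uref^\perp N_\uref(v(x_0))$, while the point $\hat x_0:=x_\uref+W_\uref v(x_0)+W_\uref^\perp N_{\hat\eta}(v(x_0))$ lies on $\cM_{\hat\eta}$ by \Cref{thm_representation_in_W_ground_truth}. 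The first step is a lower bound on $\dist(x_0,\cM_{\hat\eta})$ in terms of $\|x_0-\hat x_0\|=\|N_\uref-N_{\hat\eta}\|(v(x_0))$. \Cref{lemma_regularity_graph_of_function_ground_truth_basis} gives $\|D N_{\hat\eta}\|_{\op}=\cO(h)$ on $B_k(0,h)$, so the graph is nearly flat. A vertical gap $g$ then translates into Euclidean distance at least $g/\sqrt{1+\cO(h^2)}\ge g/2$. This uses the fact that $\cM_{\hat\eta}\cap\euclideanball{D}(x_\uref,\cdot)$ is a single graph, which \Cref{lemma_one_connected_component} guarantees.

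The second step transfers from the clean points $x_0$ to the noisy points $x=x_0+z$. I will use $\dist(x_0,\cM_{\hat\eta})\le \dist(x,\cM_{\hat\eta})+\|z\|$ and square it, which gives
\[
\dist^2(x_0,\cM_{\hat\eta})\le 2\dist^2(x,\cM_{\hat\eta})+2\|z\|^2 .
\]
Averaging over the truncated Gaussian, the second moment satisfies $\E\|z\|^2\le Dt/(1-\Pr(\|z\|\ge h))\le 2Dt$, by the tail bound already shown. Together with $\PME_t(\hat\eta)=\cO(t)$ and the first step, this yields $\E_{x_0\sim\muemprefh}\|N_{\hat\eta}(v(x_0))-N_\uref(v(x_0))\|^2=\cO(t)$, which is the first claim. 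All noisy points stay in $\domain_2$, where $\dist$ and $d_\domain$ agree, because $\|z\|\le h\ll\reachmin/4$.

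For \eqref{eqn_PE_loss}, I will write $N_\uref-N_{\hat\eta}=T_\beta+R_\beta$, where $R_\beta$ is the Taylor remainder. Both $N_\uref$ and $N_{\hat\eta}$ are $\cC^{\beta-1}$ with $(\beta-1)$-th derivatives bounded by constants: for $N_\uref$ this comes from \Cref{def_graph_of_function}, and for $N_{\hat\eta}$ from \Cref{lemma_regularity_graph_of_function_ground_truth_basis}. Hence $\|R_\beta(v)\|\lesssim\|v\|^{\beta-1}\le h^{\beta-1}$ on $B_k(0,h)$. Then
\[
\E\|T_\beta\|^2\le 2\,\E\|N_\uref-N_{\hat\eta}\|^2+2\sup\|R_\beta\|^2=\cO(t)+\cO(h^{2(\beta-1)}),
\]
and the choice $t=\cO(h^{2(\beta-1)})$ balances the two terms.

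I expect the main obstacle to be the first step, namely ensuring that the nearest point of $\cM_{\hat\eta}$ to $x_0$ lies on the same local graph patch and not on a distant sheet. I would handle it by noting that the nearest point lies within distance $\|x_0-\hat x_0\|\le\|x_0-x_\uref\|+\|\hat x_0-x_\uref\|=\cO(h)$ of $x_0$, so it sits in a ball of radius $\cO(h)$ contained in $\domain_2$. By \Cref{lemma_one_connected_component}, $\cM_{\hat\eta}$ meets that ball in a single component, which is the graph patch. A secondary subtlety is the normalization of $\muemprefh$ and whether its mass is nonzero, but constants of this kind are absorbed into $\cO(\cdot)$.
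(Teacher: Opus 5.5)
Your proposal is correct. For the first claim it takes a different and shorter route than the paper.

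The paper works directly at the noisy point $x=x_0+z$. It splits $z=W_\uref z_T+W_\uref^\perp z_N$ and writes $\dist^2(x,\cM_{\hat\eta})$ as a constrained minimum over the chart parameter. It then linearizes $N_{\hat\eta}(v+\Delta_v)\approx N_{\hat\eta}(v)+\mathbf{J}_v\Delta_v$, controlling the second-order error $|L-L_0|$. Next it minimizes the resulting quadratic in closed form, after checking that $\Delta^*=A^{-1}b$ is feasible. Finally it absorbs the cross terms with Young's inequality to reach $\dist^2(x,\cM_{\hat\eta})+\cO(\|z\|^2)\ge \tfrac{1}{4(1+\lambda_{\max})}\|N_\uref(v)-N_{\hat\eta}(v)\|^2$.

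You get the same inequality from two ingredients. The first is a first-order geometric fact at the clean point: the distance from a point to the graph of an $L$-Lipschitz map is at least the vertical gap divided by $\sqrt{1+L^2}$. The second is that $\dist(\cdot,\cM_{\hat\eta})$ is $1$-Lipschitz. This needs only a Lipschitz bound on $N_{\hat\eta}$, with no second-order remainder and no feasibility check. The paper tracks the tangential and normal noise components separately but in the end bounds them all by $\cO(\|z\|^2)$, so your shortcut loses nothing.

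To finish your localization step via \Cref{lemma_one_connected_component}, note that the chart image is relatively open and closed in $\cM_{\hat\eta}\cap\euclideanball{D}(x_\uref,2h)$ when the chart lives on a $u$-ball of constant radius. With that, your localization is at least as careful as the paper's inactive-constraint argument. Your Taylor-remainder step for \eqref{eqn_PE_loss} is the same as the paper's.

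One correction: \Cref{lemma_regularity_graph_of_function_ground_truth_basis} does not give $\|DN_{\hat\eta}\|_{\op}=\cO(h)$. The $\cO(h)$ slope in \Cref{lemma_regularity_graph_of_function_hypothesis_basis} is for $\hat N_{\hat\eta}$, which is expressed in the hypothesis basis. In the ground-truth basis, \eqref{eqn_first_order_N_psi} gives $D[N_{\hat\eta}](0)=(W_\uref^\perp)^\top W_{\hat\eta}(W_\uref^\top W_{\hat\eta})^{-1}$. Its norm is $\tan\theta_{\max}$ for the angle between $T_{x_\uref}\cM_{\hat\eta}$ and $T_{x_\uref}\truemanifold$, and the angle constraints only bound this by $\tan(0.2\pi)$. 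So replace $g/\sqrt{1+\cO(h^2)}\ge g/2$ with $g/\sqrt{1+(L_1')^2}$. This constant plays the role of the paper's $\lambda_{\max}$, and the $\cO(t)$ conclusion is unaffected.
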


\begin{proof}
First, let us decompose the PME loss into $\uspan(W_\uref)$ and $\uspan(W_\uref^\perp)$. To do so, notice that we can rewrite the $x \sim q_t(x \mid x_0)$ equivalently as
\begin{equation}
    x \stackrel{\mathrm{d}}{=} x_0 + z, \ x_0\in\truemanifold, \  z \sim \c{N}(0, t \mathbf{I}_d).
\end{equation}
Recall the local representation of a submanifold in \Cref{section_local_representation_submanifold}, we can write (note that for every $x_0$ there is a corresponding $v = W_\uref^\top (x_0 - x_\uref)$)
\begin{equation}
    x_0 = x_\uref + W_\uref v + W^\perp_\uref N_\uref(v).
\end{equation}
Decompose the random noise $z$ accordinig to the basis $(W_\uref, W_\uref^\perp)$
\begin{equation}
    z = W_\uref z_T + W^\perp_\uref z_N, \text{ where } z_T \sim \c{N}(0, t \mathbf{I}_k) \text{ and } z_N \sim \c{N}(0, t \mathbf{I}_{d-k}).
\end{equation}
In the following discussion, we condition on the event that $\|z\| \leq h$ (since in the $\PME_t$ loss, the expectation is conditioned on this event).
\begin{figure}[h]
    \centering
    \includegraphics[width=\linewidth]{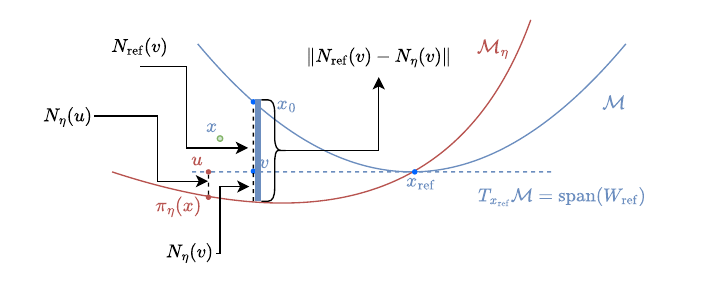}
\end{figure}
Recall the representation of $\hypothesismanifold$ under the basis $(W_\uref, W_\uref^\perp)$ in \Cref{thm_representation_in_W_ground_truth}:
Define the function $\hat x_\eta: \bb{R}^k \rightarrow \bb{R}^d$
\begin{equation}
    \hat x_\eta(u) = x_\uref + W_\uref u + W^\perp_\uref N_\eta(u).
\end{equation}
By definition, we have $\hat x_\eta(u) \in \hypothesismanifold$.
The projection operator onto $\hypothesismanifold$ can be recovered in the following way: Define
\begin{equation} \label{eqn_optimization_def_distance}
    u_\eta(x) = \argmin_{u: \|v-u\| \leq C_h h} \| x - \hat x_\eta(u) \|^2,
\end{equation}
For some sufficiently large constant $C_h \geq 4$ (to be decided later). Note that the constraint $\|v-u\| \leq C_h h$ is inactive, as discussed in \Cref{section_inactive_constraint}.
One has
\begin{equation}
    \pi_\eta(x) = W_\uref u_\eta(x) + W^\perp_\uref N_\eta(u_\eta(x)).
\end{equation}
We can hence decompose $\dist^2(x, \hypothesismanifold)$ under the basis $(W_\uref, W_\uref^\perp)$
\begin{equation}
    \dist^2(x, \hypothesismanifold) = \min_{u: \|v-u\| \leq C_h h} \left\{\| x - \hat x_\eta(u)\|^2 = \| v + z_T - u\|^2 + \|N_\uref(v) + z_N - N_\eta(u)\|^2\right\}.
\end{equation}
Define $\Delta_v = u - v$ (we focus on $\Delta_v = \c{O}(h)$ due to the constraint in \cref{eqn_optimization_def_distance}) and define $L: \bb{R}^k \to \bb{R}$
\begin{equation} \label{eqn_definition_L}
    \| x - \hat x_\eta(u)\|^2 = L(\Delta_v) := \| \Delta_v - z_T\|^2 + \|N_\uref(v) + z_N - N_\eta(v + \Delta_v) \|^2
\end{equation}
Recall the first-order Taylor expansion of $N_\eta$ in  \Cref{thm_representation_in_W_ground_truth},
\begin{equation}
    N_\eta(v + \Delta_v) = N_\eta(v) + D[N_\eta](v) \Delta_v + \c{O}(\|\Delta_v\|^2),
\end{equation}
with $D[N_\eta]$ specified in \cref{eqn_Jacobian_N_psi}. For compactness, we use $\mathbf{J}_v$ to denote $D[N_\eta](v)$.

To exploit this expansion, define 
\begin{align}
    L_0(\Delta_v) = \| \Delta_v - z_T\|^2 + \| N_\uref(v) + z_N - N_\eta(v) - \mathbf{J}_v \Delta_v\|^2,
\end{align}
that is, we ignore the higher order term in the second term of \cref{eqn_definition_L}.
We note that $L_0(\Delta_v)$ is quadratic in $\Delta_v$. 

The difference between $L(\Delta_v)$ and $L_0(\Delta_v)$ writes
\begin{align}
    L(\Delta_v) - L_0(\Delta_v) = 2 \langle  N_\uref(v) + z_N - N_\eta(v) - \mathbf{J}_v^\top \Delta_v , \c{O}(\|\Delta_v\|^2) \rangle + \c{O}(\|\Delta_v\|^4).
\end{align}
One can hence bound
\begin{align*}
    \left|L(\Delta_v) - L_0(\Delta_v)\right| \leq&\ 2 \langle  N_\uref(v) - N_\eta(v) - \mathbf{J}_v^\top \Delta_v , \c{O}(\|\Delta_v\|^2) \rangle + \c{O}(\|\Delta_v\|^4) + \|z_N\|^2 \\
    =&\ 2 \langle  (N_\uref(v) - N_\eta(v))\|\Delta_v\|^{0.5}, \c{O}(\|\Delta_v\|^{1.5})\rangle + \langle \mathbf{J}_v^\top \Delta_v , \c{O}(\|\Delta_v\|^2) \rangle + \c{O}(\|\Delta_v\|^4) + \|z_N\|^2\\
    \leq&\ \|N_\uref(v) - N_\eta(v)\|^2 \|\Delta_v\| + \c{O}(\|\Delta_v\|^3) + \|z_N\|^2 \\
    \leq&\ \c{O}(h) \|N_\uref(v) - N_\eta(v)\|^2  + \c{O}(h)\|\Delta_v\|^2 + \|z_N\|^2.
\end{align*}
where we use the fact that $\Delta_v = \c{O}(h)$.
Combining the above results, we have
\begin{equation}
    L(\Delta_v) \geq L_0(\Delta_v) - \c{O}(h) \|N_\uref(v) - N_\eta(v)\|^2  -\c{O}(h)\|\Delta_v\|^2 -\|z_N\|^2.
\end{equation}
Take minimum on both sides, we have
\begin{equation} \label{eqn_lower_bound_L_delta_v_1}
    \min_{\Delta_v: \|\Delta_v\| \leq C_h h} L(\Delta_v) \geq \min_{\Delta_v: \|\Delta_v\| \leq C_h h} \left\{L_0(\Delta_v) - \c{O}(h) \|N_\uref(v) - N_\eta(v)\|^2  -\c{O}(h)\|\Delta_v\|^2 - \|z_N\|^2\right\}.
\end{equation}
This is of interest because
\begin{equation}
    \dist^2(x, \hypothesismanifold) =   \min_{u: \|v-u\| \leq C_h h} \| x - \hat x_\eta(u)\|^2 = \min_{\Delta_v: \|\Delta_v\| \leq C_h h} L(\Delta_v).
\end{equation}
Notice that the R.H.S. of \cref{eqn_lower_bound_L_delta_v_1} is a quadratic w.r.t. $\Delta_h$, i.e. we have $L_0(\Delta_v) -\c{O}(h)\|\Delta_v\|^2 = \|\Delta_v\|^2_A + 2 \langle \Delta_v, b\rangle + c $ with
\begin{align*}
    A:=&\ {\mathbf{I}_k + \mathbf{J}_v^\top\mathbf{J}_v - \c{O}(h)}, \\
    b:=&\ z_T + \mathbf{J}_v (N_\uref(v) + z_N - N_\eta(v)), \\
    c:=&\ \| N_\uref(v) + z_N - N_\eta(v)\|^2 + \|z_T\|^2.
\end{align*}
Denote $\lambda_{\max} = \sigma_{\max} (\mathbf{J}_v^\top \mathbf{J}_v)$. Note that 
\begin{equation*}
    1 - \c{O}(h) \preceq A \preceq 1 + \lambda_{\max} - \c{O}(h).
\end{equation*}
The minimizer of the above quadratic is attained at $\Delta^* = A^{-1} b$. We can show that $\Delta^* = \c{O}(h)$ so it remains feasible when $C_h$ defined in \cref{eqn_optimization_def_distance} is sufficiently large. This is discussed in \Cref{section_inactive_constraint_approximate}.

Hence
\begin{align*}
    &\ \min_{\Delta_v: \|\Delta_v\| \leq C_h h} L_0(\Delta_v) -\c{O}(h)\|\Delta_v\|^2 = c - \|b\|^2_{A^{-1}}\\
    =&\  \| N_\uref(v) + z_N - N_\eta(v)\|^2 + \|z_T\|^2  - \|z_T + \mathbf{J}_v (N_\uref(v) + z_N - N_\eta(v))\|^2_{A^{-1}} \\
    =&\  \| N_\uref(v) - N_\eta(v)\|^2  - \|\mathbf{J}_v (N_\uref(v) - N_\eta(v))\|^2_{A^{-1}} \\
    &\ + 2\langle z_N,N_\uref(v) - N_\eta(v)\rangle  - 2\langle z_T + \mathbf{J}_v z_N, \mathbf{J}_v(N_\uref(v) - N_\eta(v)) \rangle_{A^{-1}} \\
    &\ + \|z_N\|^2 + \|z_T + \mathbf{J}_v z_N\|^2_{A^{-1}}  + \|z_T\|^2
\end{align*}
Bound
\begin{align*}
    2|\langle z_N,N_\uref(v) - N_\eta(v)\rangle| \leq \underbrace{4\frac{1 - \c{O}(h) + \lambda_{\max}}{1 - \c{O}(h) }\|z_N\|_{A}^2}_{= \c{O}(\|z\|^2)} + \frac{1}{4} \frac{1 - \c{O}(h) }{1 - \c{O}(h) + \lambda_{\max}}\|N_\uref(v) - N_\eta(v)\|^2,
\end{align*}
and
\begin{align*}
    &\ 2|\langle z_T + \mathbf{J}_v z_N, \mathbf{J}_v(N_\uref(v) - N_\eta(v)) \rangle_{A^{-1}}| \\
    \leq&\ \underbrace{4\frac{1 - \c{O}(h) + \lambda_{\max}}{1 - \c{O}(h) }\|\mathbf{J}_v^\top A^{-1} z_T + \mathbf{J}_v^\top A^{-1}\mathbf{J}_v z_N\|^2}_{= \c{O}(\|z\|^2)} + \frac{1 - \c{O}(h) }{1 - \c{O}(h) + \lambda_{\max}}\frac{1}{4}\|N_\uref(v) - N_\eta(v)\|^2
\end{align*}
We have that 
\begin{equation*}
    \|\mathbf{J}_v (N_\uref(v) - N_\eta(v))\|^2_{A^{-1}} \leq (1 - \frac{1 - \c{O}(h) }{1 - \c{O}(h) + \lambda_{\max}}) \|N_\uref(v) - N_\eta(v)\|^2,
\end{equation*}
\begin{align*}
    &\ \min_{\Delta_v: \|\Delta_v\| \leq C_h h} L_0(\Delta_v) -\c{O}(h)\|\Delta_v\|^2\\
    \geq&\  \| N_\uref(v) - N_\eta(v)\|^2 - \|\mathbf{J}_v (N_\uref(v) - N_\eta(v))\|^2_{A^{-1}} -\frac{1}{2} \frac{1 - \c{O}(h) }{1 - \c{O}(h) + \lambda_{\max}} \|N_\uref(v) - N_\eta(v)\|^2- \c{O}(\|z\|^2)\\
    \geq&\  \frac{1}{2}\left(\frac{1 - \c{O}(h) }{1 - \c{O}(h) + \lambda_{\max}}\right)\| N_\uref(v) - N_\eta(v)\|^2 - \c{O}(\|z\|^2)
\end{align*}
We can hence bound (for $h$ sufficiently small)
\begin{equation*}
    \dist^2(x, \hypothesismanifold) + \c{O}(\|z\|^2) \geq \frac{1}{2}\left(\frac{1 - \c{O}(h) }{1 - \c{O}(h) + \lambda_{\max}} \right)\| N_\uref(v) - N_\eta(v)\|^2 \geq \frac{1}{4(1+\lambda_{\max})}\| N_\uref(v) - N_\eta(v)\|^2.
\end{equation*}
Take expectation w.r.t. $x = x_0 + z$ with $ z\sim \c{N}_{tr}^h(0, tI_d)$ and $x_0 \sim \muemprefh$. The first term on the L.H.S. becomes $\PME_t(\eta)$ and the second term can be bounded by 
\begin{equation}
    \E_{z\sim \c{N}_{tr}^h(0, tI_d)}[\|z\|^2 ] \leq  \E_{z\sim \c{N}(0, tI_d)}[\|z\|^2 ] = \c{O}(t).
\end{equation}
and we hence have the first conclusion by taking $\eta = \hat \eta$.

\paragraph{Polynomial Estimation.}
Recall that by definition $N_\uref \in \c{C}^{\beta}$ and its derivatives are bounded in operator norm (see \Cref{def_graph_of_function}).
Moreover, recall that we prove in \Cref{thm_representation_in_W_ground_truth} $N_\eta \in \c{C}^{\beta-1}$ and show that its derivatives are bounded in operator norm in \Cref{lemma_regularity_graph_of_function_ground_truth_basis}.
Consequently, we have that $(N_\uref - N_\eta)$ is a $\c{C}^{\beta-1}$ and with its derivatives (up to $(\beta-1)^{th}$ order) bounded in operator norm.
Let $T_\beta(v)$ denote the $(\beta-2)^{th}$ order Taylor expansion of $(N_\uref - N_\eta)$ around $0$. We have
\begin{equation}
    \bb{E}_{x_0 \sim \muemprefh} [\|T_\beta(v) - (N_\uref(v) - N_{\hat\eta}(v))\|^2] = \c{O}(h^{2(\beta-1)}).
\end{equation}
By taking $t = \c{O}(h^{2(\beta-1)})$, we have (note that $W_\uref^\top (x - x_\uref) = v$)
\begin{equation}
    \bb{E}_{x_0 \sim \muemprefh} [\|T_\beta(v)\|^2] = \c{O}(h^{2(\beta-1)}).
\end{equation}
\end{proof}

\subsection{From Polynomial Estimation to Hausdorff Distance Bound}
\label{sec:high_prob_event}



\Cref{lemma_PME_to_PE} together with \citep[Proposition 2]{AamariLevrard2019} ensures that the coefficients of the polynomial $T_\beta$ are all close to 0.
We can hence use \citep[Theorem 6]{AamariLevrard2019} to conclude that
\begin{equation*}
    d_H(\truemanifold, \estimatedmanifold) = \c{O}(h^{\beta-1})
\end{equation*}
with probability at least $1 - \c{O}\left(\left(\frac{1}{N}\right)^{\frac{\beta}{k}}\right)$ if we take $h = \Theta\left(\left(\frac{\log N}{N}\right)^{\frac{1}{k}}\right)$.


\subsubsection{Missing Proofs: The constraint in \Cref{eqn_optimization_def_distance}} \label{section_inactive_constraint}
Under the condition that $\|z\| \leq h$, we have
\begin{equation*}
    \dist^2(x, \hypothesismanifold) = \|x - \pi_\eta(x)\|^2 \leq \|x - x_\uref\|^2 = \|x_0 + z - x_\uref\|^2 \leq 2\|x_0 - x_\uref\|^2 + 2\|z\|^2 = 4h^2.
\end{equation*}
Moreover, considering only the error in the tangent space, we have
\begin{equation*}
    \|x - \pi_\eta(x)\|^2 \geq  \|W_\uref^\top (x - x_\uref) - W_\uref^\top(\pi_\eta(x) - x_\uref)\|^2 = \|v + z_T - \hat u_\eta(x)\|^2 \geq \frac{1}{2}\| v - \hat u_\eta(x)\|^2 - 2\|z_T\|^2.
\end{equation*}
Combining these two inequalities together, we have
\begin{equation*}
    \| v - \hat u_\eta(x)\|^2 \leq 12 h^2.
\end{equation*}
Hence the constraint in \Cref{eqn_optimization_def_distance} is inactive, if we take $C_h > 12$.

\subsubsection{Missing Proofs: Bound on $\|\Delta_h^*\|$} \label{section_inactive_constraint_approximate}
For a sufficiently small $h$, $A$ is close to identity. All we need to bound is $\|b\|$.
Note that both $\|z_T\|$ and $\|z_N\|$ are bounded by $\|z\|$ and $\mathbf{J}_v$ is bounded in operator norm. We hence only need to bound $\|N_\uref(v) - N_\eta(v)\|$.

Using $N_\uref(0) = N_\eta(0) = 0$ and $DN_\uref(0) = DN_\eta(0) = 0$, that  we have
\begin{align*}
    \|N_\uref(v) - N_\eta(v)\| = \|N_\uref(v) - N_\uref(0) - (N_\eta(v) - N_\eta(0))\| \\
    \leq \underbrace{\|N_\uref(v) - N_\uref(0)\|}_{\c{O}(\|v\|^2)} + \underbrace{\|N_\eta(v) - N_\eta(0)\|}_{\c{O}(\|v\|)} = \c{O}(\|v\|) = \c{O}(h),
\end{align*}
where the first term is of higher order since by definition $D[N_\uref](0) = 0$ (see \Cref{def_graph_of_function}) and the estimation of the second term is from the Lipschitz continuity of $D[N_\eta]$ in \Cref{lemma_regularity_graph_of_function_ground_truth_basis}.

Combining the argument above, we have the conclusion $\|\Delta^*\| = \c{O}(h)$.
Hence if we take $C_h > \max{12, L_1'}$, the constraint is not active. Here we recall the definition of $L_1'$ in \Cref{lemma_regularity_graph_of_function_ground_truth_basis}.

\subsection{Hausdorff closeness implies projection closeness}
\begin{lemma}[Hausdorff closeness implies projection closeness]\label{lem:hausdorff_to_projection}
Let $\cM,\cMest\subset\R^D$ be closed embedded submanifolds, and assume
\[
\reach(\cM)\ge \reachmin,\qquad \reach(\cMest)\ge \reachmin,\footnote{We apply \Cref{lem:hausdorff_to_projection} with $\cM$ the true manifold and
\[
\cMest := \{x \in \mathbb{U} : \slearned(x,t)=0\},
\]
where $\slearned$ denotes the estimated score in \Cref{thm:hausdorff_and_projection}. The manifold $\cM$ has positive reach by assumption, and the analogous reach property for $\cMest$ follows from the proof in \Cref{sec:proof_hausdorff_and_projection}.}
\qquad
\dhaus(\cM,\cMest)\le \varepsilon,
\]
for some $\reachmin>0$ and $\varepsilon\in(0,\reachmin/4)$.  Fix any $r\in(0,\reachmin-2\varepsilon)$ and let
$\proj:\cT_{\reachmin}(\cM)\to\cM$ and $\projest:\cT_{\reachmin}(\cMest)\to\cMest$ denote the nearest-point projections.
Then, for every $x\in\cT_r(\cM)$, both $\proj(x)$ and $\projest(x)$ are well-defined and
\begin{equation}\label{eq:proj_stability_main}
\|\proj(x)-\projest(x)\|
\ \le\
\varepsilon
+
2\sqrt{\frac{\dist(x,\cM)\,\varepsilon+\varepsilon^2}{1-(\dist(x,\cM)+\varepsilon)/\reachmin}}.
\end{equation}
In particular, taking the supremum over $x\in\cT_r(\cM)$ yields the uniform bound
\begin{equation}\label{eq:proj_stability_sup}
\|\proj-\projest\|_{L^\infty(\cT_r(\cM))}
\ \le\
\varepsilon
+
2\sqrt{\frac{r\,\varepsilon+\varepsilon^2}{1-(r+\varepsilon)/\reachmin}}
\ \lesssim_{\reachmin,r}\ \sqrt{\varepsilon}.
\end{equation}
\end{lemma}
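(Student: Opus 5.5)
Fix $x\in\cT_r(\cM)$ and write $d:=\dist(x,\cM)$, $p:=\proj(x)$, $\hat p:=\projest(x)$. We first check that both projections are well-defined, then compare distances, and finally use the reach to turn a near-minimality statement into a position estimate.

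\emph{Well-definedness.} Since $\dhaus(\cM,\cMest)\le\varepsilon$, there is $q\in\cMest$ with $\|p-q\|\le\varepsilon$. Hence $\dist(x,\cMest)\le d+\varepsilon<r+\varepsilon<\reachmin$. So $x\in\cT_{\reachmin}(\cMest)\cap\cT_{\reachmin}(\cM)$, and both nearest-point projections are unique.

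\emph{Distance comparison.} By the previous step, $\|x-\hat p\|\le d+\varepsilon$. Now pick $p'\in\cM$ with $\|\hat p-p'\|\le\varepsilon$. Then $\|x-p'\|\le d+2\varepsilon$, so $p'$ is a near-minimizer of $\|x-\cdot\|$ on $\cM$. The target bound follows from the triangle inequality $\|p-\hat p\|\le\|p-p'\|+\varepsilon$, once we show that $\|p-p'\|$ is of order $\sqrt{d\varepsilon+\varepsilon^2}$.

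\emph{Reach step (the key step).} We use Federer's characterization of reach: for $y\in\cM$, the vector $x-p$ is normal at $p$, and
\[
\langle x-p,\,y-p\rangle\le\frac{\|y-p\|^2}{2\reachmin}\,\|x-p\|
\qquad\text{for all } y\in\cM .
\]
Expand
\[
\|x-y\|^2=\|x-p\|^2+\|y-p\|^2-2\langle x-p,\,y-p\rangle
\ge d^2+\|y-p\|^2\bigl(1-d/\reachmin\bigr).
\]
Apply this with $y=p'$ and use $\|x-p'\|^2\le(d+2\varepsilon)^2$. This gives
\[
\|p-p'\|^2\le\frac{4d\varepsilon+4\varepsilon^2}{1-d/\reachmin},
\]
i.e. $\|p-p'\|\le 2\sqrt{(d\varepsilon+\varepsilon^2)/(1-d/\reachmin)}$.

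\emph{Assembly and the main obstacle.} The stated denominator $1-(d+\varepsilon)/\reachmin$ is smaller than $1-d/\reachmin$, so our bound is at least as strong as the one claimed. If the authors' route instead runs the argument from the $\cMest$ side, the distance used there is at most $d+\varepsilon$, which explains their denominator. Either way the bound \eqref{eq:proj_stability_main} follows. The main technical care is in justifying Federer's inequality with the correct constant: the paper's intended constant may be $1/\reachmin$ rather than $1/(2\reachmin)$, which only changes the numerical factors. One also needs $1-(r+\varepsilon)/\reachmin>0$, which holds because $r<\reachmin-2\varepsilon$.

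Taking the supremum over $x\in\cT_r(\cM)$ and using $d\le r$ yields \eqref{eq:proj_stability_sup}. Since $\varepsilon\le r$, the leading behavior is $\sqrt{\varepsilon}$.
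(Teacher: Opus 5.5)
Your argument is correct, and it is the mirror image of the paper's proof. The paper picks $y\in\cMest$ with $\|y-\proj(x)\|\le\varepsilon$, shows $\|x-y\|\le \dist(x,\cMest)+2\varepsilon$, and applies the reach inequality on $\cMest$ at $q=\projest(x)$ to bound $\|y-q\|$. You instead pick $p'\in\cM$ near $\projest(x)$ and apply the same Federer inequality on $\cM$ at $\proj(x)$. Your constant $1/(2\reachmin)$ is the correct one and is the one the paper uses.

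Swapping the roles buys you two things. First, your quantitative step uses only $\reach(\cM)\ge\reachmin$. The reach of $\cMest$ enters only to make $\projest(x)$ single-valued, and your bound holds for any nearest point of $x$ on $\cMest$. This matters because in the application the reach of $\cMest$ is the delicate input. Second, your estimate is written directly in $d=\dist(x,\cM)$: the numerator is $d\varepsilon+\varepsilon^2$ and the denominator $1-d/\reachmin$ is at least as large as the stated one, so \eqref{eq:proj_stability_main} follows verbatim. The paper's route produces $\dist(x,\cMest)$ in both numerator and denominator. Its final conversion via $\dist(x,\cMest)\le\dist(x,\cM)+\varepsilon$ fixes the denominator but only gives $\dist(x,\cM)\varepsilon+2\varepsilon^2$ in the numerator. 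So your version is actually the cleaner proof of the statement as written.

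One small slip: $\varepsilon\le r$ is not a hypothesis. For the final $\lesssim_{\reachmin,r}\sqrt{\varepsilon}$, use instead:
\begin{itemize}
\item $\varepsilon<\reachmin/4$, which gives $r\varepsilon+\varepsilon^2\le (r+\reachmin/4)\varepsilon$ and $\varepsilon\lesssim\sqrt{\varepsilon}$;
\item $\varepsilon<(\reachmin-r)/2$, which follows from $r<\reachmin-2\varepsilon$ and gives $1-(r+\varepsilon)/\reachmin>(\reachmin-r)/(2\reachmin)$.
\end{itemize}
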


\begin{proof}
Fix $x\in\cT_r(\cM)$ and set
\[
p:=\proj(x)\in\cM,\quad q:=\projest(x)\in\cMest,\quad d:=\|x-q\|=\dist(x,\cMest),\quad d_M:=\|x-p\|=\dist(x,\cM).
\]
Since $\dhaus(\cM,\cMest)\le\varepsilon$, there exists $y\in\cMest$ such that
\begin{equation}\label{eq:choose_y}
\|y-p\|\le \varepsilon.
\end{equation}
Consequently,
\begin{equation}\label{eq:x_to_y}
\|x-y\|\le \|x-p\|+\|p-y\| \le d_M+\varepsilon.
\end{equation}
Also, Hausdorff closeness implies $d=\dist(x,\cMest)\ge \dist(x,\cM)-\varepsilon=d_M-\varepsilon$, hence
\begin{equation}\label{eq:x_to_y_relative}
\|x-y\|\le d_M+\varepsilon \le d+2\varepsilon.
\end{equation}

\paragraph{Step 1: reach inequality on $\cMest$.}
Since $\reach(\cMest)\ge\reachmin$ and $x\in\cT_r(\cM)$ with $r<\reachmin-2\varepsilon$, we have
\[
d=\dist(x,\cMest)\le \dist(x,\cM)+\dhaus(\cM,\cMest)\le r+\varepsilon<\reachmin,
\]
so $x\in\cT_{\reachmin}(\cMest)$ and $q=\projest(x)$ is uniquely defined.
A standard consequence of positive reach (see, e.g., Federer’s theory of sets with positive reach) is that for every
$y\in\cMest$,
\begin{equation}\label{eq:reach_pythagorean}
\|x-y\|^2
\ \ge\
\|x-q\|^2
+
\Bigl(1-\frac{\|x-q\|}{\reachmin}\Bigr)\|y-q\|^2
\ =\
d^2+\Bigl(1-\frac{d}{\reachmin}\Bigr)\|y-q\|^2.
\end{equation}
(One way to derive \eqref{eq:reach_pythagorean} is to use the hypomonotonicity inequality for the normal cone; it is
the same inequality used in \cref{lem:ambient_to_geodesic} to pass from ambient ``near-optimality'' to a chord bound.)

Combining \eqref{eq:reach_pythagorean} with \eqref{eq:x_to_y_relative} gives
\[
\Bigl(1-\frac{d}{\reachmin}\Bigr)\|y-q\|^2
\ \le\
\|x-y\|^2-d^2
\ \le\
(d+2\varepsilon)^2-d^2
=
4d\varepsilon+4\varepsilon^2,
\]
and therefore
\begin{equation}\label{eq:y_to_q}
\|y-q\|
\ \le\
2\sqrt{\frac{d\varepsilon+\varepsilon^2}{1-d/\reachmin}}.
\end{equation}

\paragraph{Step 2: conclude by the triangle inequality.}
Using \eqref{eq:choose_y} and \eqref{eq:y_to_q},
\[
\|p-q\|
\le
\|p-y\|+\|y-q\|
\le
\varepsilon
+
2\sqrt{\frac{d\varepsilon+\varepsilon^2}{1-d/\reachmin}}.
\]
Finally, since $d\le d_M+\varepsilon=\dist(x,\cM)+\varepsilon$, we have
$1-d/\reachmin\ge 1-(\dist(x,\cM)+\varepsilon)/\reachmin$, and substituting this bound proves
\eqref{eq:proj_stability_main}. Taking the supremum over $x\in\cT_r(\cM)$ yields \eqref{eq:proj_stability_sup}.
\end{proof}

\subsection{Proof of \Cref{thm:hausdorff_and_projection} conditioned on \Cref{thm:hausdorff_and_projection2}} \label{appendix_C_beta_based_on_C_beta_1}
We now show that the Hausdorff approximation guarantee can be improved from $\tilde{\cO}\bigl(N^{-{(\beta-1)}/k}\bigr)$ in \Cref{thm:hausdorff_and_projection2} to $\tilde{\cO}\bigl(N^{-{\beta}/k}\bigr)$, as in \Cref{thm:hausdorff_and_projection}.

First, given \Cref{thm:hausdorff_and_projection2}, and provided the number of samples $N$ is sufficiently large, 
\begin{equation}
    d_H(\truemanifold, \estimatedmanifold) \leq \reachmin/8 \Rightarrow \estimatedmanifold \subset \domain_2.
\end{equation}
Here, the set inclusion can easily be shown via contradiction.

Recall that for any $\eta \in \functionclass$, $\eta$ is the squared distance to $\hypothesismanifold = \{ x \in \domain: \eta(x) = 0 \}$ on $\domain_2$.
Consequently, $\domain_2$ is an open domain that contains the entirety of $\estimatedmanifold$, on which $\hat \eta$ is $\c{C}^\beta$.
Using the Poly-Raby Theorem (see, for example, \citep[Theorem 2.14]{denkowski2019medial}), we have that $\cMest$ is $\c{C}^{\beta}$.

We can then exactly follow the proof of \Cref{thm:hausdorff_and_projection2}, concretely \Cref{section_regularity_graph_of_function,appendix_DSM_to_PME,appendix_PME_to_PE} (replacing $\beta-1$ with $\beta$ therein), again to derive the improved approximation guarantee.

%% file: sections/apx_aux.tex
\section{Auxiliary Results}

\subsection{Experimental Setup for \cref{fig:geometry_before_memorization}}
\label{sec:experimental_details}

We train score-based diffusion models to learn distributions over the rotation group $\mathrm{SO}(d)$.
Training data consists of rotation matrices sampled from either the Haar measure or a Projected Normal distribution on $\mathrm{SO}(d)$, with $d=5$.
The model is trained with a continuous-time Variance-Preserving (VP) noise schedule using the standard denoising score matching (DSM) objective.
We vary the training set size~$n$, network capacity (width and depth), and regularisation strength across six preset configurations, spanning a spectrum from strong memorisation (small~$n$, large models, weak regularisation) to generalisation (large~$n$, smaller models, moderate regularisation).
During training, we track several diagnostic metrics: alignment of the predicted score with the ideal denoising direction, the tangent-to-normal ratio of gradients on $\mathrm{SO}(d)$, and nearest-neighbour distances between generated and training/test samples.
We quantify memorisation by comparing the ratio of the first to second nearest-neighbour distances in the training set for each generated sample, and report the fraction of generated samples classified as memorised.


\paragraph{Architecture.}
The score model is an MLP with residual blocks.
Time is embedded via Fourier features of the log-SNR, processed through a two-layer MLP producing a 128-dimensional conditioning vector.
Input rotation matrices are flattened to $\mathbb{R}^{d^2}$ and projected to the hidden dimension.
The backbone consists of residual blocks each containing two LayerNorm + Linear layers with SiLU activations and additive time conditioning.
The output is scaled by $1/\sigma(t)$ to parameterise the score.

\paragraph{Training.}
The loss is the variance-weighted DSM objective:
\begin{equation}
  \mathcal{L} = \mathbb{E}_{t,\, x_0,\, \epsilon}\!\left[\operatorname{Var}(t) \cdot \bigl\| s_\theta(x_t, t) - \bigl(-\epsilon / \sigma(t)\bigr) \bigr\|^2 \right],
\end{equation}
where $x_t = \alpha(t)\, x_0 + \sigma(t)\, \epsilon$ and $\operatorname{Var}(t) = 1 - \bar\alpha(t)$.
Optimisation uses AdamW with gradient clipping (max norm~$1.0$).

\paragraph{Sampling.}
Samples are drawn using annealed Langevin dynamics over 16 noise levels linearly spaced from $t{=}1$ to $t_{\min}$, with 60 Langevin steps per level.
Generated samples are projected onto $\mathrm{SO}(d)$ via SVD for evaluation.

\paragraph{Memorisation metric.}
For each generated sample, we compute the Frobenius distances to all training points.
The ratio $d_1^2 / d_2^2$ of the squared distance to the nearest and second-nearest training point is computed; samples with ratio ${<}\,0.5$ are classified as memorised.

\paragraph{Configuration presets.}
\Cref{tab:presets} summarises the six configurations used in our experiments.
All experiments use $d{=}5$, batch size~512, and an evaluation set of 2\,000 fresh samples.
Each configuration is swept over data distributions (Haar, Projected Normal with $\sigma \in \{0.2, 1.0\}$) and random seeds.

\begin{table}[h!]
  \centering
  \caption{Hyperparameter presets spanning from memorisation to generalisation.}
  \label{tab:presets}
  \begin{tabular}{lcccccccc}
    \toprule
    Preset & $n_{\text{train}}$ & Hidden & Layers & Weight Decay & Steps & LR & $\beta_{\max}$ & $t_{\min}$ \\
    \midrule
    \texttt{deep\_memo}  & 50   & 2048 & 8 & $10^{-8}$           & 20\,000 & $10^{-3}$           & 20.0 & $10^{-5}$ \\
    \texttt{fast\_memo}  & 100  & 1024 & 6 & $10^{-8}$           & 20\,000 & $10^{-3}$           & 20.0 & $10^{-4}$ \\
    \texttt{std\_small}  & 100  & 512  & 4 & $10^{-6}$           & 10\,000 & $5 \times 10^{-4}$  & 10.0 & $10^{-4}$ \\
    \texttt{std\_med}    & 200  & 512  & 4 & $10^{-6}$           & 10\,000 & $2 \times 10^{-4}$  & 10.0 & $10^{-3}$ \\
    \texttt{rob\_med}    & 200  & 512  & 3 & $10^{-2}$           & 10\,000 & $2 \times 10^{-4}$  &  5.0 & $10^{-3}$ \\
    \texttt{gen}         & 1000 & 512  & 3 & $10^{-6}$           &  5\,000 & $2 \times 10^{-4}$  &  5.0 & $10^{-3}$ \\
    \bottomrule
  \end{tabular}
\end{table}

\subsection{Convolution Simplifies Estimation}



The key step in the proof of \cref{thm:large_noise_reduction} is the following:

\begin{theorem}[Explicit $1/N$ rate in KL after Gaussian smoothing]\label{thm:kl-1-over-n}
Let $\mu$ be a probability measure on $\mathbb{R}^d$ supported on the Euclidean ball $B(0,R)$.
Let $X_1,\dots,X_N \stackrel{iid}{\sim} \mu$ and let $\mu_N := \frac{1}{N}\sum_{i=1}^N \delta_{X_i}$.
Fix $\sigma>0$ and denote by
\[
\varphi_\sigma(x) := (2\pi\sigma^2)^{-d/2}\exp\!\left(-\frac{\|x\|^2}{2\sigma^2}\right)
\]
the density of $\mathcal{N}(0,\sigma^2 I_d)$.
Define the smoothed densities
\[
p(x) := (\mu * \varphi_\sigma)(x) = \mathbb{E}\big[\varphi_\sigma(x-X)\big],
\qquad
q_N(x) := (\mu_N * \varphi_\sigma)(x) = \frac{1}{N}\sum_{i=1}^N \varphi_\sigma(x-X_i),
\]
where $X\sim \mu$ is an independent copy.

Then, for every $N\ge 1$ and every $\delta\in(0,1)$, with probability at least $1-\delta$,
\begin{equation}\label{eq:kl-rate}
\mathrm{KL}(p\|q_N)
\;\le\;
\frac{2^{d/2}}{N}\exp\!\left(\frac{17}{2}\frac{R^2}{\sigma^2}\right)
\Bigl(1+\sqrt{2\log(1/\delta)}\Bigr)^2.
\end{equation}
In particular, for any $a>0$, with probability at least $1-N^{-a}$,
\[
\mathrm{KL}(p\|q_N)
\;\le\;
\frac{2^{d/2}}{N}\exp\!\left(\frac{17}{2}\frac{R^2}{\sigma^2}\right)
\Bigl(1+\sqrt{2a\log N}\Bigr)^2.
\]
\end{theorem}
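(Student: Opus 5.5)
We bound $\mathrm{KL}(p\|q_N)$ by the $\chi^2$-divergence and control the latter through a weighted $L^2$ deviation of the empirical smoothed density. First, $\mathrm{KL}(p\|q_N)\le \chi^2(p\|q_N)=\int (p-q_N)^2/q_N$. A pointwise lower bound on $q_N$ in terms of $p$ makes this tractable. Since all $X_i,X\in B(0,R)$, for any $x$ and $y,y'\in B(0,R)$,
\[
\frac{\varphi_\sigma(x-y)}{\varphi_\sigma(x-y')}=\exp\!\Big(\tfrac{\|x-y'\|^2-\|x-y\|^2}{2\sigma^2}\Big)\ge \exp\!\Big(-\tfrac{2R\|x\|+2R^2}{\sigma^2}\Big).
\]
Hence $q_N(x)\ge e^{-(2R\|x\|+2R^2)/\sigma^2}p(x)$. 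This bound degrades as $\|x\|$ grows, and keeping it under control is the main obstacle. My plan is to absorb the loss into the Gaussian tails of $(p-q_N)^2$, since those tails decay like $e^{-\|x\|^2/\sigma^2}$. This is where the constants $2^{d/2}$ and $e^{cR^2/\sigma^2}$ should come from.

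Next I define the random function $Z_N(x):=q_N(x)-p(x)=\frac1N\sum_i \xi_i(x)$ with $\xi_i(x)=\varphi_\sigma(x-X_i)-p(x)$. These are i.i.d.\ and centered. I view $Z_N$ as an average of i.i.d.\ centered elements of the Hilbert space $\mathcal H=L^2(w)$ with weight $w(x)=e^{(2R\|x\|+2R^2)/\sigma^2}/p(x)$, so that $\chi^2(p\|q_N)\le \|Z_N\|_{\mathcal H}^2$. It suffices to bound $\|\xi_i\|_{\mathcal H}\le B$ almost surely. To do so, I use $p(x)\ge \varphi_\sigma(x-y)e^{-(2R\|x\|+2R^2)/\sigma^2}$ for $y\in B(0,R)$ and $\varphi_\sigma(x-X_i)^2\le (2\pi\sigma^2)^{-d/2}\varphi_{\sigma/\sqrt2}(x-X_i)$. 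These give an integral of the form $\int \varphi_\sigma(x-X_i)\,e^{2(2R\|x\|+2R^2)/\sigma^2}\,dx$ up to the factor from the ratio. I complete the square in $\|x\|$, using $4R\|x\|\le \|x\|^2/4+16R^2$ or similar. Together with the $\varphi_{\sigma/\sqrt2}$ halving of the variance, this yields $B^2\lesssim 2^{d/2}e^{17R^2/(2\sigma^2)}$. I will tune the Young split so that the exponent matches $17/2$ exactly.

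Finally, I apply the bounded-differences (McDiarmid) inequality to $\|Z_N\|_{\mathcal H}$. Changing one $X_i$ moves it by at most $2B/N$, and $\mathbb E\|Z_N\|_{\mathcal H}\le (\mathbb E\|Z_N\|^2)^{1/2}\le B/\sqrt N$ by independence. Hence with probability at least $1-\delta$,
\[
\|Z_N\|_{\mathcal H}\le \frac{B}{\sqrt N}\Bigl(1+\sqrt{2\log(1/\delta)}\Bigr).
\]
Squaring gives \eqref{eq:kl-rate}, and taking $\delta=N^{-a}$ gives the second display. The only delicate step is the weighted second-moment computation for $B$, where the $\|x\|$-dependent lower bound on $q_N$ must be beaten by the Gaussian decay.
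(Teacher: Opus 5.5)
Your architecture is the same as the paper's: $\mathrm{KL}\le\chi^2$, a weighted $L^2$ space in which the kernels $\varphi_\sigma(\cdot-y)$ are uniformly bounded, the $A/N$ second-moment bound, then McDiarmid. The gap is the step where you plan to ``tune the Young split so that the exponent matches $17/2$ exactly''; with the weight you chose this cannot work. With $w=e^{c}/p$ and the bound $p(x)\ge e^{-c(x)}\varphi_\sigma(x-y)$, you pay $e^{c(x)}$ twice and are led to $\|\varphi_\sigma(\cdot-y)\|_{\mathcal H}^2\le\int\varphi_\sigma(x-y)\,e^{2c(x)}\,dx$. That integral is itself too large. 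Write $x=y+\sigma g$ with $\|y\|=R$, and use $\|x\|\ge\langle x,y\rangle/R=R+\sigma g_1$ with $g_1\sim\mathcal N(0,1)$. Your $c(x)=(2R\|x\|+2R^2)/\sigma^2$ then gives $\int\varphi_\sigma(x-y)e^{2c(x)}dx\ge e^{8R^2/\sigma^2}\,\mathbb{E}\,e^{4Rg_1/\sigma}=e^{16R^2/\sigma^2}$. The split $4R\|z\|/\sigma^2\le\|z\|^2/(4\sigma^2)+16R^2/\sigma^2$, which is the one producing the prefactor $2^{d/2}$, yields $2^{d/2}e^{24R^2/\sigma^2}$. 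Sharpening $c$ does not help: your pairwise ratio bound forces $c(x)\ge 2R\|x\|/\sigma^2$ (take $y'=Rx/\|x\|$, $y=-y'$), so the same computation never goes below $e^{12R^2/\sigma^2}$. Your argument therefore proves the $\mathcal O(\log(1/\delta)/N)$ rate, which is all the large-noise reduction uses, but not \eqref{eq:kl-rate} with its stated constant.

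The paper avoids the double loss by lower bounding $q_N$ with a deterministic, $\mu$-independent envelope rather than relative to $p$. Since $\|x-X_i\|\le\|x\|+R$, one has $q_N(x)\ge\underline q(x):=(2\pi\sigma^2)^{-d/2}\exp\bigl(-(\|x\|+R)^2/(2\sigma^2)\bigr)$, and the proof works in $\mathcal H=L^2(\underline q^{-1}dx)$. With $r=\|x\|$, this gives $\varphi_\sigma(x-y)^2/\underline q(x)\le(2\pi\sigma^2)^{-d/2}\exp\bigl((-(r-R)_+^2+(r+R)^2/2)/\sigma^2\bigr)$. For $r\ge R$ the exponent equals $(-r^2/2+3Rr-R^2/2)/\sigma^2\le -r^2/(4\sigma^2)+17R^2/(2\sigma^2)$, using $3Rr\le r^2/4+9R^2$. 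The same bound holds for $r<R$, since there the left exponent is at most $2R^2/\sigma^2$. Integrating gives exactly $A_\sigma=2^{d/2}e^{17R^2/(2\sigma^2)}$. One further bookkeeping point: define $B:=\sup_{\|y\|\le R}\|\varphi_\sigma(\cdot-y)\|_{\mathcal H}$, which is what you actually compute, rather than a bound on $\|\xi_i\|_{\mathcal H}$. Then use $\|\varphi_\sigma(\cdot-X_i)-\varphi_\sigma(\cdot-X_i')\|_{\mathcal H}\le 2B$ for the bounded differences, and $\mathbb{E}\|\xi_1\|_{\mathcal H}^2=\mathbb{E}\|\varphi_\sigma(\cdot-X)\|_{\mathcal H}^2-\|p\|_{\mathcal H}^2\le B^2$ for the mean. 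Bounding $\|\xi_i\|_{\mathcal H}$ by the triangle inequality instead would inflate the final constant by a factor of $4$.
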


We will need the following standard lemma.
\begin{lemma}[A basic KL--$\chi^2$ upper bound]\label{lem:kl-chi2}
Let $p,q$ be densities with $q>0$ almost everywhere. Then
\[
\mathrm{KL}(p\|q)
=\int_{\mathbb{R}^d} p(x)\log\frac{p(x)}{q(x)}\,dx
\;\le\;
\int_{\mathbb{R}^d} \frac{(p(x)-q(x))^2}{q(x)}\,dx.
\]
\end{lemma}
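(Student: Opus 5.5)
The statement to prove is \cref{lem:kl-chi2}: $\KL(p\|q)\le\chi^2(p\|q)$. The plan is a single pointwise inequality followed by integration. The key ingredient is the elementary bound $\log u\le u-1$ for $u>0$. Applying it with $u=p(x)/q(x)$ on the set where $p>0$ gives
\[
p(x)\log\frac{p(x)}{q(x)}\;\le\;p(x)\Bigl(\frac{p(x)}{q(x)}-1\Bigr)\;=\;\frac{p(x)^2}{q(x)}-p(x).
\]
On the set where $p(x)=0$, the left-hand side is $0$ by the convention $0\log 0=0$. The right-hand side is then also $0$, so the pointwise inequality holds for almost every $x$. The standing assumption that $q>0$ almost everywhere is what makes the ratio $p/q$ well defined.

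Next I integrate over $\mathbb{R}^d$. If $\int p^2/q=\infty$, there is nothing to prove. Otherwise the right-hand side is integrable, and
\[
\int \frac{p^2}{q}\,dx-\int p\,dx\;=\;\int\frac{p^2}{q}\,dx-1\;=\;\int\frac{(p-q)^2}{q}\,dx .
\]
The last identity comes from expanding the square: $(p-q)^2/q=p^2/q-2p+q$, and both $p$ and $q$ integrate to $1$.

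The only delicate point is measure-theoretic. I need $\int p\log(p/q)$ to be well defined before I integrate the pointwise bound. The negative part of the integrand is controlled by $p\log(p/q)\ge p-q$, since $\log u\ge 1-1/u$, and $p-q$ is integrable. So the integral makes sense in $(-\infty,\infty]$, and monotonicity of the integral then yields the claim. I do not expect any step to be a real obstacle: once the pointwise bound and the integrability check are in place, the inequality follows.
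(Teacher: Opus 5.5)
Your proposal is correct and follows essentially the same route as the paper: the pointwise bound $\log u\le u-1$ with $u=p/q$, integration to get $\int p^2/q-1$, and the expansion $(p-q)^2/q=p^2/q-2p+q$ to identify this with the $\chi^2$ integral. Your extra measure-theoretic checks (the $\{p=0\}$ set, integrability of the negative part via $p\log(p/q)\ge p-q$, and the case $\int p^2/q=\infty$, where the right-hand side is also infinite because $\int 2p=2$) are sound refinements that the paper's proof leaves implicit.
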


\begin{proof}
For $u>0$ we have $\log u \le u-1$. With $u(x)=p(x)/q(x)$,
\[
\mathrm{KL}(p\|q)=\int p\log u \le \int p(u-1)=\int \left(\frac{p^2}{q}-p\right)
=\int \frac{p^2}{q}-1,
\]
since $\int p = 1$.
Moreover,
\[
\int \frac{(p-q)^2}{q}
=\int\left(\frac{p^2}{q}-2p+q\right)
=\int\frac{p^2}{q}-2\int p + \int q
=\int\frac{p^2}{q}-1
\]
because $\int q=1$. Combining the two displays yields the claim.
\end{proof}

\begin{proof}[Proof of Theorem~\ref{thm:kl-1-over-n}]
\paragraph{Step 1: Lower bound $q_N$ deterministically.}
Since $\mathrm{supp}(\mu)\subseteq B(0,R)$, we have $\|X_i\|\le R$ almost surely.
Fix $x\in\mathbb{R}^d$. Then for each $i$,
\[
\|x-X_i\|\le \|x\|+\|X_i\|\le \|x\|+R,
\]
hence
\begin{equation}\label{eq:q-lower}
\varphi_\sigma(x-X_i)
\ge (2\pi\sigma^2)^{-d/2}\exp\!\left(-\frac{(\|x\|+R)^2}{2\sigma^2}\right).
\end{equation}
Averaging over $i$ gives the deterministic pointwise bound
\begin{equation}\label{eq:qN-lower}
q_N(x)\ge \underline q(x)
:=(2\pi\sigma^2)^{-d/2}\exp\!\left(-\frac{(\|x\|+R)^2}{2\sigma^2}\right)
\qquad(\forall x\in\mathbb{R}^d).
\end{equation}

\paragraph{Step 2: Reduce KL to a weighted $L^2$ norm.}
By Lemma~\ref{lem:kl-chi2},
\[
\mathrm{KL}(p\|q_N)\le \int \frac{(p-q_N)^2}{q_N}.
\]
Using \eqref{eq:qN-lower} (so $1/q_N \le 1/\underline q$),
\begin{equation}\label{eq:kl-to-L2}
\mathrm{KL}(p\|q_N)\le \int_{\mathbb{R}^d} \frac{(p(x)-q_N(x))^2}{\underline q(x)}\,dx.
\end{equation}
Introduce the Hilbert space
\[
\mathcal H := L^2\!\bigl(\underline q(x)^{-1}dx\bigr)
\]
with norm
\[
\|f\|_{\mathcal H}^2 := \int_{\mathbb{R}^d} \frac{f(x)^2}{\underline q(x)}\,dx.
\]
Then \eqref{eq:kl-to-L2} reads
\begin{equation}\label{eq:kl-hilbert}
\mathrm{KL}(p\|q_N)\le \|q_N-p\|_{\mathcal H}^2.
\end{equation}

\paragraph{Step 3: A uniform bound on the kernel in $\mathcal H$.}
For $y\in B(0,R)$,
\[
\|\varphi_\sigma(\cdot-y)\|_{\mathcal H}^2
=
\int_{\mathbb{R}^d}\frac{\varphi_\sigma(x-y)^2}{\underline q(x)}\,dx.
\]
Since $\|y\|\le R$, we have
\[
\|x-y\|\ge \big|\|x\|-\|y\|\big|\ge \|x\|-R,
\]
hence
\[
\varphi_\sigma(x-y)^2
\le
(2\pi\sigma^2)^{-d}
\exp\!\left(-\frac{(\|x\|-R)^2}{\sigma^2}\right).
\]
Using the definition of $\underline q$,
\[
\|\varphi_\sigma(\cdot-y)\|_{\mathcal H}^2
\le
(2\pi\sigma^2)^{-d/2}
\int_{\mathbb{R}^d}
\exp\!\left(
-\frac{(\|x\|-R)^2}{\sigma^2}
+\frac{(\|x\|+R)^2}{2\sigma^2}
\right)\,dx.
\]
As in the original Gaussian integral computation, writing $r=\|x\|$ gives
\[
-\frac{(r-R)^2}{\sigma^2}+\frac{(r+R)^2}{2\sigma^2}
=
\frac{-\tfrac12 r^2 + 3Rr - \tfrac12 R^2}{\sigma^2},
\]
and using
\[
3Rr \le \frac{r^2}{4} + 9R^2
\]
yields
\[
-\frac{r^2}{2\sigma^2}+\frac{3Rr}{\sigma^2}
\le -\frac{r^2}{4\sigma^2}+\frac{9R^2}{\sigma^2}.
\]
Therefore,
\[
\|\varphi_\sigma(\cdot-y)\|_{\mathcal H}^2
\le
(2\pi\sigma^2)^{-d/2}
\exp\!\left(-\frac{R^2}{2\sigma^2}+\frac{9R^2}{\sigma^2}\right)
\int_{\mathbb{R}^d} \exp\!\left(-\frac{\|x\|^2}{4\sigma^2}\right)\,dx.
\]
Since
\[
\int_{\mathbb{R}^d} \exp\!\left(-\frac{\|x\|^2}{4\sigma^2}\right)\,dx = (4\pi\sigma^2)^{d/2},
\]
we obtain
\begin{equation}\label{eq:kernel-H-bound}
\sup_{\|y\|\le R}\|\varphi_\sigma(\cdot-y)\|_{\mathcal H}^2
\le
2^{d/2}\exp\!\left(\frac{17}{2}\frac{R^2}{\sigma^2}\right)
=: A_\sigma.
\end{equation}

\paragraph{Step 4: Center the empirical process in $\mathcal H$.}
Define the $\mathcal H$-valued random variables
\[
Z_i := \varphi_\sigma(\cdot-X_i)-p.
\]
Since
\[
p = \mathbb{E}\big[\varphi_\sigma(\cdot-X)\big],
\]
we have $\mathbb{E}[Z_i]=0$ in $\mathcal H$, and
\[
q_N-p = \frac1N\sum_{i=1}^N Z_i.
\]
Hence, by \eqref{eq:kl-hilbert},
\begin{equation}\label{eq:kl-empirical-process}
\mathrm{KL}(p\|q_N)\le \left\|\frac1N\sum_{i=1}^N Z_i\right\|_{\mathcal H}^2.
\end{equation}

Also, by Jensen's inequality and \eqref{eq:kernel-H-bound},
\[
\|p\|_{\mathcal H}
=
\left\|\mathbb{E}\big[\varphi_\sigma(\cdot-X)\big]\right\|_{\mathcal H}
\le
\mathbb{E}\|\varphi_\sigma(\cdot-X)\|_{\mathcal H}
\le \sqrt{A_\sigma}.
\]
Therefore, for every realization of $X_i$,
\[
\|Z_i\|_{\mathcal H}
\le
\|\varphi_\sigma(\cdot-X_i)\|_{\mathcal H}+\|p\|_{\mathcal H}
\le 2\sqrt{A_\sigma}.
\]

\paragraph{Step 5: Bound the mean square of the empirical average.}
Because the $Z_i$ are independent and mean zero in the Hilbert space $\mathcal H$,
\[
\mathbb{E}\left\|\frac1N\sum_{i=1}^N Z_i\right\|_{\mathcal H}^2
=
\frac{1}{N^2}\sum_{i=1}^N \mathbb{E}\|Z_i\|_{\mathcal H}^2
=
\frac{1}{N}\,\mathbb{E}\|Z_1\|_{\mathcal H}^2.
\]
Moreover,
\[
\mathbb{E}\|Z_1\|_{\mathcal H}^2
=
\mathbb{E}\|\varphi_\sigma(\cdot-X)-p\|_{\mathcal H}^2
=
\mathbb{E}\|\varphi_\sigma(\cdot-X)\|_{\mathcal H}^2-\|p\|_{\mathcal H}^2
\le A_\sigma,
\]
so
\begin{equation}\label{eq:mean-square-bound}
\mathbb{E}\left\|\frac1N\sum_{i=1}^N Z_i\right\|_{\mathcal H}^2
\le \frac{A_\sigma}{N}.
\end{equation}
By Cauchy--Schwarz,
\begin{equation}\label{eq:mean-bound}
\mathbb{E}\left\|\frac1N\sum_{i=1}^N Z_i\right\|_{\mathcal H}
\le \sqrt{\frac{A_\sigma}{N}}.
\end{equation}

\paragraph{Step 6: Concentrate via McDiarmid's inequality.}
Set
\[
f(X_1,\dots,X_N):=\left\|\frac1N\sum_{i=1}^N Z_i\right\|_{\mathcal H}.
\]
If only the $i$-th sample is changed from $X_i$ to $X_i'$, then
\[
f(X_1,\dots,X_i,\dots,X_N)-f(X_1,\dots,X_i',\dots,X_N)
\]
has absolute value at most
\[
\frac1N\|\varphi_\sigma(\cdot-X_i)-\varphi_\sigma(\cdot-X_i')\|_{\mathcal H}
\le \frac{2\sqrt{A_\sigma}}{N}.
\]
Thus $f$ satisfies the bounded-differences condition with constants
$c_i = 2\sqrt{A_\sigma}/N$. McDiarmid's inequality gives that, for every $\delta\in(0,1)$,
with probability at least $1-\delta$,
\[
f \le \mathbb{E}f + \sqrt{\frac12\Big(\sum_{i=1}^N c_i^2\Big)\log(1/\delta)}.
\]
Since
\[
\sum_{i=1}^N c_i^2
=
N\cdot \frac{4A_\sigma}{N^2}
=
\frac{4A_\sigma}{N},
\]
we obtain, using \eqref{eq:mean-bound},
\[
f
\le
\sqrt{\frac{A_\sigma}{N}}
+
\sqrt{\frac{2A_\sigma\log(1/\delta)}{N}}
=
\sqrt{\frac{A_\sigma}{N}}
\Bigl(1+\sqrt{2\log(1/\delta)}\Bigr).
\]
Squaring and using \eqref{eq:kl-empirical-process} yields that with probability at least $1-\delta$,
\[
\mathrm{KL}(p\|q_N)
\le
\frac{A_\sigma}{N}
\Bigl(1+\sqrt{2\log(1/\delta)}\Bigr)^2.
\]
Recalling the definition
\[
A_\sigma = 2^{d/2}\exp\!\left(\frac{17}{2}\frac{R^2}{\sigma^2}\right),
\]
this is exactly \eqref{eq:kl-rate}.

Finally, substituting $\delta=N^{-a}$ gives the stated $1-N^{-a}$ bound.
\end{proof}

\subsection{Auxiliary Geometric Lemmas}

This section collects several geometric lemmas—each provable by standard arguments—that we will invoke in later proofs.

\begin{lemma}[Euclidean displacement under projection]\label{lem:proj_disp_eucl}
Let $\cM\subset\R^D$ be closed and let $\projgen$ be the nearest-point projection defined on a set containing $y+v$.
If $y\in\cM$ and $v\in\R^D$ are such that $\projgen(y+v)$ is defined, then
\[
\|\projgen(y+v)-y\|\ \le\ 2\|v\|.
\]
\end{lemma}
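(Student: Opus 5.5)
The natural route is a short chain of triangle inequalities, using only that $\projgen(y+v)$ is a nearest point of $\cM$ to the perturbed point $x:=y+v$. Set $p:=\projgen(x)$. The only property of the projection needed is the minimality
\[
\|x-p\|\ =\ \dist(x,\cM)\ \le\ \|x-q\|\qquad\text{for every } q\in\cM .
\]
No reach, smoothness, or uniqueness assumptions enter beyond the hypothesis that the projection is defined at $x$. (If it were only defined as some element of the argmin, the argument below applies to any selected minimizer.)

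\textbf{Step 1.} Test the minimality against the particular competitor $q=y$, which is admissible because $y\in\cM$. This gives $\|x-p\|\le\|x-y\|=\|v\|$.

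\textbf{Step 2.} Write $p-y=(p-x)+(x-y)$ and apply the triangle inequality:
\[
\|\projgen(y+v)-y\|\ \le\ \|p-x\|+\|x-y\|\ \le\ \|v\|+\|v\| \ =\ 2\|v\| .
\]
This is exactly the claimed bound.

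There is no real obstacle here. The one point to be careful about is that the argument uses only the nearest-point characterization, and not any first-order condition such as $x-p\in N_p\cM$. Consequently the bound holds for an arbitrary closed set $\cM$, and the factor $2$ cannot be improved in that generality: take $\cM$ to be two points $\{y,y'\}$ and let $x$ sit almost at the midpoint, slightly closer to $y'$.
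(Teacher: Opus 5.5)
Your proof is correct and is essentially identical to the paper's: both test the nearest-point minimality against the admissible competitor $y\in\cM$ to get $\|\projgen(y+v)-(y+v)\|\le\|v\|$, and then apply the triangle inequality. Your remark that the factor $2$ is sharp for general closed sets (the two-point example) is also correct; the paper does not make this observation.
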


\begin{proof}
By the triangle inequality,
\[
\|\projgen(y+v)-y\|
\le \|\projgen(y+v)-(y+v)\|+\|v\|.
\]
Since $y\in\cM$ is a feasible competitor in the minimization defining $\projgen(y+v)$,
\[
\|\projgen(y+v)-(y+v)\|\le \|y-(y+v)\|=\|v\|.
\]
Combining yields $\|\projgen(y+v)-y\|\le 2\|v\|$.
\end{proof}

\begin{lemma}[Ambient near-optimality implies small geodesic shift]\label{lem:ambient_to_geodesic}
Let $\cM\subset\R^D$ be an embedded submanifold with reach $\reachmin:=\reach(\cM)>0$, and let
$\projgen:\cT_{\reachmin}(\cM)\to\cM$ denote the nearest-point projection.
Let $d_{\cM}$ be the geodesic distance on $\cM$.
For any $x\in\cT_{\reachmin}(\cM)$ with $d:=\dist(x,\cM)<\reachmin$ and any $y\in\cM$ satisfying
\[
\|x-y\|\ \le\ d+\varepsilon',
\]
one has
\[
d_{\cM}^2\!\bigl(\projgen(x),y\bigr)
\ \le\
C_{\mathrm{geo}}\,
\frac{2\,d\,\varepsilon'+(\varepsilon')^2}{1-d/\reachmin},
\]
where $C_{\mathrm{geo}}$ is an absolute constant (e.g.\ one may take $C_{\mathrm{geo}}=4$ whenever
$\|y-\projgen(x)\|\le \reachmin/2$, and $C_{\mathrm{geo}}=\pi^2/4$ whenever $\|y-\projgen(x)\|\le \reachmin$).
\end{lemma}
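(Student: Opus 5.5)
Write $p:=\projgen(x)$ and $\rho:=\|y-p\|$. The plan is to first bound the chord $\rho$ using a quadratic-growth (reach) inequality for the distance to $\cM$ around $x$. We then convert the chord bound into a geodesic bound using the standard chord--arc comparison for manifolds of positive reach.

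\emph{Step 1 (reach inequality).} The key input is that, for $x\in\cT_{\reachmin}(\cM)$ and every $y\in\cM$,
\[
\|x-y\|^2\ \ge\ d^2+\Bigl(1-\frac{d}{\reachmin}\Bigr)\|y-p\|^2 .
\]
To prove it, expand
\[
\|x-y\|^2=\|x-p\|^2+\|p-y\|^2+2\langle x-p,\,p-y\rangle .
\]
Then use Federer's characterization of reach: $x-p\in N_p\cM$, and for all $y\in\cM$,
\[
\langle x-p,\,y-p\rangle\le \frac{\|x-p\|}{2\reachmin}\,\|y-p\|^2 .
\]
This is the hypomonotonicity of the normal cone, equivalently \citep{federer1959curvature}, Theorem 4.18. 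Substituting it into the expansion gives
\[
\|x-y\|^2\ge d^2+\|y-p\|^2-\frac{d}{\reachmin}\|y-p\|^2,
\]
which is the claimed inequality. This is the same inequality already used as \eqref{eq:reach_pythagorean} in the proof of \cref{lem:hausdorff_to_projection}.

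\emph{Step 2 (chord bound).} Combine Step 1 with the hypothesis $\|x-y\|\le d+\varepsilon'$:
\[
\Bigl(1-\frac{d}{\reachmin}\Bigr)\rho^2\le (d+\varepsilon')^2-d^2=2d\varepsilon'+(\varepsilon')^2 .
\]
Hence
\[
\rho^2\le \frac{2d\varepsilon'+(\varepsilon')^2}{1-d/\reachmin}.
\]

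\emph{Step 3 (chord to geodesic).} Invoke the classical comparison for a submanifold with reach $\reachmin$: whenever $\|a-b\|<2\reachmin$,
\[
d_{\cM}(a,b)\le 2\reachmin\arcsin\!\Bigl(\frac{\|a-b\|}{2\reachmin}\Bigr).
\]
See e.g.\ Niyogi--Smale--Weinberger, or Boissonnat--Lieutier--Wintraecken. We apply it with $a=p$, $b=y$. Using $\arcsin u\le \frac{\pi}{2}u$ on $[0,1]$ gives $d_{\cM}\le\frac{\pi}{2}\rho$, so $C_{\mathrm{geo}}=\pi^2/4$ when $\rho\le\reachmin$. For $\rho\le\reachmin/2$, i.e.\ $u\le 1/4$, one has $\arcsin u\le u/\sqrt{1-u^2}\le 2u$, and any constant at most $4$ works. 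Squaring and combining with Step 2 yields the lemma.

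\emph{Main obstacle.} The main obstacle is Step 3. The chord--arc bound requires a genuine geometric input: the reach controls curvature, $\|\mathrm{II}\|\le 1/\reachmin$, along a minimizing geodesic. The clean route I would take is to parametrize a unit-speed geodesic $\gamma$ from $p$ to $y$ and use $\|\ddot\gamma\|\le 1/\reachmin$, which follows from the reach bound on the second fundamental form. A comparison argument then gives $\|\gamma(s)-\gamma(0)\|\ge 2\reachmin\sin\bigl(s/(2\reachmin)\bigr)$ for $s\le\pi\reachmin$. One then needs a continuity or connectedness argument to ensure the minimizing geodesic length stays in that range when $\rho<2\reachmin$; I would cite this step rather than reprove it.
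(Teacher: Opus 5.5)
Your proposal is correct and follows essentially the same route as the paper: the hypomonotonicity (reach) inequality $\langle x-p,\,y-p\rangle\le \frac{d}{2\reachmin}\|y-p\|^2$ gives the chord bound, and a chord--arc comparison then passes to geodesic distance. Your Step 3 is more careful than the paper's, which invokes only the curvature bound $\|\ddot\gamma\|\le 1/\reachmin$ along the geodesic. You correctly note that an a priori length bound on the minimizing geodesic is also needed, and the comparison $d_{\cM}(a,b)\le 2\reachmin\arcsin\bigl(\|a-b\|/(2\reachmin)\bigr)$ you cite supplies it.
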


\begin{proof}
Let $p:=\projgen(x)\in\cM$ and $n:=x-p$, so that $\|n\|=d$ and $n\perp T_p\cM$.
We proceed in two steps.

\paragraph{Step 1: reach inequality $\Rightarrow$ chord control.}
A standard consequence of positive reach (often stated as a ``hypomonotonicity inequality''; see, e.g.,
\cite{federer2014geometric}) is that for every $y\in\cM$,
\begin{equation}\label{eq:reach_inner_product}
\langle n,\; y-p\rangle
\ \le\
\frac{\|n\|}{2\reachmin}\,\|y-p\|^2
\ =\
\frac{d}{2\reachmin}\,\|y-p\|^2.
\end{equation}
Expanding $\|x-y\|^2=\|n-(y-p)\|^2$ and using \eqref{eq:reach_inner_product} yields
\[
\|x-y\|^2
=
d^2+\|y-p\|^2-2\langle n,y-p\rangle
\ \ge\
d^2+\Bigl(1-\frac{d}{\reachmin}\Bigr)\|y-p\|^2.
\]
Rearranging gives
\begin{equation}\label{eq:chord_bound}
\|y-p\|^2
\ \le\
\frac{\|x-y\|^2-d^2}{1-d/\reachmin}.
\end{equation}
By the near-optimality assumption $\|x-y\|\le d+\varepsilon'$,
\[
\|x-y\|^2-d^2
\ \le\
(d+\varepsilon')^2-d^2
=
2d\,\varepsilon'+(\varepsilon')^2,
\]
and hence
\begin{equation}\label{eq:chord_bound_eps}
\|y-p\|^2
\ \le\
\frac{2d\,\varepsilon'+(\varepsilon')^2}{1-d/\reachmin}.
\end{equation}

\paragraph{Step 2: chord--arc comparability $\Rightarrow$ geodesic control.}
Let $\gamma:[0,\ell]\to\cM$ be a unit-speed minimizing geodesic from $p$ to $y$, so $\ell=d_{\cM}(p,y)$.
Since $\cM$ has reach $\reachmin$, its second fundamental form is bounded in operator norm by $1/\reachmin$,
and therefore the ambient curvature of $\gamma$ satisfies $\|\ddot\gamma(s)\|\le 1/\reachmin$ for all $s$.
A standard chord--arc inequality for $C^2$ curves with curvature bounded by $1/\reachmin$ implies that,
whenever $\|y-p\|\le \reachmin/2$, one has $\ell\le 2\|y-p\|$ (and whenever $\|y-p\|\le \reachmin$, one has
$\ell\le (\pi/2)\|y-p\|$). Consequently,
\[
d_{\cM}^2(p,y)=\ell^2
\ \le\
C_{\mathrm{geo}}\,\|y-p\|^2,
\]
with $C_{\mathrm{geo}}=4$ (resp.\ $C_{\mathrm{geo}}=\pi^2/4$) under the corresponding local condition.
Combining with \eqref{eq:chord_bound_eps} yields
\[
d_{\cM}^2\!\bigl(\projgen(x),y\bigr)
\ \le\
C_{\mathrm{geo}}\,
\frac{2d\,\varepsilon'+(\varepsilon')^2}{1-d/\reachmin},
\]
as claimed.
\end{proof}


%% file: sections/apx_coverage.tex
\section{Proofs for Normal and Tangential Drifts}

\label{apx:drifts}

\subsection{Proof of \cref{lem:contract_eps}}


\paragraph{Proof of \cref{lem:contract_eps}.}

Recall the forward-time terminal ODE
\begin{equation}\label{eq:terminal_ode_forward_repeat}
\dot{\bar X}_t \;=\; \tfrac12\,\hat s(\bar X_t,\tnought-t),
\qquad t\in[0,\tnought-\tau],
\end{equation}
and the terminal score model \eqref{eq:terminal_score_model}--\eqref{eq:terminal_error_def}:
for all $t\in[\tau,\tnought]$ and $x\in \Tub_r(\truemanifold)$,
\[
\hat s(x,t)
\;=\;
-\frac{x-\proj(x)}{t} \;+\; \frac{e(x,t)}{t},
\qquad
\|e(x,t)\|\le \varepsilon.
\]

\medskip
\noindent\textbf{Step 1: differentiate the squared distance.}
Define
\[
a_t \;:=\; \tfrac12\,\dist^2(\bar X_t,\truemanifold),\qquad t\in[0,\tnought-\tau].
\]
On $\Tub_r(\truemanifold)$ (with $r<\reach(\truemanifold)$), the map $x\mapsto \tfrac12\dist^2(x,\truemanifold)$ is $C^1$ and
\[
\nabla\!\Bigl(\tfrac12\dist^2(x,\truemanifold)\Bigr) \;=\; x-\proj(x).
\]
Therefore, for a.e.\ $t\in[0,\tnought-\tau]$,
\begin{equation}\label{eq:at_derivative_expand}
\dot a_t
\;=\;
\bigl\langle \bar X_t-\proj(\bar X_t),\,\dot{\bar X}_t\bigr\rangle .
\end{equation}

\medskip
\noindent\textbf{Step 2: plug in the terminal drift and bound.}
Using \eqref{eq:terminal_ode_forward_repeat} and the score model at time $\tnought-t$,
\[
\dot{\bar X}_t
=
-\frac{1}{2(\tnought-t)}\bigl(\bar X_t-\proj(\bar X_t)\bigr)
+\frac{1}{2(\tnought-t)}\,e(\bar X_t,\tnought-t).
\]
Substituting into \eqref{eq:at_derivative_expand} yields
\[
\dot a_t
=
-\frac{1}{2(\tnought-t)}\|\bar X_t-\proj(\bar X_t)\|^2
+\frac{1}{2(\tnought-t)}\bigl\langle \bar X_t-\proj(\bar X_t),\,e(\bar X_t,\tnought-t)\bigr\rangle.
\]
Since $\|\bar X_t-\proj(\bar X_t)\|^2 = 2a_t$ and $\|e(\bar X_t,\tnought-t)\|\le\varepsilon$, Cauchy--Schwarz gives
\begin{equation}
\dot a_t
\le
-\frac{1}{\tnought-t}\,a_t
+\frac{1}{2(\tnought-t)}\|\bar X_t-\proj(\bar X_t)\|\,\varepsilon
=
-\frac{1}{\tnought-t}\,a_t
+\frac{\varepsilon}{2(\tnought-t)}\sqrt{2a_t}
\;\le\;
-\frac{1}{\tnought-t}\,a_t
+\frac{\varepsilon}{\tnought-t}\sqrt{a_t}.
\label{eq:at_ineq}
\end{equation}

\medskip
\noindent\textbf{Step 3: solve the one-dimensional inequality.}
Let $b_t:=\sqrt{a_t}$. Whenever $b_t>0$ we have $\dot b_t=\dot a_t/(2b_t)$, hence from \eqref{eq:at_ineq}
\[
\dot b_t
\le
-\frac{1}{2(\tnought-t)}\,b_t
+\frac{1}{2(\tnought-t)}\,\varepsilon.
\]
(When $b_t=0$, the same bound holds for the upper Dini derivative, so the comparison argument below remains valid.)
Define $u_t:=b_t-\varepsilon$. Then
\[
\dot u_t \;\le\; -\frac{1}{2(\tnought-t)}\,u_t,
\]
so $t\mapsto u_t(\tnought-t)^{-1/2}$ is nonincreasing. Using $u_0=b_0-\varepsilon$ and $\tnought-t=\tnought(1-t/\tnought)$,
we obtain, for all $t\in[0,\tnought-\tau]$,
\begin{equation}\label{eq:dist_solution_repeat}
b_t
\;\le\;
\varepsilon + (b_0-\varepsilon)\sqrt{1-t/\tnought}. 
\end{equation}

\medskip
\noindent\textbf{Step 4: evaluate at terminal time.}
At $t=\tnought-\tau$, \eqref{eq:dist_solution_repeat} gives
\[
\sqrt{a_{\tnought-\tau}}
\le
\varepsilon + (\sqrt{a_0}-\varepsilon)\sqrt{\tau/\tnought}
\le
\varepsilon + \sqrt{a_0}\sqrt{\tau/\tnought}.
\]
Recalling $\dist(\bar X_t,\truemanifold)=\sqrt{2a_t}$ and $\sqrt{a_0}=\dist(\bar X_0,\truemanifold)/\sqrt2$, we conclude
\[
\dist(\bar X_{\tnought-\tau},\truemanifold)
\le
\sqrt2\,\varepsilon \;+\; \dist(\bar X_0,\truemanifold)\sqrt{\tau/\tnought},
\]
as claimed. Finally, taking $\tau/\tnought=\varepsilon^3$ yields
$\dist(\bar X_{\tnought-\tau},\truemanifold)\le \sqrt2\,\varepsilon + \dist(\bar X_0,\truemanifold)\varepsilon^{3/2}\lesssim \varepsilon$
for $\varepsilon$ small (with the implicit constant depending on an a priori bound on $\dist(\bar X_0,\truemanifold)$, e.g.\ $\bar X_0\in\Tub_r(\truemanifold)$).\hfill$\blacksquare$

\subsection{Proof of \cref{lem:tangential_drift_sqrt_eps}}
We first need a simple bound:
\begin{lemma}[Terminal-time path-length bound]\label{lem:terminal_path_length}
Let $\bar X_t$ solve the forward-time ODE \eqref{eq:forward_ode} for $t\in[0,\tnought-\tau]$, and assume that the terminal score model
\eqref{eq:terminal_score_model}--\eqref{eq:terminal_error_def} holds on $\cT_r(\truemanifold)$ for some $r<\reach(\truemanifold)$.
Define $a_0:=\tfrac12\,\dist^2(\bar X_0,\truemanifold)$ and suppose $\sqrt{a_0}\ge \varepsilon$.
Then
\[
\|\bar X_{\tnought-\tau}-\bar X_0\|
\ \le\
\dist(\bar X_0,\truemanifold)\;+\;\cO\!\Bigl(\varepsilon\ln\frac{\tnought}{\tau}\Bigr).
\]
\end{lemma}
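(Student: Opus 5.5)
The plan is to bound the path length $\int_0^{\tnought-\tau}\|\dot{\bar X}_t\|\,dt$ directly and then use $\|\bar X_{\tnought-\tau}-\bar X_0\|\le\int_0^{\tnought-\tau}\|\dot{\bar X}_t\|\,dt$. Under the terminal score model, the velocity splits as
\[
\dot{\bar X}_t=-\frac{\bar X_t-\proj(\bar X_t)}{2(\tnought-t)}+\frac{e(\bar X_t,\tnought-t)}{2(\tnought-t)},
\]
so $\|\dot{\bar X}_t\|\le \frac{\sqrt{2a_t}}{2(\tnought-t)}+\frac{\varepsilon}{2(\tnought-t)}$. The error part integrates immediately: $\int_0^{\tnought-\tau}\frac{\varepsilon}{2(\tnought-t)}dt=\tfrac{\varepsilon}{2}\ln(\tnought/\tau)$, which is already of the claimed $\cO(\varepsilon\ln(\tnought/\tau))$ form. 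The main work is the normal part.

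For the normal part I would reuse the comparison bound \eqref{eq:dist_solution_repeat} from the proof of \cref{lem:contract_eps}, namely $b_t=\sqrt{a_t}\le \varepsilon+(b_0-\varepsilon)\sqrt{1-t/\tnought}$. The hypothesis $\sqrt{a_0}\ge\varepsilon$ guarantees $b_0-\varepsilon\ge0$, so this is a genuine decaying profile and the two terms can be split cleanly. The $\varepsilon$ term contributes another $\frac{\sqrt2\,\varepsilon}{2}\ln(\tnought/\tau)$. The decaying term contributes
\[
\frac{\sqrt2(b_0-\varepsilon)}{2}\int_0^{\tnought-\tau}\frac{\sqrt{1-t/\tnought}}{\tnought-t}\,dt=\frac{\sqrt2(b_0-\varepsilon)}{2\sqrt{\tnought}}\int_0^{\tnought-\tau}(\tnought-t)^{-1/2}dt\le \frac{\sqrt2(b_0-\varepsilon)}{2\sqrt{\tnought}}\cdot2\sqrt{\tnought}=\sqrt2(b_0-\varepsilon).
\]
Since $\sqrt2 b_0=\dist(\bar X_0,\truemanifold)$, this gives exactly $\dist(\bar X_0,\truemanifold)$ minus a nonnegative quantity. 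Summing the three contributions yields the claimed bound.

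The step needing care is justifying that the trajectory stays in $\cT_r(\truemanifold)$, so that both the score model and \eqref{eq:dist_solution_repeat} apply along the whole path. I would handle this by a continuity/first-exit argument. The bound on $b_t$ shows $\dist(\bar X_t,\truemanifold)$ never exceeds $\max(\dist(\bar X_0,\truemanifold),\sqrt2\varepsilon)<r$ for $\varepsilon$ small, so the first exit time from the tube cannot occur before $\tnought-\tau$. I would also note that at instants where $b_t=0$ the Dini-derivative version of the comparison, already remarked on in \cref{lem:contract_eps}, suffices. Finally, $\|\bar X_t-\proj(\bar X_t)\|=\sqrt{2a_t}$ is used pointwise, so no further regularity is needed.
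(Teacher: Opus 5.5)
Your proposal is correct and follows the paper's proof essentially line by line: you bound the velocity via the score model, reuse the comparison estimate \eqref{eq:dist_solution_repeat}, and integrate the $(\tnought-t)^{-1/2}$ and $(\tnought-t)^{-1}$ pieces. Keeping the factor $(b_0-\varepsilon)$ rather than dropping the $-\varepsilon$ only sharpens the constant, and your first-exit argument keeping $\bar X_t$ inside $\cT_r(\truemanifold)$ supplies a step the paper leaves implicit.
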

\begin{proof}
Write $X_t:=\bar X_t$ for readability.
By the fundamental theorem of calculus,
\[
\|X_{t_0-\tau}-X_0\|
\;=\;
\Bigl\|\int_0^{t_0-\tau}\dot X_t\,\mathrm dt\Bigr\|
\;\le\;
\int_0^{t_0-\tau}\|\dot X_t\|\,\mathrm dt.
\]
We now bound the path length. By the forward-time ODE \eqref{eq:forward_ode} and the terminal score model
\eqref{eq:terminal_score_model}--\eqref{eq:terminal_error_def}, for $t\in[0,t_0-\tau]$ we have
\[
\dot X_t
\;=\;
\frac12\,\hat s(X_t,t_0-t)
\;=\;
-\frac{1}{2(t_0-t)}\bigl(X_t-\proj(X_t)\bigr)
+\frac{1}{2(t_0-t)}\,e(X_t,t_0-t),
\qquad \|e(X_t,t_0-t)\|\le \varepsilon,
\]
and hence
\begin{equation}\label{eq:pathlen_vel_bound}
\|\dot X_t\|
\;\le\;
\frac{1}{2(t_0-t)}\,\dist(X_t,\truemanifold)
+\frac{\varepsilon}{2(t_0-t)}.
\end{equation}

Next, let $a_t:=\tfrac12\,\dist^2(X_t,\truemanifold)$. The distance estimate \eqref{eq:dist_solution_repeat} (proved in
\cref{lem:contract_eps}) yields, for all $t\in[0,t_0-\tau]$,
\[
\sqrt{a_t}
\;\le\;
\varepsilon + (\sqrt{a_0}-\varepsilon)\sqrt{1-t/t_0}
\;\le\;
\varepsilon + \sqrt{a_0}\sqrt{\frac{t_0-t}{t_0}},
\]
where we used $\sqrt{a_0}\ge \varepsilon$.
Recalling $\dist(X_t,\truemanifold)=\sqrt{2a_t}$ and $\dist(X_0,\truemanifold)=\sqrt{2a_0}$, we obtain
\begin{equation}\label{eq:dist_over_time_bound}
\dist(X_t,\truemanifold)
\;\le\;
\sqrt{2}\,\varepsilon
+\dist(X_0,\truemanifold)\sqrt{\frac{t_0-t}{t_0}}.
\end{equation}
Plugging \eqref{eq:dist_over_time_bound} into \eqref{eq:pathlen_vel_bound} gives
\[
\|\dot X_t\|
\;\le\;
\frac{\dist(X_0,\truemanifold)}{2\sqrt{t_0}}\cdot\frac{1}{\sqrt{t_0-t}}
+\frac{1+\sqrt{2}}{2}\cdot\frac{\varepsilon}{t_0-t}.
\]
Integrating from $t=0$ to $t=t_0-\tau$ and using the change of variables $u=t_0-t$ yields
\begin{align*}
\int_0^{t_0-\tau}\|\dot X_t\|\,\mathrm dt
&\le
\frac{\dist(X_0,\truemanifold)}{2\sqrt{t_0}}
\int_0^{t_0-\tau}\frac{\mathrm dt}{\sqrt{t_0-t}}
\;+\;
\frac{1+\sqrt{2}}{2}\,\varepsilon
\int_0^{t_0-\tau}\frac{\mathrm dt}{t_0-t}
\\
&=
\frac{\dist(X_0,\truemanifold)}{2\sqrt{t_0}}
\int_{\tau}^{t_0}\frac{\mathrm du}{\sqrt{u}}
\;+\;
\frac{1+\sqrt{2}}{2}\,\varepsilon
\int_{\tau}^{t_0}\frac{\mathrm du}{u}
\\
&=
\frac{\dist(X_0,\truemanifold)}{2\sqrt{t_0}}\cdot 2\bigl(\sqrt{t_0}-\sqrt{\tau}\bigr)
\;+\;
\frac{1+\sqrt{2}}{2}\,\varepsilon\ln\frac{t_0}{\tau}
\\
&\le
\dist(X_0,\truemanifold)
\;+\;
\cO\!\Bigl(\varepsilon\ln\frac{t_0}{\tau}\Bigr),
\end{align*}
which is the desired bound (with an absolute implied constant).
\end{proof}

\begin{proof}[Proof of \cref{lem:tangential_drift_sqrt_eps}]
Fix $x\in\cT_r(\truemanifold)$ and set $x=\bar X_0$, where $\bar X_t$ is the solution of \eqref{eq:forward_ode}.
Define the on-manifold comparison point
\[
y \;:=\; \proj(\projest(x)) \;\in\; \truemanifold.
\]
We will verify the hypothesis of \cref{lem:ambient_to_geodesic} with $\varepsilon'$ of order $\tilde{\cO}(\varepsilon)$, which readily completes the proof.

\paragraph{Step 1: an ambient near-optimality bound.}
By the triangle inequality,
\begin{equation}\label{eq:x_to_y_triangle}
\|x-y\|
\;\le\;
\|x-\projest(x)\| \;+\; \|\projest(x)-\proj(\projest(x))\|
\;=\;
\|x-\projest(x)\| \;+\; \dist(\projest(x),\truemanifold).
\end{equation}
The path-length bound \cref{lem:terminal_path_length} gives
\begin{equation}\label{eq:path_length_bound_repeat}
\|x-\projest(x)\|
\;\le\;
\dist(x,\truemanifold)\;+\;\cO\!\Bigl(\varepsilon\ln\frac{\tnought}{\tau}\Bigr).
\end{equation}
Moreover, applying \cref{lem:contract_eps} with initial condition $\bar X_0=x$ yields
\begin{equation}\label{eq:contract_bound_repeat}
\dist(\projest(x),\truemanifold)
\;\le\;
\sqrt{2}\,\varepsilon \;+\; \dist(x,\truemanifold)\,\sqrt{\tau/\tnought}.
\end{equation}
Combining \eqref{eq:x_to_y_triangle}--\eqref{eq:contract_bound_repeat}, we obtain
\begin{equation}\label{eq:ambient_near_optimality}
\|x-y\|
\;\le\;
\dist(x,\truemanifold)\;+\;\varepsilon',
\qquad
\varepsilon'
:=
\cO\!\Bigl(\varepsilon\ln\frac{\tnought}{\tau}\Bigr)
+\sqrt{2}\,\varepsilon
+\dist(x,\truemanifold)\sqrt{\tau/\tnought}.
\end{equation}
Since $x\in\cT_r(\truemanifold)$, we have $\dist(x,\truemanifold)\le r$, hence for $\tau=\tnought\varepsilon^3$,
\[
\varepsilon'
\;=\;
\cO\!\Bigl(\varepsilon\ln\frac{1}{\varepsilon}\Bigr) + \sqrt{2}\,\varepsilon + r\,\varepsilon^{3/2}
\;=\;
\tilde{\cO}(\varepsilon).
\]

\paragraph{Step 2: transfer to geodesic distance.}
We may now apply \cref{lem:ambient_to_geodesic} with $x\leftarrow x$, $y\leftarrow y$, and $\varepsilon'$ as in
\eqref{eq:ambient_near_optimality}. This gives
\[
d_{\truemanifold}^2\bigl(\proj(x),y\bigr)
\ \lesssim\
\frac{2\,\dist(x,\truemanifold)\,\varepsilon'+(\varepsilon')^2}{1-\dist(x,\truemanifold)/\reachmin}.
\]
Using $\dist(x,\truemanifold)\le r$ and $1-\dist(x,\truemanifold)/\reachmin\ge 1-r/\reachmin$, we obtain
\[
d_{\truemanifold}^2\bigl(\proj(x),y\bigr)
\;\lesssim\;
\frac{2r\,\varepsilon'+(\varepsilon')^2}{1-r/\reachmin}
\;\lesssim\;
\varepsilon',
\]
and hence
\[
d_{\truemanifold}\bigl(\proj(x),y\bigr)
\;\lesssim\;
\sqrt{\varepsilon'}
\;=\;
\tilde{\cO}\bigl(\sqrt{\varepsilon}\bigr),
\]
where we used $\varepsilon'=\tilde{\cO}(\varepsilon)$ for $\tau=\tnought\varepsilon^3$.
Recalling $y=\proj(\projest(x))$ completes the proof.
\end{proof}

\newcommand{\pmin}{p_{\min}}
\newcommand{\pmax}{p_{\max}}
\newcommand{\thr}{\kappa}

\section{Coverage of the population surrogate $\mucheck$}\label{sec:coverage_mucheck}

Throughout, $\truemanifold\subset\R^D$ is a closed $C^2$ embedded submanifold. We write $\proj:\Tub_{\scriptscriptstyle \reachmin}(\truemanifold)\to\truemanifold$
for the nearest-point projection, where
\[
\reachmin := \reach(\truemanifold)>0,
\qquad
\Tub_{\scriptscriptstyle r}(\truemanifold):=\{x\in\R^D:\dist(x,\truemanifold)<r\}.
\]
Let $d_{\truemanifold}$ denote the geodesic distance on $\truemanifold$, and $\Vol_{\truemanifold}$ its Riemannian volume measure.

For $(\alpha,\delta)$ and $y\in\truemanifold$, recall the thickened geodesic ball
\begin{equation}\label{eq:thick_ball_def_repeat}
B^{\scriptscriptstyle\truemanifold}_{\scriptscriptstyle \delta,\scriptscriptstyle \alpha}(y)
:=
\Bigl\{x\in \Tub_{\scriptscriptstyle R}(\truemanifold):\ \dist(x,\truemanifold)\le \alpha,\ \proj(x)\in B^{\truemanifold}_{\scriptscriptstyle \delta}(y)\Bigr\},
\end{equation}
where $B^{\truemanifold}_{\scriptscriptstyle \delta}(y):=\{z\in\truemanifold:\ d_{\truemanifold}(z,y)\le \delta\}$.

\paragraph{The surrogate.}
Let $t_0>0$ be fixed and define
\[
\nu := \mupop * \cN(0,t_0 I_D),
\qquad
\mucheck := \projest_{\#}\nu,
\]
where $\projest:\R^D\to\R^D$ is the terminal-time probability-flow map \eqref{eq:proj_est_def}. We first rewrite \cref{thm:mucheck_coverage} in a more modular form:
\begin{theorem}[Coverage of $\mucheck$]\label{thm:mucheck_coverage_apx}
Assume $\mupop$ has a density $p$ w.r.t.\ $\Vol_{\truemanifold}$ satisfying
\[
0<\pmin \le p \le \pmax <\infty\qquad\text{on }\truemanifold.
\]
Fix any tube radius $\rho\in(0,\reachmin)$.

Assume the terminal-time analysis provides the following two conclusions for $\projest$:
\begin{enumerate}
\item[\textup{(N)}] (\emph{normal contraction}) there exists $\alpha>0$ such that
\begin{equation}\label{eq:normal_contraction_assumption}
\dist\bigl(\projest(x),\truemanifold\bigr)\le \alpha
\qquad\text{for $\nu$-a.e.\ }x;
\end{equation}
\item[\textup{(T)}] (\emph{restricted tangential drift}) there exists $\widetilde\delta>0$ such that
\begin{equation}\label{eq:tangential_drift_assumption_final}
\sup_{x\in \Tub_{\scriptscriptstyle \rho}(\truemanifold)}
d_{\truemanifold}\!\bigl(\proj(x),\,\proj(\projest(x))\bigr)
\ \le\ \widetilde\delta.
\end{equation}
\end{enumerate}

Define $\delta:=3\widetilde\delta$ and assume $\delta\le \inj(\truemanifold)/2$.\footnote{It is well-known that the reach lower bound implies a corresponding lower bound on the injectivity
radius; see, e.g.,~\cite{aamari2019estimating}.}
Then there exists a constant $c_{\min}\in(0,1)$ depending only on $\pmin,\pmax,t_0,\rho$ and geometric parameters of $\truemanifold$
such that $\mucheck$ $(\alpha,\delta,c_{\min})$-covers $\mupop$ in the sense of \cref{def:cover}.
\end{theorem}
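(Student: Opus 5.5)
The plan is a ``pull back a smaller ball'' argument. Fix $y\in\truemanifold$ and set $B:=\Bdelal(y)$ with $\delta=3\widetilde\delta$. Since $\mupop(B)=\mupop(\Bdel(y))\le \pmax\Vol_{\truemanifold}(\Bdel(y))$, it suffices to show
\[
\mucheck(B)=\nu\bigl(\projest^{-1}(B)\bigr)\ \ge\ c\,\Vol_{\truemanifold}(\Bdel(y))
\]
with $c$ independent of $y$ and $\delta$. I would exhibit an explicit subset of $\projest^{-1}(B)$. Let $G:=\{x\in\Tub_{\rho}(\truemanifold):\ \proj(x)\in B^{\truemanifold}_{\widetilde\delta}(y)\}$.

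For $x\in G$, condition (T) gives $d_{\truemanifold}(\proj(\projest(x)),y)\le d_{\truemanifold}(\proj(\projest(x)),\proj(x))+d_{\truemanifold}(\proj(x),y)\le 2\widetilde\delta\le\delta$. Condition (N) gives $\dist(\projest(x),\truemanifold)\le\alpha$ for $\nu$-a.e.\ $x$, and we need $\alpha<\reachmin$ so that $\proj(\projest(x))$ is defined. Hence $\nu$-a.e.\ $G\subset\projest^{-1}(B)$, and $\mucheck(B)\ge\nu(G)$. The factor $3$ leaves slack, which could absorb boundary or measurability issues or a shift of the center.

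The main step is a lower bound on $\nu(G)$, which is the local-trivialization estimate (\cref{prop:mass_lower_full}) applied to $\muproj$-type mass. I write
\[
\nu(G)=\int_{\truemanifold}\int_G \varphi_{t_0}(x-z)\,\mathrm dx\,p(z)\,\Vol_{\truemanifold}(\mathrm dz)\ \ge\ \pmin\int_{B^{\truemanifold}_{\widetilde\delta}(y)}\int_G\varphi_{t_0}(x-z)\,\mathrm dx\,\Vol_{\truemanifold}(\mathrm dz).
\]
I then parametrize $\Tub_\rho(\truemanifold)$ by normal coordinates $(w,n)\mapsto w+n$ with $w\in\truemanifold$ and $n\in N_w\truemanifold$, $\|n\|<\rho$. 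By the reach bound, the Jacobian is bounded below by $(1-\rho/\reachmin)^{k}$. Then $G$ becomes $B^{\truemanifold}_{\widetilde\delta}(y)\times\{\|n\|<\rho\}$. For $z,w$ in the same small ball, $\|w+n-z\|\le\|n\|+2\widetilde\delta$, so the Gaussian density is bounded below by $(2\pi t_0)^{-D/2}e^{-(\rho+1)^2/t_0}$ once $\widetilde\delta\le1/2$. This yields
\[
\nu(G)\ \ge\ c_1\,\Vol_{\truemanifold}\bigl(B^{\truemanifold}_{\widetilde\delta}(y)\bigr)^2\cdot\Vol_{D-k}(\{\|n\|<\rho\}).
\]
This is too lossy, since it is quadratic in volume. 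I would therefore not restrict $z$ to the small ball. For every $z\in\truemanifold$ with $d_{\truemanifold}(z,y)\le\rho$, the Gaussian lower bound is uniform in $w\in B^{\truemanifold}_{\widetilde\delta}(y)$. That gives $\nu(G)\ge c_2\,\Vol_{\truemanifold}(B^{\truemanifold}_{\rho}(y))\,\Vol_{\truemanifold}(B^{\truemanifold}_{\widetilde\delta}(y))$, and the first factor is a constant bounded below by compactness and the curvature bound.

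The final step is volume doubling on geodesic balls: $\Vol_{\truemanifold}(B^{\truemanifold}_{\widetilde\delta}(y))\ge 3^{-k}C\,\Vol_{\truemanifold}(B^{\truemanifold}_{\delta}(y))$ for $\delta\le\inj(\truemanifold)/2$. This follows from Bishop--Günther comparison with sectional curvature bounded by $O(1/\reachmin^2)$, which the reach provides via the Gauss equation. Collecting the constants gives $c_{\min}=\pmin c_2 3^{-k}C/\pmax$, which depends only on $\pmin,\pmax,t_0,\rho$ and the geometry.

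I expect the main obstacle to be making the normal-coordinate change of variables and the uniform volume comparison rigorous with explicit constants. This needs a bounded second fundamental form, a Jacobian bound on the tube, and two-sided small-ball volume estimates. The covering logic itself is immediate from (N) and (T).
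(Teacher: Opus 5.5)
Your proof is correct. Its outer structure matches the paper's: you pull back the same set $G$ (the paper's $E_y$) through $\projest$ using (T) and (N), and you close with the same small-ball volume comparison. You diverge at the mass bound for $\nu(G)$, and the divergence matters.

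The paper proves \cref{prop:mass_lower_full} probabilistically. It writes $X_0=Y+Z$, confines the source point $Y$ to the inner ball $B^{\truemanifold}_{\widetilde\delta/2}(y)$, and splits $Z$ into tangential and normal parts in a frame at $Y$. It then requires $\|Z_N\|\le\rho/2$ and $\|Z_T\|\le a$ with $a\le\widetilde\delta/(4L_\rho)$, so that the Lipschitz bound on $\proj$ keeps $\proj(X_0)$ inside the ball. Since $\PP(\|G_k\|\le a)\asymp\widetilde\delta^{\,k}$, this yields only $\nu(G)\gtrsim\widetilde\delta^{\,2k}$. That is precisely the quadratic-in-volume loss you identified and rejected. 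As written, the constant in \eqref{eq:cmin_def_final} therefore degenerates as $\widetilde\delta\to0$, contrary to the claim that it depends only on $p_{\min},p_{\max},t_0,\rho$ and the geometry. The closing remark that one may take $a=\rho$ is also not supported by that argument, because the containment in its Step~2 needs $L_\rho a<\widetilde\delta$.

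Your route uses three ingredients:
\begin{itemize}
\item the Lebesgue density of $\nu$;
\item normal-bundle coordinates on $\Tub_\rho(\truemanifold)$, with Jacobian at least $(1-\rho/\reachmin)^k$;
\item source points $z$ ranging over the macroscopic ball $B^{\truemanifold}_\rho(y)$, on which the Gaussian density is uniformly bounded below.
\end{itemize}
This gives $\nu(G)\gtrsim\Vol_{\truemanifold}(B^{\truemanifold}_{\widetilde\delta}(y))$ with a genuinely scale-free constant, which is what the theorem asserts. The price is two standard inputs the paper avoids: the bound $\|\mathrm{II}\|\le 1/\reachmin$ behind the tube Jacobian, and $\inf_y\Vol_{\truemanifold}(B^{\truemanifold}_\rho(y))>0$ by compactness.

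Two small points. For the uniform Gaussian bound, use $\|w+n-z\|\le 2\rho+\widetilde\delta$ together with $\widetilde\delta\le\inj(\truemanifold)/6$, which follows from $\delta\le\inj(\truemanifold)/2$, rather than an ad hoc $\widetilde\delta\le 1/2$. Your caveat that $\proj(\projest(x))$ must be defined is harmless: (T) already presupposes it on $\Tub_\rho(\truemanifold)$.
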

By \cref{lem:contract_eps} and \cref{lem:ambient_to_geodesic}, the terminal-time flow satisfies the normal and tangential
controls \eqref{eq:normal_contraction_assumption}--\eqref{eq:tangential_drift_assumption_final} with
\[
\alpha=\tilde{\cO}(\varepsilon)
\qquad\text{and}\qquad
\widetilde\delta=\tilde{\cO}(\sqrt{\varepsilon}).
\]
Under the statistical rate $\varepsilon=\tilde{\cO}\!\bigl(N^{-(\beta-1)/(2k)}\bigr)$, this yields
\[
\alpha=\tilde{\cO}\bigl(N^{-\beta/(2k)}\bigr),
\qquad
\delta=3\widetilde\delta=\tilde{\cO}\bigl(N^{-\beta/(4k)}\bigr),
\]
which completes the proof of \cref{thm:mucheck_coverage}. Thus, for the rest of this section, we focus on the proof of \cref{thm:mucheck_coverage_apx}.


\begin{proof}


\paragraph{Uniform lower bound on thickened balls.}
Fix $y\in\truemanifold$ and define the preimage event
\[
E_y
:=
\Bigl\{x\in \Tub_{\scriptscriptstyle \rho}(\truemanifold):\ \proj(x)\in B^{\scriptscriptstyle\truemanifold}_{\scriptscriptstyle \widetilde\delta}(y)\Bigr\}.
\]
We first show that $E_y$ is mapped by $\projest$ into $B^{\scriptscriptstyle\truemanifold}_{\scriptscriptstyle \delta,\scriptscriptstyle \alpha}(y)$,
up to a $\nu$-null set. Indeed, if $x\in E_y$, then by \eqref{eq:tangential_drift_assumption_final},
\[
d_{\truemanifold}\!\bigl(\proj(\projest(x)),\,\proj(x)\bigr)\le \widetilde\delta.
\]
Since also $d_{\truemanifold}(\proj(x),y)\le \widetilde\delta$, the triangle inequality on $(\truemanifold,d_{\truemanifold})$ yields
\[
d_{\truemanifold}\!\bigl(\proj(\projest(x)),y\bigr)
\le
d_{\truemanifold}\!\bigl(\proj(\projest(x)),\proj(x)\bigr)+d_{\truemanifold}\!\bigl(\proj(x),y\bigr)
\le 2\widetilde\delta
\le \delta,
\]
so $\proj(\projest(x))\in B^{\scriptscriptstyle\truemanifold}_{\scriptscriptstyle \delta}(y)$.
Moreover, \eqref{eq:normal_contraction_assumption} gives $\dist(\projest(x),\truemanifold)\le\alpha$ for $\nu$-a.e.\ $x$.
Together these imply $\projest(x)\in B^{\scriptscriptstyle\truemanifold}_{\scriptscriptstyle \delta,\scriptscriptstyle \alpha}(y)$ for $\nu$-a.e.\ $x\in E_y$.
Consequently,
\begin{equation}\label{eq:mucheck_geq_nuEy}
\mucheck\bigl(B_{\scriptscriptstyle \delta,\scriptscriptstyle \alpha}(y)\bigr)
=
\nu\bigl(\projest^{-1}(B^{\scriptscriptstyle\truemanifold}_{\scriptscriptstyle \delta,\scriptscriptstyle \alpha}(y))\bigr)
\ \ge\
\nu(E_y).
\end{equation}

It remains to lower bound $\nu(E_y)$ uniformly in $y$. To this end, we employ the standard technique of \emph{local trivialization}.

\medskip

\noindent\textbf{Local trivialization and a convolved-mass lower bound.}
Let
\[
U_{\widetilde\delta}(y)
:=
\proj^{-1}\!\bigl(B^{\scriptscriptstyle\truemanifold}_{\scriptscriptstyle \widetilde\delta}(y)\bigr)\cap \Tub_{\scriptscriptstyle \rho}(\truemanifold).
\]
By definition, $U_{\widetilde\delta}(y)=E_y$. Consider the map
\begin{equation}\label{eq:trivialization_map}
\Psi_y:U_{\widetilde\delta}(y)\to B^{\scriptscriptstyle\truemanifold}_{\scriptscriptstyle \widetilde\delta}(y)\times \{n\in N\truemanifold:\ \|n\|<\rho\},
\qquad
\Psi_y(x)=(\proj(x),\,n_x),
\qquad
n_x:=x-\proj(x).
\end{equation}
On $\Tub_{\scriptscriptstyle \rho}(\truemanifold)$, $n_x$ is well-defined and normal to $\truemanifold$ at $\proj(x)$.
Thus $\Psi_y$ provides the natural ``basepoint + normal displacement'' coordinate system on the patch
$U_{\widetilde\delta}(y)$; the next proposition turns this into a quantitative lower bound on
$\nu(U_{\widetilde\delta}(y))$.

\begin{proposition}[Lower bound for $\nu(U_{\widetilde\delta}(y))$ via local trivialization]\label{prop:mass_lower_full}
Assume $\rho\in(0,\reachmin)$ and $\widetilde\delta\le \min\{\inj(\truemanifold)/2,\ \reachmin/4\}$.
Let $Y\sim\mupop$ and $Z\sim \cN(0,t_0 I_D)$ be independent and set $X_0:=Y+Z\sim \nu$.
Then for every $y\in\truemanifold$ and every $\thr\in(0,\widetilde\delta)$,
\begin{equation}\label{eq:nu_mass_lower}
\nu\bigl(U_{\widetilde\delta}(y)\bigr)
=
\PP\bigl(X_0\in U_{\widetilde\delta}(y)\bigr)
\ \ge\
\pmin\,\Vol_{\truemanifold}\!\bigl(B^{\scriptscriptstyle\truemanifold}_{\scriptscriptstyle \widetilde\delta-\thr}(y)\bigr)\,
\PP(\|G_k\|\le a)\,
\PP(\|G_{D-k}\|\le \rho/2),
\end{equation}
where $G_m\sim \cN(0,t_0 I_m)$ and
\begin{equation}\label{eq:a_def}
a
:=
\min\Bigl\{\frac{\rho}{2},\ \frac{\thr}{2L_{\scriptscriptstyle \rho}}\Bigr\},
\qquad
L_{\scriptscriptstyle \rho}
:=
\frac{\reachmin}{\reachmin-\rho}.
\end{equation}
\end{proposition}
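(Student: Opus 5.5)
We lower bound $\PP(X_0\in U_{\widetilde\delta}(y))$ by exhibiting an explicit sub-event built from the independent pair $(Y,Z)$. We condition on $Y$ landing in the smaller geodesic ball $B^{\scriptscriptstyle\truemanifold}_{\scriptscriptstyle \widetilde\delta-\thr}(y)$. We then decompose $Z$ at the basepoint $Y$ into its tangential and normal parts, $Z = Z_T + Z_N$ with $Z_T\in T_Y\truemanifold$ and $Z_N\in N_Y\truemanifold$. By rotation invariance of $\cN(0,t_0 I_D)$, conditionally on $Y$ these are independent with $\|Z_T\|\overset{d}{=}\|G_k\|$ and $\|Z_N\|\overset{d}{=}\|G_{D-k}\|$. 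The target event is
\[
\mathcal E:=\{Y\in B^{\scriptscriptstyle\truemanifold}_{\scriptscriptstyle \widetilde\delta-\thr}(y)\}\cap\{\|Z_T\|\le a\}\cap\{\|Z_N\|\le \rho/2\}.
\]
Its probability factorizes as $\mupop(B_{\widetilde\delta-\thr}(y))\,\PP(\|G_k\|\le a)\,\PP(\|G_{D-k}\|\le\rho/2)$. Using $p\ge\pmin$, the first factor is at least $\pmin\Vol_{\truemanifold}(B_{\widetilde\delta-\thr}(y))$. This gives exactly \eqref{eq:nu_mass_lower}, provided we show $\mathcal E\subseteq\{X_0\in U_{\widetilde\delta}(y)\}$.

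The inclusion has two parts. First, we need $X_0\in\Tub_\rho(\truemanifold)$. This holds because $\dist(X_0,\truemanifold)\le\|Z\|\le\|Z_T\|+\|Z_N\|\le a+\rho/2\le\rho$. We will want a strict inequality; the cleanest fix is to take $\|Z_N\|<\rho/2$, which does not change the probability. Second, we need $d_{\truemanifold}(\proj(X_0),y)\le\widetilde\delta$. By the triangle inequality on $(\truemanifold,d_{\truemanifold})$ it suffices to prove $d_{\truemanifold}(\proj(Y+Z),Y)\le\thr$.

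Since $Z_N$ is normal at $Y$ with norm below $\reach$, we have $\proj(Y+Z_N)=Y$. Hence
\[
\proj(Y+Z)-\proj(Y+Z_N)
\]
is controlled by the Lipschitz constant of $\proj$ on the tube $\Tub_\rho$. Federer's estimate gives $\mathrm{Lip}(\proj|_{\Tub_\rho})\le \reachmin/(\reachmin-\rho)=L_\rho$. This requires the segment from $Y+Z_N$ to $Y+Z$ to stay in $\Tub_\rho$, which holds since every point on it is within $\|Z_N\|+\|Z_T\|\le\rho$ of $Y$. We therefore get $\|\proj(X_0)-Y\|\le L_\rho\|Z_T\|\le L_\rho a\le\thr/2$.

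The remaining step, which I expect to be the main (though routine) obstacle, is converting this chord bound into a geodesic bound. Since $\thr/2<\widetilde\delta\le\reachmin/4\le\reachmin/2$, the chord--arc comparison from Step~2 of \cref{lem:ambient_to_geodesic} applies and gives $d_{\truemanifold}\le 2\|\cdot\|\le\thr$. The factor $2$ in $a=\thr/(2L_\rho)$ is precisely what absorbs this constant. The hypothesis $\widetilde\delta\le\inj(\truemanifold)/2$ is only needed so that the geodesic balls are genuine, well-defined balls and the volume factor is meaningful. Measurability of the events and the factorization are routine.
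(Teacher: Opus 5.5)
Your proposal is correct and follows essentially the same route as the paper's proof. It uses the same sub-event $\{Y\in B^{\scriptscriptstyle\truemanifold}_{\scriptscriptstyle \widetilde\delta-\thr}(y)\}\cap\{\|Z_T\|\le a\}\cap\{\|Z_N\|\le\rho/2\}$, the same factorization via rotation invariance, the normal-fiber identity $\proj(Y+Z_N)=Y$, Federer's Lipschitz constant $L_\rho$, and the chord--arc factor $2$ absorbed into $a$. Your small refinements only tighten steps the paper treats loosely: decomposing $Z$ directly at $Y$ without a frame, securing strict membership in $\Tub_\rho$, and applying the chord--arc bound straight from $\|\proj(X_0)-Y\|\le\reachmin/2$ instead of first placing $\proj(X_0)$ in the geodesic ball.
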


\begin{proof}
Fix $y\in\truemanifold$ and $\thr\in(0,\widetilde\delta)$.
Define the ``inner'' ball $B_-:=B^{\scriptscriptstyle\truemanifold}_{\scriptscriptstyle \widetilde\delta-\thr}(y)$ and the event
\[
\mathsf{E}
:=
\{Y\in B_-\}\cap \{\|Z_T(Y)\|\le a\}\cap\{\|Z_N(Y)\|\le \rho/2\},
\]
where $Z_T(Y)\in T_Y\truemanifold$ and $Z_N(Y)\in N_Y\truemanifold$ denote the tangent/normal components of $Z$ with respect to an
orthonormal frame at $Y$ (defined below), and $a$ is as in \eqref{eq:a_def}.

\paragraph{Step 1: orthonormal trivialization of the normal bundle over $B^{\scriptscriptstyle\truemanifold}_{\scriptscriptstyle \widetilde\delta}(y)$.}
Since $\widetilde\delta\le \inj(\truemanifold)/2$, the geodesic ball $B^{\scriptscriptstyle\truemanifold}_{\scriptscriptstyle \widetilde\delta}(y)$ is
geodesically convex and contractible; in particular, the restricted normal bundle over this ball is trivializable.
Thus we can choose smooth orthonormal fields
\[
e_1(u),\dots,e_k(u)\in T_u\truemanifold,
\qquad
\nu_1(u),\dots,\nu_{D-k}(u)\in N_u\truemanifold,
\qquad
u\in B^{\scriptscriptstyle\truemanifold}_{\scriptscriptstyle \widetilde\delta}(y),
\]
forming an orthonormal basis of $\R^D$ at each $u$. Let $U(u)\in O(D)$ be the orthogonal matrix whose columns are
$(e_1(u),\dots,e_k(u),\nu_1(u),\dots,\nu_{D-k}(u))$. For $z\in\R^D$, define
\[
\begin{pmatrix} z_T(u)\\ z_N(u)\end{pmatrix}
:=
U(u)^\top z\in \R^k\times\R^{D-k}.
\]
By rotational invariance of $Z\sim \cN(0,t_0 I_D)$, conditionally on $Y=u$ we have
\[
Z_T(Y)\sim \cN(0,t_0 I_k),
\qquad
Z_N(Y)\sim \cN(0,t_0 I_{D-k}),
\qquad
Z_T(Y)\perp Z_N(Y),
\]
and these conditional laws do not depend on $u$.

\paragraph{Step 2: deterministic inclusion $\mathsf{E}\subseteq\{X_0\in U_{\widetilde\delta}(y)\}$.}
On $\mathsf{E}$, set $x_0:=Y+Z_N(Y)$, so $\|x_0-Y\|\le \rho/2<\reachmin$ and $x_0\in \Tub_{\scriptscriptstyle \reachmin}(\truemanifold)$.
By the normal-fiber property of projections under positive reach (standard for metric projections on tubes),
\begin{equation}\label{eq:normal_fiber_property_used}
\proj(x_0)=Y.
\end{equation}
Moreover, $X_0=x_0+Z_T(Y)$ satisfies $\|X_0-x_0\|=\|Z_T(Y)\|\le a\le \rho/2$, hence
\[
\dist(X_0,\truemanifold)\le \|X_0-Y\|\le \|Z_T(Y)\|+\|Z_N(Y)\|\le \rho,
\]
so $X_0\in \Tub_{\scriptscriptstyle \rho}(\truemanifold)$ and $\proj(X_0)$ is defined.

We next control the basepoint $\proj(X_0)$.
The projection $\proj$ is Lipschitz on $\Tub_{\scriptscriptstyle \rho}(\truemanifold)$ with constant $L_{\scriptscriptstyle \rho}=\reachmin/(\reachmin-\rho)$:
for all $x,x'\in \Tub_{\scriptscriptstyle \rho}(\truemanifold)$,
\begin{equation}\label{eq:proj_lip_used}
\|\proj(x)-\proj(x')\|\le L_{\scriptscriptstyle \rho}\,\|x-x'\|.
\end{equation}
Applying \eqref{eq:proj_lip_used} with $x=X_0$ and $x'=x_0$, and using \eqref{eq:normal_fiber_property_used},
\[
\|\proj(X_0)-Y\|
=
\|\proj(X_0)-\proj(x_0)\|
\le
L_{\scriptscriptstyle \rho}\,\|X_0-x_0\|
=
L_{\scriptscriptstyle \rho}\,\|Z_T(Y)\|
\le
L_{\scriptscriptstyle \rho}\,a
\le
\thr/2.
\]
To convert this Euclidean bound into a geodesic one, use the local comparison
\begin{equation}\label{eq:local_dg_eucl_used}
d_{\truemanifold}(u,v)\le 2\|u-v\|
\qquad
\text{for all }u,v\in B^{\scriptscriptstyle\truemanifold}_{\scriptscriptstyle \widetilde\delta}(y),
\end{equation}
which holds since $\widetilde\delta\le \reachmin/4$ and $\truemanifold$ has reach $\reachmin$ (this is a standard result; see \cref{lem:proj_disp_eucl}). Since $Y\in B_-\subseteq B^{\scriptscriptstyle\truemanifold}_{\scriptscriptstyle \widetilde\delta}(y)$
and $\|\proj(X_0)-Y\|\le \thr/2<\widetilde\delta$, we also have $\proj(X_0)\in B^{\scriptscriptstyle\truemanifold}_{\scriptscriptstyle \widetilde\delta}(y)$,
so \eqref{eq:local_dg_eucl_used} applies and yields
\[
d_{\truemanifold}\bigl(\proj(X_0),Y\bigr)\le 2\|\proj(X_0)-Y\|\le \thr.
\]
Therefore,
\[
d_{\truemanifold}\bigl(\proj(X_0),y\bigr)
\le
d_{\truemanifold}\bigl(\proj(X_0),Y\bigr)+d_{\truemanifold}(Y,y)
\le
\thr + (\widetilde\delta-\thr)
=
\widetilde\delta,
\]
i.e.\ $\proj(X_0)\in B^{\scriptscriptstyle\truemanifold}_{\scriptscriptstyle \widetilde\delta}(y)$. Together with $X_0\in \Tub_{\scriptscriptstyle \rho}(\truemanifold)$,
this shows $X_0\in U_{\widetilde\delta}(y)$. Hence $\mathsf{E}\subseteq \{X_0\in U_{\widetilde\delta}(y)\}$.

\paragraph{Step 3: lower bound $\PP(\mathsf{E})$.}
Since $\mathsf{E}\subseteq\{X_0\in U_{\widetilde\delta}(y)\}$,
\[
\nu\bigl(U_{\widetilde\delta}(y)\bigr)=\PP(X_0\in U_{\widetilde\delta}(y))\ge \PP(\mathsf{E}).
\]
By the conditional independence from Step~1,
\[
\PP(\mathsf{E})
=
\PP(Y\in B_-)\cdot \PP(\|G_k\|\le a)\cdot \PP(\|G_{D-k}\|\le \rho/2).
\]
Finally, using $p\ge \pmin$ on $\truemanifold$,
\[
\PP(Y\in B_-)=\mupop(B_-)\ge \pmin\,\Vol_{\truemanifold}(B_-)=\pmin\,\Vol_{\truemanifold}\!\bigl(B^{\scriptscriptstyle\truemanifold}_{\scriptscriptstyle \widetilde\delta-\thr}(y)\bigr),
\]
which gives \eqref{eq:nu_mass_lower}.
\end{proof}
\medskip
\noindent Now, back to the proof of \cref{thm:mucheck_coverage_apx}:

\noindent\textbf{Convert $\nu(E_y)$ into a coverage inequality.}
By \eqref{eq:mucheck_geq_nuEy} and $E_y=U_{\widetilde\delta}(y)$,
\[
\mucheck\bigl(B^{\scriptscriptstyle\truemanifold}_{\scriptscriptstyle \delta,\scriptscriptstyle \alpha}(y)\bigr)\ge \nu\bigl(U_{\widetilde\delta}(y)\bigr).
\]
Apply \Cref{prop:mass_lower_full} with $\thr=\widetilde\delta/2$ to obtain
\begin{equation}\label{eq:mucheck_lower_after_prop}
\mucheck\bigl(B^{\scriptscriptstyle\truemanifold}_{\scriptscriptstyle \delta,\scriptscriptstyle \alpha}(y)\bigr)
\ \ge\
\pmin\,\Vol_{\truemanifold}\!\bigl(B^{\scriptscriptstyle\truemanifold}_{\scriptscriptstyle \widetilde\delta/2}(y)\bigr)\,
\PP(\|G_k\|\le a)\,
\PP(\|G_{D-k}\|\le \rho/2),
\end{equation}
with $a=\min\{\rho/2,\ (\widetilde\delta/2)/(2L_{\scriptscriptstyle \rho})\}$.

On the other hand, since $p\le \pmax$,
\begin{equation}\label{eq:mupop_upper_ball}
\mupop\bigl(B^{\scriptscriptstyle\truemanifold}_{\scriptscriptstyle \delta}(y)\bigr)
\le
\pmax\,\Vol_{\truemanifold}\!\bigl(B^{\scriptscriptstyle\truemanifold}_{\scriptscriptstyle \delta}(y)\bigr).
\end{equation}
Because $\delta\le \inj(\truemanifold)/2$ and $\truemanifold$ is compact, small geodesic balls have uniformly comparable volumes:
there exist $0<c_{\vol}\le C_{\vol}<\infty$ depending only on $\truemanifold$ such that for all $y\in\truemanifold$ and
all $0<s\le \delta$,
\begin{equation}\label{eq:vol_small_ball_bounds}
c_{\vol}\,s^k
\ \le\
\Vol_{\truemanifold}\!\bigl(B^{\scriptscriptstyle\truemanifold}_{\scriptscriptstyle s}(y)\bigr)
\ \le\
C_{\vol}\,s^k.
\end{equation}
Applying \eqref{eq:vol_small_ball_bounds} with $s=\widetilde\delta/2$ and $s=\delta=3\widetilde\delta$ yields
\begin{equation}\label{eq:vol_ratio_bounds_final}
\frac{\Vol_{\truemanifold}(B^{\scriptscriptstyle\truemanifold}_{\scriptscriptstyle \widetilde\delta/2}(y))}
{\Vol_{\truemanifold}(B^{\scriptscriptstyle\truemanifold}_{\scriptscriptstyle \delta}(y))}
\ \ge\
\frac{c_{\vol}(\widetilde\delta/2)^k}{C_{\vol}\delta^k}
=
\frac{c_{\vol}}{C_{\vol}}\cdot \frac{1}{2^k\,3^k}.
\end{equation}
Combining \eqref{eq:mucheck_lower_after_prop}, \eqref{eq:mupop_upper_ball}, and \eqref{eq:vol_ratio_bounds_final} gives
\[
\mucheck\bigl(B^{\scriptscriptstyle\truemanifold}_{\scriptscriptstyle \delta,\scriptscriptstyle \alpha}(y)\bigr)
\ \ge\
c_{\min}\,\mupop\!\bigl(B^{\scriptscriptstyle\truemanifold}_{\scriptscriptstyle \delta}(y)\bigr)
=
c_{\min}\,\mupop\bigl(B^{\scriptscriptstyle\truemanifold}_{\scriptscriptstyle \delta,\scriptscriptstyle \alpha}(y)\bigr),
\]
where the last equality uses that $\mupop$ is supported on $\truemanifold$, and we may take
\begin{equation}\label{eq:cmin_def_final}
c_{\min}
:=
\frac{\pmin}{\pmax}\cdot
\frac{c_{\vol}}{C_{\vol}}\cdot \frac{1}{2^k\,3^k}\cdot
\PP(\|G_k\|\le a)\cdot \PP(\|G_{D-k}\|\le \rho/2).
\end{equation}
This proves item~2 of \cref{def:cover}, uniformly in $y\in\truemanifold$, and completes the proof.
\end{proof}

\begin{remark}It follows directly from the proof that one may take $a=\rho=\cO(\reachmin)$ in \eqref{eq:cmin_def_final}.
\end{remark}

\begin{remark}[Explicit Gaussian factors]\label{rem:explicit_gaussian_factors}
The Gaussian terms in \eqref{eq:nu_mass_lower} admit closed-form expressions in terms of
(incomplete) gamma functions, and can be bounded explicitly using elementary volume arguments. For
$G_m\sim \cN(0,t_0 I_m)$ and any $t>0$,
\begin{equation}\label{eq:gauss_ball_explicit_lower}
\PP(\|G_m\|\le t)
\ \ge\
(2\pi t_0)^{-m/2}\exp\Bigl(-\frac{t^2}{2t_0}\Bigr)\,\omega_m\,t^m,
\qquad
\omega_m:=\frac{\pi^{m/2}}{\Gamma(\frac{m}{2}+1)}.
\end{equation}
Indeed, \eqref{eq:gauss_ball_explicit_lower} follows by lower bounding the Gaussian density on the Euclidean ball
$\{z:\|z\|\le t\}$ by its minimum value and multiplying by the ball volume. Applying \eqref{eq:gauss_ball_explicit_lower}
with $(m,t)=(k,a)$ and $(m,t)=(D-k,\rho/2)$ yields a fully explicit lower bound for the product
$\PP(\|G_k\|\le a)\,\PP(\|G_{D-k}\|\le \rho/2)$ appearing in the coverage constants.
\end{remark}